\def\eqref#1{equation~\ref{#1}}
\def\1{\bm{1}}
\DeclareMathAlphabet{\mathsfit}{\encodingdefault}{\sfdefault}{m}{sl}
\SetMathAlphabet{\mathsfit}{bold}{\encodingdefault}{\sfdefault}{bx}{n}
\newtheorem{proposition}{Proposition}
\newtheorem{definition}{Definition}
\newtheorem{assumption}{Assumption}
\newtheorem{lemma}{Lemma}
\definecolor{mine}{RGB}{205, 232, 248}%
\definecolor{red}{RGB}{255,192,203}%
\definecolor{gray}{RGB}{248,248,255}%
\definecolor{zi}{RGB}{230, 230, 250}
\title{Mind the Gap: Offline Policy Optimization for Imperfect Rewards} 
\author{Jianxiong Li$^{1}$\footnotemark[1]~~, Xiao Hu$^{1}$\footnotemark[1]~~, Haoran Xu$^{2}$, Jingjing Liu$^{1}$, Xianyuan Zhan$^{1,3}$, \\
\textbf{Qing-Shan Jia$^{1}$ \& Ya-Qin Zhang$^{1}$} \\
$^1$ Tsinghua University, Beijing, China\quad\quad $^2$ JD Technology, Beijing, China\\
$^3$ Shanghai Artificial Intelligence Laboratory, Shanghai, China\\
\texttt{\{li-jx21,hu-x21\}@mails.tsinghua.edu.cn,  zhanxianyuan@air.tsinghua.edu.cn}\\
}
\begin{document}

\maketitle
\renewcommand{\thefootnote}{\fnsymbol{footnote}}
\footnotetext[1]{Equal contribution. Correspondence to Xianyuan Zhan, Qing-Shan Jia and Ya-Qin Zhang}
\renewcommand*{\thefootnote}{\arabic{footnote}}
\begin{abstract}
Reward function is essential in reinforcement learning (RL), serving as the guiding signal to incentivize agents to solve given tasks, however, is also notoriously difficult to design. In many cases, only imperfect rewards are available, which inflicts substantial performance loss for RL agents. In this study, we propose a unified offline policy optimization approach, \textit{RGM (Reward Gap Minimization)}, which can smartly handle diverse types of imperfect rewards. RGM is formulated as a bi-level optimization problem: the upper layer optimizes a reward correction term that performs visitation distribution matching w.r.t. some expert data; the lower layer solves a pessimistic RL problem with the corrected rewards. By exploiting the duality of the lower layer, we derive a tractable algorithm that enables sampled-based learning without any online interactions.  Comprehensive experiments demonstrate that RGM achieves superior performance to existing methods under diverse settings of imperfect rewards. Further, RGM can effectively correct wrong or inconsistent rewards against expert preference and retrieve useful information from biased rewards.

\end{abstract}

\section{Introduction}
Reward plays an imperative role in every reinforcement learning (RL) problem. It encodes the desired task behaviors, serving as a guiding signal to incentivize agents to learn and solve a given task. As widely recognized in RL studies, a desirable reward function should not only define the task the agent learns to solve, but also offers the ``bread crumbs'' that allow the agent to efficiently learn to solve the task~\citep{abel2021expressivity, singh2009rewards, sorg2011optimal}.

However, 
due to task complexity and human cognitive biases~\citep{hadfield2017inverse}, accurately describing a complex task using numerical rewards is often difficult or impossible~\citep{abel2021expressivity, li2019formal}. In most practical settings, the rewards are typically ``imperfect" and hard to be fixed through reward tuning when online interactions are costly or dangerous~\citep{zhan2021deepthermal}.
Such imperfect rewards are widespread in real-world applications and can appear in forms
such as \textit{partially correct rewards, sparse rewards, mismatched rewards from other tasks, and completely incorrect rewards} (see Figure~\ref{fig:intro} for an intuitive illustration\footnote{Pictograms from \href{https://olympics.com/en/sports/}{https://olympics.com/en/sports/}}).
These rewards either fail to incentivize agents to learn correct behaviors or cannot provide effective signals to speed up the learning process.
Consequently, it is of great importance and practical value to devise a versatile method that can perform robust offline policy optimization under diverse settings of imperfect rewards.

\begin{figure}[t]
\centering
    \includegraphics[width=\textwidth]{./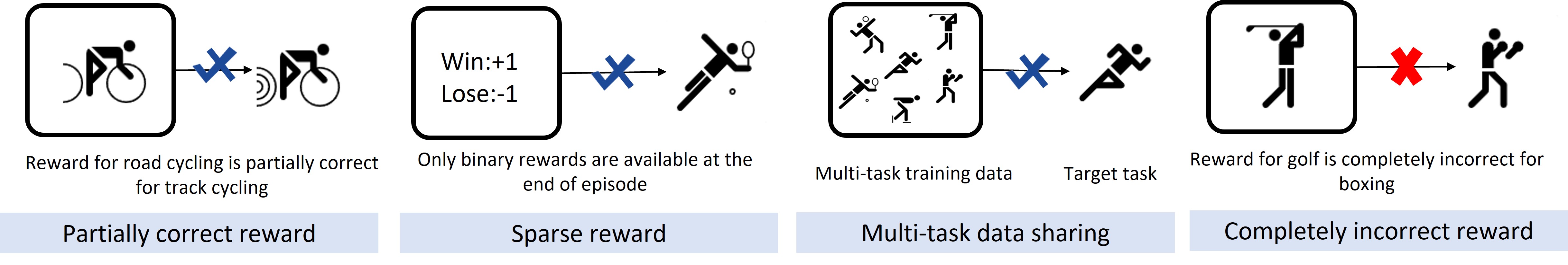}
        \vspace{-15pt}
    \caption{\small Diverse settings of imperfect rewards.
    }
    \vspace{-10pt}
    \label{fig:intro}
\end{figure}

Reward shaping~\citep{ng1999policy} is the most common approach to tackling imperfect rewards, but it requires tremendous human efforts and numerous online evaluations. 
Another possible avenue is imitation learning (IL)~\citep{pomerleau1988alvinn, kostrikov2019imitation} or offline inverse reinforcement learning methods (IRL)~\citep{jarboui2021offline}, by directly imitating or deriving new rewards from expert behaviors. However, these methods heavily depend on the quantity and quality of expert demonstrations and offline datasets, which are often beyond reach in practice. 
Another key challenge is how to precisely measure the discrepancy between the given reward in the data and the true reward of the task. As evaluating the learned policy's behavior under a specific reward function through environment interactions becomes impossible under the offline setting, let alone revising the reward.

In this paper, we investigate the challenge of learning effective offline RL policies under imperfect rewards, when environment interactions are not possible.
We first formally define the relative gap between the given and perfect rewards based on state-action visitation distribution matching (referred to as \textit{reward gap}), and formulate the problem as a bi-level optimization problem. In the upper layer, the imperfect rewards are adjusted by a reward correction term, which is learned by minimizing the reward gap toward expert behaviors. In the lower layer, we solve a pessimistic RL problem to obtain the optimized policy under the corrected rewards. By exploiting Lagrangian duality of the lower-level problem, 
the overall optimization procedure can be tractably solved in a fully-offline manner without any online interactions. 
We call this approach \textit{Reward Gap Minimization} (RGM). Compared to existing methods, RGM can: $1)$ evaluate and minimize the reward gap without any online interactions; $2)$ eliminate the strong dependency on human efforts and numerous expert demonstrations; and $3)$ handle diverse types of reward settings (e.g., perfect, partially correct, sparse, multi-task data sharing, incorrect) in a unified framework for reliable offline policy optimization.

Through extensive experiments on D4RL datasets~\citep{fu2020d4rl}, sparse reward tasks, multi-task data sharing tasks and a discrete-space navigation task, we demonstrate that RGM can achieve superior performance across diverse settings of imperfect rewards. Furthermore, we show that RGM effectively corrects wrong/inconsistent rewards against expert preference and effectively retrieves useful information from biased rewards, making it an ideal tool for practical applications where reward functions are difficult to design.

\section{Related Work on Different Reward Settings}
\label{sec:related_work}
We here briefly summarize relevant methodological approaches that handle different types of rewards.

\noindent\textbf{Perfect rewards}. Directly applying offline RL algorithms is a natural choice when rewards are assumed to be perfect for the given task~\citep{fujimoto2019off, kumar2019stabilizing, kumar2020conservative, xu2021offline, fujimoto2021minimalist, kostrikov2021offline, iql, por, li2022distance, lee2021optidice, bai2021pessimistic, sql}. However, specifying perfect rewards requires deep understanding of the task and domain expertise. Even given the perfect rewards, some offline RL methods still need to shift the rewards to achieve the best performance~\citep{kostrikov2021offline, kumar2020conservative}, which is shown to be equivalent to engineering the initialization of Q-function estimation that encourages conservative exploitation under offline learning~\citep{reward_shift}.
%

\noindent\textbf{Partially correct rewards}. 
Reward shaping is the most common approach to handle partially correct rewards, by modifying the original reward function to incorporate task-specific domain knowledge~\citep{dorigo1994robot, randlov1998learning, ng1999policy, marom2018belief, wu2018low}. However, these approaches follow a trial-and-error paradigm and require tremendous human efforts. Recent approaches such as population-based method~\citep{jaderberg2019human}, optimal reward framework~\citep{chentanez2004intrinsically, sorg2010reward, zheng2018learning} and automatic reward shaping~\citep{hu2020learning, devidze2021explicable, marthi2007automatic} can automatically shape the rewards when online interaction is allowed. However, to the best of the authors' knowledge, no reward shaping or correction mechanism exists for offline policy optimization. Researchers have to discard the given imperfect rewards and resort to other stopgaps like offline IL under offline settings.

\noindent\textbf{Sparse rewards}. Sparse rewards can be seen as a special case of partially correct rewards. The key challenge of offline policy optimization for sparse rewards is how to effectively back-propagate the sparse signals to stitch up suboptimal trajectories~\citep{levine2020offline}.
Recent works~\citep{iql, kumar2020conservative} use reward shaping to densify the sparse rewards for better performance. However, reward shaping requires online evaluation and tuning, which is not applicable in the offline setting. Currently,
few mechanisms are specifically designed for offline RL to handle sparse rewards. 

\noindent\textbf{Imperfect rewards in multi-task data sharing}. Sharing data across different tasks can potentially enhance offline RL performance on a target task by utilizing additional data from other relevant tasks. As the goals of other relevant tasks are different from that of the target task, the rewards designed for other tasks are naturally imperfect for solving the target task. Since directly sharing datasets from other tasks exacerbates the distribution shift in offline RL~\citep{yu2021conservative,bai2023uncertaintybased}, prior work such as CDS~\citep{yu2021conservative} shares data relevant to the target task based on learned Q-values, but it requires access to the functional form of the reward for relabling. CDS+UDS~\citep{yu2022leverage} directly set the shared rewards to zero without reward relabeling to reduce the bias in the shared rewards, but it cannot completely remedy the reward bias.

\noindent\textbf{Completely incorrect rewards}. When rewards are believed to be totally wrong or missing, researchers typically adopt offline imitation learning (IL) methods. These methods directly mimic the expert from demonstrations without the presence of a reward signal. Among these approaches, behavior cloning (BC)~\citep{pomerleau1988alvinn, florence2022implicit} is the simplest one, but is vulnerable to covariate shift and compounding errors~\citep{rajaraman2020toward}. Recent works tackle this problem via distribution matching~\citep{jarboui2021offline, kostrikov2019imitation, kim2021demodice, ma2022versatile} or using a discriminator to measure the optimal level of the data and further guide policy learning~\citep{zolna2020offline, xu2022discriminator, zhang2022discriminator}. These approaches all have strong requirements on the size and coverage of the expert datasets, and only try to imitate the expert rather than improve beyond the policies in data via RL based on the underlying reward of the task.

\section{Preliminaries}

\label{pre}

\textbf{Markov decision process under imperfect rewards}. \quad We consider the typical 
Markov Decision Process (MDP) setting~\citep{puterman2014markov}, which is defined by a tuple $\mathcal{M}:=\left( S, A, r, T, \mu_0, \gamma \right)$. $S$ and $A$ represent the state and action space, $r: S \times A \rightarrow \mathbb{R}$ is the perfect reward function, $T: S \times A \rightarrow \Delta(S)$ is the transition dynamics which represents the probability $T\left(s_{t+1} | s_t, a_t\right)$ of the transition from state $s_t$ to state $s_{t+1}$ by executing action $a_t$ at timestep $t$. $\mu_{0} \in \Delta(S)$ is the distribution of the initial state $s_{0}$, and $\gamma \in (0,1)$ is the discount factor.

The perfect reward function $r(s,a)$ encodes the desired behaviors of the task.
But in most cases, we only have access to an imperfect human-designed reward function $\tilde{r}(s,a)$,  which may not align well with the target task. This leads to a biased MDP $\widetilde{\mathcal{M}}:=\left( S, A, \tilde{r}, T, \mu_0, \gamma \right)$ as compared to the original MDP $\mathcal{M}$. To remedy the adverse effects of imperfect reward signals, existing offline policy learning studies~\citep{zolna2020offline,xu2022discriminator,ma2022versatile,kim2021demodice,jarboui2021offline} introduce additional expert demonstrations $\mathcal{D}^E=\left\{\left({s}_0^E, {a}_0^E, {s}_1^E, \cdots\right)^{(i)}\right\}_{i=0}^{N^E}$ to provide extra information on the desired policy behaviors. We follow a similar setup, but only consume very limited expert demonstrations. In our offline policy optimization setting, we are given a pre-collected dataset $\mathcal{D}=\left\{\left(s_0, a_0, \tilde{r}_0, s_1, \cdots \right)^{(i)} \right\}_{i=0}^{N}$ that is generated by an unknown behavior policy $\pi^{\beta}$ and annotated with imperfect rewards $\tilde{r}$. We aim to learn an effective policy ${\pi}_r: S \rightarrow \Delta(A)$ to capture the optimized agent behavior in $\mathcal{M}$ rather than $\widetilde{\mathcal{M}}$ using both $\mathcal{D}$ and a very small expert dataset $\mathcal{D}^E$. 


\textbf{Reinforcement learning}.
Given a MDP and the reward function $r(s,a)$, the goal of RL is to find an optimized policy $\pi_r^*$
to maximize the expected cumulative discount reward:  ${\pi}_{r}^* = \underset{{\pi_r}}{\arg \max }  (1-\gamma) \mathbb{E} [\sum_{t=0}^{\infty} \gamma^t {r}\left(s_t, a_t\right) | s_0 \sim \mu_0(\cdot),a_t \sim {\pi_r}\left(\cdot | s_t\right), s_{t+1} \sim T\left(\cdot|s_t, a_t\right)]$.
This optimization objective can be equivalently written into the following succinct form~\citep{puterman2014markov,nachum2019algaedice} by defining the normalized discounted state-action visitation distribution $d^{\pi_r}(s,a)$ (in the rest of the paper, we omit ``normalized discounted state-action'' for brevity unless otherwise specified):
\begin{align}
\pi_r^* &= \underset{\pi_r}{\arg \max } \,\, \mathbb{E}_{(s,a) \sim d^{\pi_r}}[{r}(s,a)] \label{dual form}\\
d^{{\pi_r}}(s, a)=(1-\gamma) \sum_{t=0}^{\infty} \gamma^t \text{Pr}[s_t&=s, a_t=a | 
\notag s_0 \sim \mu_0(\cdot), a_{t} \sim {\pi_r}\left(\cdot | s_{t}\right), s_{t+1} \sim T\left(\cdot | s_{t}, a_{t}\right)]\notag
\end{align}
This RL objective is not directly applicable to offline setting, as it is no longer possible to sample from $d^{\pi_r}$ via online interactions, and serious distributional shift~\citep{kumar2019stabilizing} may occur without proper data-related regularization when learning from offline datasets.
To tackle these problems, several recent works \citep{nachum2019algaedice,nachum2020reinforcement,lee2021optidice} incorporate a regularizer into Eq. (\ref{dual form}) to formulate a pessimistic RL framework that is solvable in the offline setting:
\begin{equation}
\label{regularize}
{{\pi}^*_r} = \underset{{\pi_r}}{\arg \max} \,\, \mathbb{E}_{(s, a) \sim d^{{\pi_r}}}[{{r}(s, a)}] - \alpha \text{D}\left(d^{{\pi_r}} \| d^{\mathcal{D}}\right)
\end{equation}
where $d^\mathcal{D}$ is the visitation distribution of dataset $\mathcal{D}$, $\text{D}\left(\cdot \| \cdot\right)$ represents some statistical discrepancy measures and $\alpha > 0$ controls the strength of the regularization.

\section{Reward Gap Minimization}
\label{RGM}

To handle diverse imperfect reward settings, three challenges have to be tackled:
\begin{enumerate}[leftmargin=*,topsep=0pt,label=\arabic*)]
    \item \textit{Measure the gap between the given rewards and the underlying unknown perfect rewards};
    \item \textit{Unify different reward settings and bridge the reward gap};
    \item \textit{Perform offline policy optimization using an integrated framework.}
\end{enumerate}


Our solution to these challenges is Reward Gap Minimization (RGM). We formally define the reward gap in the perspective of visitation distribution matching and introduce a correction term to correct the problematic rewards. Then, we model RGM as a bi-level optimization problem, with the upper layer minimizing the reward gap and the lower layer solving a pessimistic RL problem. To derive a tractable algorithm, we leverage Lagrangian duality to eliminate the requirement for online samples. 

\subsection{Definition of reward gap}
\label{Reward Gap}

As observed in recent literature, some tasks cannot be captured by a numerical Markovian reward function~\citep{abel2021expressivity}. Hence, learning an explicit proxy of the perfect reward function and comparing it to the given rewards is unlikely the best option to characterize the reward gap. 
In this study, we define the reward gap based on the outcome of the learned agent behavior, i.e., from the perspective of visitation distribution matching. 

\begin{definition}
\label{def:reward gap}
(Reward gap) \ \ Given an arbitrary reward function $\hat{r}(s,a)$ and the visitation distribution $d^*$ of the optimal policy induced from the perfect rewards $r$, the reward gap between $\hat{r}$ and $r$ is:
\begin{equation}
\label{definition 1}
\begin{aligned}
& \text{D}_f\left(d^{\pi^*_{\hat{r}}} \| d^*\right) \\
\end{aligned}
\end{equation}
where $\text{D}_f\left(p \| q\right)=\mathbb{E}_{z \sim q}\left[f\left(\frac{p(z)}{q(z)}\right)\right]$ is the $f$-divergence between distributions $p$ and $q$, and $d^{\pi^*_{\hat{r}}}$ represents the visitation distribution induced by $\pi^*_{\hat{r}}$, which is derived using Eq. (\ref{regularize}) with $\hat{r}$.
\end{definition}

Note that $d^*$ is unobtainable since the perfect reward function is unknown. We can alternatively use the visitation distribution $d^E$ induced by unknown $\pi^E$ in expert demonstrations $\mathcal{D}^E$ to approximate $d^*$. Next, we discuss how to adjust $\hat{r}$ to minimize the reward gap.

\subsection{Bi-level optimization}

\label{Bi-level Otimization}

\textbf{Reward correction}. \quad In our study, we consider $\hat{r}(s,a):=\tilde{r}(s,a)+\Delta r(s,a,\tilde{r})$, where $\Delta r(s,a,\tilde{r})$ is a learnable reward correction term that is correlated with the given imperfect rewards $\tilde{r}$ in $\mathcal{D}$. The introduction of $\Delta r(s,a,\tilde{r})$ enables us to exploit useful information within the partially correct rewards, while also correcting the wrong or inconsistent reward signals. We can further use it to construct a bi-level optimization formulation for RGM, where the upper-level problem optimizes the reward correction term to minimize the $f$-divergence between $d^{\pi^*_{\hat{r}}}$ and $d^E$, and the lower-level problem solves $\pi^*_{\hat{r}}$ as the optimal policy of a pessimistic RL problem with the corrected rewards:
\begin{align}
\small
\Delta r^* &= \underset{{\Delta r}}{\arg \min} \, \,  \text{D}_f\left(d^{\pi^*_{\hat{r}}} \| d^E\right) \label{upper} \\
\text { s.t.}\quad  \pi^*_{\hat{r}} &= \underset{\pi_{\hat{r}}}{\arg \max} \mathbb{E}_{(s, a) \sim d^{\pi_{\hat{r}}}}[\hat{r}(s,a)] - \alpha \text{D}_f\left(d^{{\pi_{\hat{r}}}} \| d^{\mathcal{D}}\right) \label{lower}
\end{align}
The above bi-level optimization formulation poses several technical difficulties, stemming from the complexity of deriving $d^{\pi^*_{\hat{r}}}$ from $\pi^*_{\hat{r}}$, as well as the requirement of online samples from $d^{\pi^*_{\hat{r}}}$, which is impossible under the offline setting. In the following, we present reformulations for both lower and upper-level problems, which leads to a tractable form and an easy-to-implement algorithm.
 
\textbf{Reformulation of the lower-level problem}. \quad
We first reformulate the lower-level problem by exploiting duality and the Bellman flow constraint~\citep{puterman2014markov}.

\begin{definition}
(Bellman flow constraint) \ \
Let $\mathcal{T}_{\star} d(s)=\sum_{\bar{s}, \bar{a}} T(s | \bar{s}, \bar{a}) d(\bar{s}, \bar{a})$ denote the transpose (or adjoint) transition operator, the Bellman flow constraint for the visitation distribution $d(s,a)$ is:
\begin{equation}
    \sum_a d(s, a)=(1-\gamma) \mu_0(s)+\gamma \mathcal{T}_{\star} d(s), \forall s \in \mathcal{S}
\end{equation}
\end{definition}
If $d(s,a) \geq 0$ satisfies the Bellman flow constraint, then $d(s,a)$ is feasible and there is a one-to-one correspondence between $d$ and the related policy $\pi$: i.e., $d$ is the only visitation distribution for policy $\pi(a|s)=\frac{d(s, a)}{\sum_{\bar{a}} d(s, \bar{a})}$, while $\pi$ is the only policy whose visitation distribution is $d$ (for detailed proof see \cite{puterman2014markov}). 
Then, the lower level problem Eq. (\ref{lower}) can be re-written to a constraint maximization problem w.r.t. $d$ in place of $\pi_{\hat{r}}$:
\begin{equation}
\small
\begin{aligned}
\label{convert}
d^{\pi^*_{\hat{r}}} = \,\, \underset{d \geq 0}{\arg \max}  \,\, \mathbb{E}_{(s, a) \sim d}[\hat{r}(s,a)] - \alpha \text{D}_f\left(d \| d^{\mathcal{D}}\right) 
\text { s.t.} \sum_a d(s, a) =(1-\gamma) \mu_0(s)+\gamma \mathcal{T}_{\star} d(s), \forall s \in S 
\end{aligned}
\end{equation}
The Lagrange dual problem of Eq. (\ref{convert}) is as follow:  
\begin{equation}
\small
\label{dual}
\begin{aligned}
\min_{V(s)}\max_{d \geq 0}  \mathbb{E}_{(s, a) \sim d}[\hat{r}(s,a)] - \alpha \text{D}_f\left(d \| d^{\mathcal{D}}\right) + \sum_s V(s) \left[(1-\gamma) \mu_0(s)+\gamma \mathcal{T}_{\star} d(s) - \sum_a d(s, a) \right]
\end{aligned}
\end{equation}
where $V(s)$ are Lagrange multipliers. Note that the primal problem Eq. (\ref{convert}) is convex w.r.t. $d$, and under a mild assumption (see Assumption \ref{assumption:1} in Appendix \ref{proof 2}), the Slater’s condition~\citep{boyd2004convex} holds, which means by strong duality, we can solve the original primal problem by solving Eq. (\ref{dual}). After rearranging the terms, Eq. (\ref{dual}) can be equivalently written as the following form (see Lemma \ref{lemma:rearrange} in Appendix \ref{proof 2} for detailed deduction):
\begin{equation}
\label{minimax}
\begin{split}
\min_{V(s)}\max_{d \geq 0} (1-\gamma)\mathbb{E}_{s\sim\mu_0}[V(s)] + \mathbb{E}_{(s,a)\sim d}[\hat{r}(s,a) + \gamma\mathcal{T}V(s,a)-V(s)] - \alpha \text{D}_f\left(d \| d^{\mathcal{D}}\right)
\end{split}
\end{equation}

in which $\mathcal{T}V(s,a)={\sum_{s'}} T(s' | {s}, {a}) V(s') $ denotes the transition operator. Next, by exploiting the Fenchel conjugate, we can further transform the minimax problem Eq. (\ref{minimax}) into a tractable single-level unconstrained minimization problem (see Proposition \ref{proposition:1} in Appendix \ref{proof 2} for detailed derivation), which eliminates the requirement of online samples:
\begin{equation}
\begin{aligned}
\label{min}
\min_{V(s)} (1-\gamma)\mathbb{E}_{s\sim\mu_0}[V(s)] + \alpha \, \mathbb{E}_{(s,a)\sim d^{\mathcal{D}}}\left[f_{\star}(\frac{\hat{r}(s,a) + \gamma\mathcal{T}V(s,a)-V(s)}{\alpha})\right]  
\end{aligned}
\end{equation}

where $f_\star$ is the Fenchel conjugate of $f$. In the above formulation, the Lagrange multipliers $V(s)$ can be equivalently perceived as some sort of state-value function, which can be learned and optimized via a parameterized neural network, similar to the treatment used in the DICE-family of RL algorithms~\citep{nachum2019dualdice,nachum2020reinforcement}.

\textbf{Reformulation of the upper-level problem}. \quad Using the property of Fenchel conjugate, the optimal $d^*$ and $V^*$ from the lower level problem satisfy the following nice relationship (see Proposition~\ref{proposition:2} in Appendix \ref{proof 3} for details):
\begin{equation}
\label{optimal equation}
\frac{d^{\pi^*_{\hat{r}}}(s,a)}{d^\mathcal{D}(s,a)}=f_{\star}^{\prime}\left(\frac{\hat{r}(s,a)+\gamma \mathcal{T} V^*(s, a)-V^*(s)}{\alpha}\right)
\end{equation}
Plugging the above equation into Eq. (\ref{lower}), we can obtain a new objective for the upper-level problem:
\begin{equation}
\label{tractable form}
\small
\Delta r^* = \underset{\Delta r}{\arg \min} \, \, \text{D}_f\left(f_{\star}^{\prime}\left(\frac{\hat{r}+\gamma \mathcal{T} V^*-V^*}{\alpha}\right)d^{\mathcal{D}} \| d^E\right)
\end{equation}
For simplicity, we denote $f_{\star}^{\prime}\left(\frac{\hat{r}+\gamma \mathcal{T} V^*-V^*}{\alpha} \right)$ as $g$. 
By expanding the $f$-divergence, we have:
\begin{equation}
\small
\begin{aligned}
\text{D}_{f}\left(d^{\mathcal{D}}g\|d^E\right) = \mathbb{E}_{(s,a) \sim d^E}\left[f\left(\frac{d^\mathcal{D}(s,a)g(s,a)}{d^E(s,a)}\right)\right] = \mathbb{E}_{(s,a) \sim d^{\mathcal{D}}}\left[\frac{d^E(s,a)}{d^{\mathcal{D}}(s,a)}f\left(\frac{d^{\mathcal{D}}(s,a)}{d^E(s,a)}g(s,a)\right)\right]
\end{aligned}
\end{equation}
The above objective involves computing the distribution ratio $w(s,a)\triangleq d^E(s,a)/d^{\mathcal{D}}(s,a)$.
In the tabular case, we can empirically estimate $w(s,a)= \frac{\sum_{(\bar{s},\bar{a}) \in \mathcal{D}^E}\textbf{1}(\bar{s}=s,\bar{a}=a)/{N^E}}{\sum_{(\bar{s},\bar{a})\in \mathcal{D}}\textbf{1}(\bar{s}=s,\bar{a}=a)/N}$.
But in the continuous state-action settings, estimating the distribution ratio $w$ using only samples from $d^{\mathcal{D}}$ and $d^E$ becomes a challenge.
Inspired by previous studies~\citep{goodfellow2020generative,ma2022versatile}, we instead train a discriminator $h: S \times A \rightarrow (0,1)$ to infer if $(s,a)$ samples are from $\mathcal{D}^E$ or not:
\begin{equation}
\begin{aligned}
\label{discriminator}
h^* = \arg \min_h &\mathbb{E}_{(s,a) \sim d^{\mathcal{D}}}\left[\log(h(s,a))\right]+\mathbb{E}_{(s,a) \sim d^{E}}\left[\log(1-h(s,a))\right]
\end{aligned}
\end{equation}
where the optimal discriminator is $h^*(s,a)=\frac{d^{\mathcal{D}}(s,a)}{d^{\mathcal{D}}(s,a)+d^{E}(s,a)}$~\citep{goodfellow2020generative}. We can optimize the above objective to obtain the optimal $h^*$, and further recover $w(s,a)=1/h^*(s,a)-1$.

Finally, combining all the reformulations, the final tractable form of the original bi-level optimization problem Eq. (\ref{upper})-(\ref{lower}) is given as follows:
\begin{equation}
\small
\scalebox{0.95}{$
\begin{aligned}
\label{final_form}
\Delta r^*  &= \arg \min_{\Delta r}\, \mathbb{E}_{(s,a) \sim d^{\mathcal{D}}}\bigg[w(s,a)\cdot f\left(f_{\star}^{\prime}\left(\frac{\hat{r}(s,a)+\gamma \mathcal{T} V^*(s, a)-V^*(s)}{\alpha}\right)/w(s,a)\right)\bigg] \\
\text { s.t. }\; V^*(s) &=  \arg \min_{V(s)}\, (1-\gamma)\mathbb{E}_{s\sim\mu_0}[V(s)] 
 + \alpha \, \mathbb{E}_{(s,a)\sim d^{\mathcal{D}}}\left[f_{\star}\left(\frac{\hat{r}(s,a) + \gamma\mathcal{T}V(s,a)-V(s)}{\alpha}\right)\right]
\end{aligned}
$}
\end{equation}

\textbf{Policy extraction}. \quad With the learned reward correction term $\Delta r(s,a,\tilde{r})$, we can in principle use existing offline RL algorithms to learn the policy with the corrected rewards. However, this implicates additional policy evaluation and policy improvement steps. A more elegant way is to extract the policy through weighted BC as follows, which is substantially more robust and less expensive:
\begin{equation}
\begin{aligned}
\pi^* &= \arg\min _\pi-\mathbb{E}_{(s, a) \sim d^{\pi^*_{\hat{r}}}}[\log \pi(a | s)] =\arg\min_\pi -\mathbb{E}_{(s, a) \sim d^\mathcal{D}}\left[\frac{d^{\pi^*_{\hat{r}}}(s,a)}{d^\mathcal{D}(s,a)} \log \pi(a | s)\right]
\label{equ:policy_ext}
\end{aligned}
\end{equation}
where $\frac{d^{\pi^*_{\hat{r}}}(s,a)}{d^\mathcal{D}(s,a)}$ can be calculated from Eq. (\ref{optimal equation}). 

\subsection{Practical implementation}

In our implementation, we use stochastic first-order two-timescale optimization technique~\citep{borkar1997stochastic}, which has been successfully applied in several RL algorithms~\citep{hong2020two,atac2022icml}, to solve bi-level optimization problems. Specifically, we make the gradient update step size of the upper layer much smaller than the one of the lower layer (see Figure~\ref{fig:frame_work} for RGM framework. Refer to Appendix \ref{app:implementation} for additional implementation details of RGM).

\begin{figure*}[t]
    \centering
    \includegraphics[width=\textwidth]{./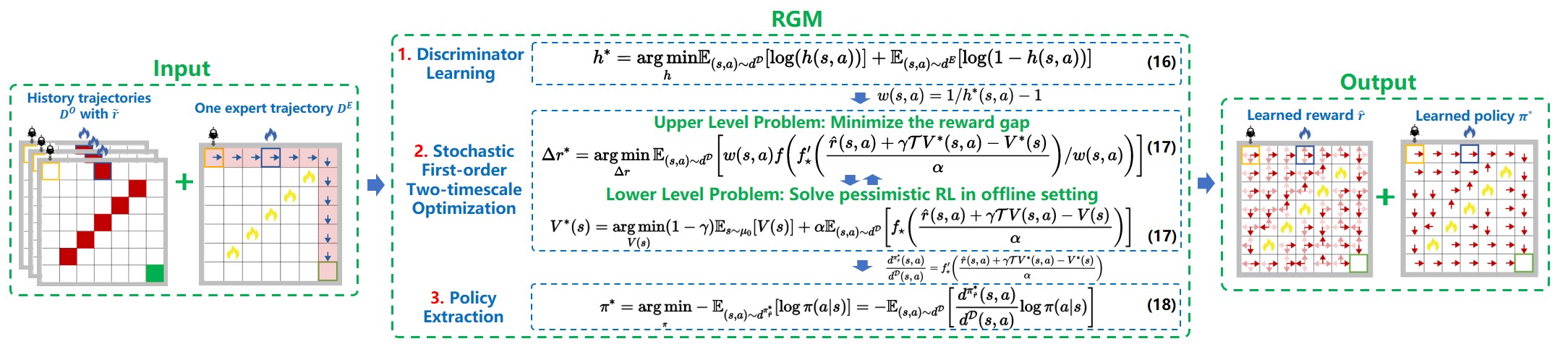}
    \caption{Illustration of the reformulated bi-level optimization problem.}
    \label{fig:frame_work}
    \vspace{-10pt}
\end{figure*}

\section{Experiments}
\label{experiments}

\begin{table*}[!ht]
\centering
\setlength\tabcolsep{3pt}
\tiny
\caption{\small Average normalized scores of RGM compared with offline IL and RL baselines on D4RL datasets. The scores are from the final 10 evaluations with 5 seeds. (T), (P) and (C) mean policy optimization with true rewards, partially correct rewards and completely incorrect rewards, respectively. ``-r",``-m",``-m-r", and ``-m-e" are short for random, medium, medium-replay, and medium-expert, respectively. We obtain the results by running author-provided open-source code, and some scores are reported from TD3+BC and IQL papers. For each dataset, the top 2 scores under partially correct rewards are marked in blue.}
\label{tab:rl}
\begin{adjustbox}{center}
\begin{tabular}{llll|lll|lll|lll|lll}
\toprule
\multirow{2}{*}{D4RL Dataset}  & \multicolumn{3}{c}{Offline IL} & \multicolumn{9}{c}{Offline RL} & 
\multicolumn{3}{c}{\multirow{2}{*}{RGM (T / P / C)}}\\ 
\cmidrule(r){2-4}\cmidrule(r){5-13}
                &BC     &DWBC           &SMODICE         &\multicolumn{3}{c}{TD3+BC (T / P / C)}     &\multicolumn{3}{c}{IQL (T / P / C)}       &\multicolumn{3}{c}{CQL (T / P / C)}  &\\
\midrule
hopper-r        &4.9    &\colorbox{mine}{23.9}  &5.9   &8.5 &13.3 &0.4    &7.9 &1.3 &0.7 &8.3 &1.7 &0.0   &29.7 &\colorbox{mine}{21.2} &25.9\\
halfcheetah-r   &0.2    &2.0    &\colorbox{mine}{2.6}   &11.0 &-17.1 & -11.6  &11.2 &\colorbox{mine}{2.2} & 2.2 &20.0 & -0.4 & -38.4  &0.2 & 0.2 &0.2\\
walker2d-r      &1.7    &\colorbox{mine}{68.3}   &-0.2  &1.6 &0.8 & 2.0    &5.9 &0.3 &-0.3 &8.3 &0.1 &-0.0  &3.9&\colorbox{mine}{7.7} &-0.1\\
hopper-m        &52.9   &16.5   &54.5  &59.3 &13.7 &37.3  &66.2 &34.0 &35.5 &58.5 &\colorbox{mine}{56.4} &11.2  &56.2 &\colorbox{mine}{55.5} &54.6\\
halfcheetah-m   &42.6   &8.2    &\colorbox{mine}{42.9}  &48.3 &35.2 &1.2 &47.4 &42.0 &35.4 &44.0 &\colorbox{mine}{43.5} &4.1  &40.4 &40.7 &40.3\\
walker2d-m      &\colorbox{mine}{75.3}  &18.8  &1.0   &83.7 &30.1 &17.2 &78.3 &68.9 &22.0 &72.5 &71.1 &3.3  &73.3 &\colorbox{mine}{72.3} &38.4\\
hopper-m-r      &18.1   &21.4   &20.4  &60.9 & \colorbox{mine}{23.5} &16.3 &94.7 &0.7 &0.7 &95.0 &11.5 &0.0 &60.3 &\colorbox{mine}{59.1} &44.5\\
halfcheetah-m-r &36.6   &9.2    &\colorbox{mine}{37.1}  &44.6 &31.8 &-1.0 &44.2 &18.1 &1.8 &45.5 &16.5 &-1.1  &37.7 &\colorbox{mine}{37.8} &29.8\\
walker2d-m-r    &26.0   &\colorbox{mine}{56.6}  &41.1 &81.8 &7.8 &-0.7 &73.8 &4.9 &-0.2 &77.2 &17.4 &-0.0  &46.3 &\colorbox{mine}{48.6} &46.1\\
hopper-m-e      &52.5   &16.5   &\colorbox{mine}{75.4}  &98.0 &50.8 &22.3  &91.5 &49.3 &13.6 &105.4 &68.3 &11.6 &106.1 &\colorbox{mine}{87.1} &66.0\\
halfcheetah-m-e &55.2   &0.0    &\colorbox{mine}{88.2}  &90.7 &35.3 &1.9  &86.7 &53.4 &35.8 &91.6 &64.8 &11.1  &85.6 &\colorbox{mine}{81.5} &78.4\\
walker2d-m-e    &\colorbox{mine}{107.5} &54.3  &29.8 &110.1 &44.7 &6.8 &109.6 &108.3 &20.9 &108.8 &75.4 &16.6 &108.3 &\colorbox{mine}{108.8} &108.8\\
\midrule
Mean Score     &\colorbox{mine}{39.5}    &22.6          &33.2          &58.2 &24.6 &7.7          &59.8 &32.0 &14.0         &60.5 &35.5 &-0.3          &54.1 &\colorbox{mine}{52.0} &41.9\\
\bottomrule
\\
\end{tabular}
\end{adjustbox}
\vspace{-10pt}
\end{table*}

In this section, we present empirical evaluations of RGM under diverse imperfect reward settings, including partially correct rewards, completely incorrect rewards, sparse rewards, and multi-task data sharing setting on Robomimic~\citep{mandlekar2021matters}, D4RL-v2~\citep{fu2020d4rl} and a dataset of a grid-world navigation task.
As D4RL MuJoCo tasks are deterministic, we use only one expert trajectory to assist the reward correction and policy learning for these tasks.

\subsection{Comparative results}

\textbf{Comparisons for partially correct rewards.}\quad We train RGM and SOTA offline RL methods (TD3+BC~\citep{fujimoto2021minimalist}, IQL~\citep{iql} and CQL~\citep{kumar2020conservative}) under partially correct\footnote{The signs of $50\%$ D4RL rewards are flipped and hence only half rewards can give correct learning signals.} rewards and report their performances evaluated based on the perfect rewards\footnote{We regard the original D4RL rewards as perfect since we evaluate the policies in terms of these rewards, which can be perceived as solving the tasks encoded in the original D4RL rewards.} in Table~\ref{tab:rl}. 
Table~\ref{tab:rl} shows that RGM surpasses offline RL methods under partially correct rewards\footnote{All sign of the original rewards is flipped.} by a large margin and achieves similar performance to offline RL policies that are trained on perfect rewards. This shows a remarkable advantage of RGM as it can alleviate severe performance degradation when perfect rewards are unattainable and hence removes the restrictive requirements on perfect rewards, which can be particularly useful for a wide range of real-world scenarios.

\textbf{Comparisons for completely incorrect rewards.}\quad When rewards are believed to be completely incorrect, one generally resorts to IL methods.
We compare RGM with BC and SOTA offline IL methods (DWBC~\citep{xu2022discriminator} and SMODICE~\citep{ma2022versatile}) that can learn from mixed-quality data. Only offline IL methods are considered as baselines, because other existing methods that tackle incorrect rewards can only be applied in the online settings (see Section~\ref{sec:related_work} for discussions). 

In our setting, we train offline IL baselines using the original D4RL dataset $\mathcal{D}$, which may not cover enough expert trajectories. However, DWBC and SMODICE both build on the strong assumption that $\mathcal{D}$ already covers a large proportion of expert datasets, which is a rare case in real scenarios. As a result, Table~\ref{tab:rl} shows that these two methods suffer from inferior performance when the restrictive requirements on the quality and state-action space coverage of expert data are not satisfied. RGM, however, performs well when nearly no expert trajectories are contained in the offline dataset, because RGM is optimizing an RL objective that relaxes the requirements on the quality of the dataset.

To further illustrate the superiority of RGM, we compare RGM against DWBC and SMODICE under their settings by adding 100$\sim$200 expert trajectories into $\mathcal{D}$.  Results show that RGM can still outperform SOTA offline IL methods by a large margin (see Table~\ref{tab:il_detail} in Appendix~\ref{sec:addtional_results}).

\textbf{Comparisons for sparse rewards.}\quad
We evaluate RGM against BC and offline RL methods TD3+BC, CQL and IQL on Robomimic~\citep{mandlekar2021matters} Lift and Can tasks. We also evaluate on the well-known extremely difficult AntMaze tasks. We report the average max success rate as the evaluation metric in Table~\ref{tab:sparse_tasks} (See Appendix~\ref{subsec:sparse_reward_apendix} for task descriptions and experimental setups).

\begin{wraptable}{r}{6.9cm}
    \centering
    \scriptsize
    \caption{\small Results on sparse reward tasks.}
    \vspace{-5pt}
    \begin{tabular}{lccccc}
    \toprule
        Dataset &BC &TD3+BC &CQL &IQL &RGM \\
    \midrule
        Antmaze-m-p & 0 & 0 & 0 & 0 & \colorbox{mine}{13.7} \\
        Antmaze-m-d & 0 & 0 & 0 & 0 & \colorbox{mine}{3.3} \\
        Lift-MG & 65.3 & 87.3 &64.0 & 56.0 &\colorbox{mine}{90.3} \\
        Can-MG & 64.7 & 55.3 &64.7 & 50.0 & \colorbox{mine}{66.7}\\
    \bottomrule
    \end{tabular}
    \label{tab:sparse_tasks}
    \vspace{-10pt}
\end{wraptable}

Table~\ref{tab:sparse_tasks} shows that the offline RL baselines fail miserably on AntMaze tasks\footnote{Note that in IQL and CQL papers, they turn the original sparse rewards into dense rewards by applying the reward subtraction trick (minus 1 on every reward, so the reward becomes negative except at the goal).}, as sparse rewards are hard to back-propagate through a very long horizon ($\approx$ 1K steps), while RGM can correctly provide dense signals to guide the ant navigate to the destination. For Robomimic Lift and Can tasks, RGM again outperforms existing methods, while other methods can also achieve reasonable performance. We suspect that these offline datasets may already contain near-optimal trajectories as BC can achieve reasonable performance. Moreover, the planning horizon of both tasks are relatively short ($\approx$ 150 steps), thus is relatively simple for offline RL to back-propagate the sparse signals.



\begin{wrapfigure}{r}{6.9cm}
\centering
    \vspace{-15pt}
    \includegraphics[width=0.5\textwidth]{./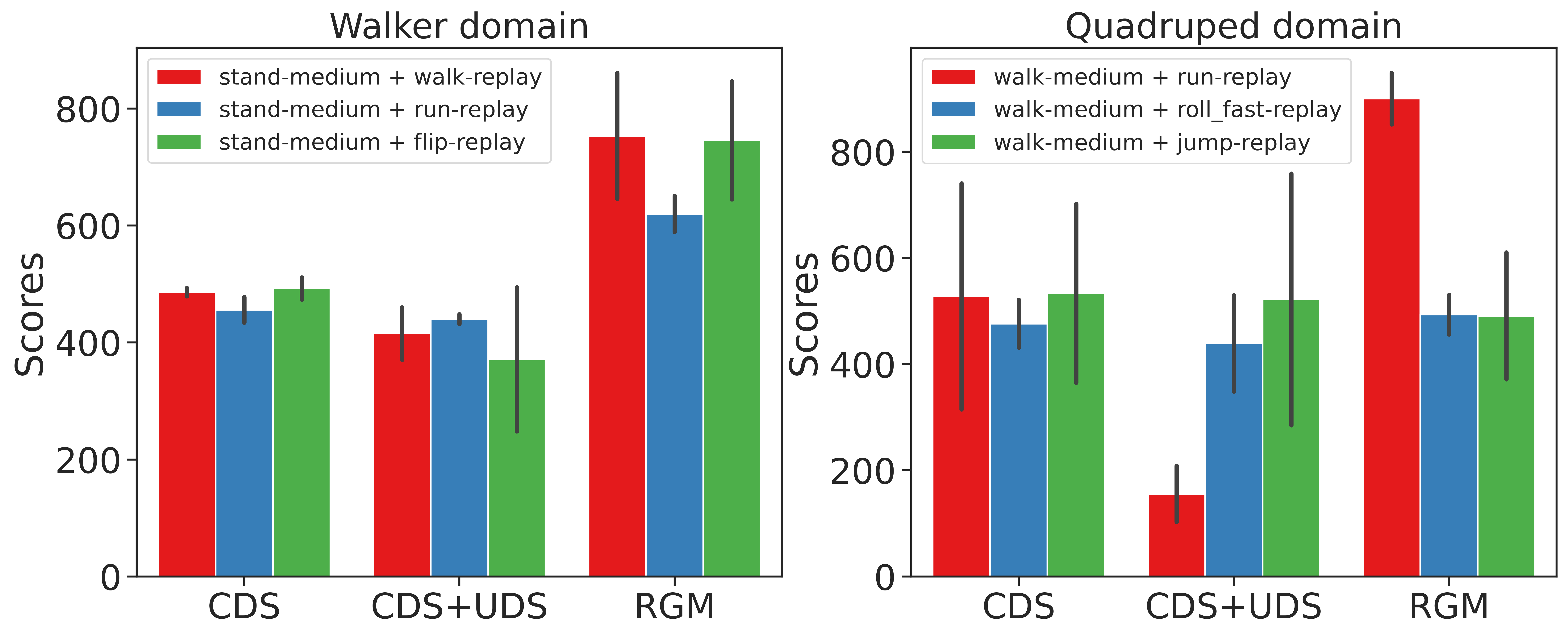}
    \vspace{-10pt}
    \caption{\small {Results on multi-task data sharing tasks.}}
    \label{fig:multi_task}
    \vspace{-10pt}
\end{wrapfigure}

\textbf{Extension to multi-task data sharing.}\quad
We highlight that RGM can also perform well in the offline multi-task data sharing tasks~\citep{yu2021conservative}, which utilize datasets from other relevant tasks to enhance the offline RL performance on a target task.
Prior works either require the functional form of rewards to be known for relabeling~\citep{yu2021conservative} or partially correct the reward biases~\citep{yu2022leverage}. In contrast, RGM systematically corrects the reward biases without reward relabelling, using just one expert trajectory from the target task. To demonstrate the efficacy of RGM compared to SOTA multi-task data sharing algorithms CDS ~\citep{yu2021conservative} and CDS+UDS~\citep{yu2022leverage}, we conduct experiments in multi-task \textbf{Walker} (\textit{Stand, Walk, Run, Flip}) and \textbf{Quadruped} (\textit{Walk, Run, Roll-Fast, Jump}) domains built on DeepMind Control Suite~\citep{tassa2018deepmind}. For each task, we use TD3~\citep{fujimoto2018addressing} to collect three types of datasets (\textit{expert, medium, replay}), and share the \textit{replay} dataset of the relevant task with the \textit{medium} dataset of the target task. For RGM, we only draw one expert trajectory for the discriminator training. We report the experimental results in Figure \ref{fig:multi_task}, which shows that RGM substantially outperforms CDS and CDS+UDS (see Appendix \ref{subsec:multi-task} and \ref{sub:multi-task results} for more experiment details and results).

\subsection{Investigations on reward correction}
\label{subsec:properties_of_r}

\textbf{Benefits of learned rewards}. \quad We investigate the potential benefits of the learned rewards via demonstrative experiments in an 8×8 grid world environment. We observe the learned rewards in RGM enjoy three desirable properties that are unlikely to be provided in other existing methods: 1) \textit{encode long horizon information}; 2) \textit{correct wrong rewards against expert preference}; and 3) \textit{retrieve useful information from existing rewards}, as shown in Figure~\ref{fig:discrete_world}.

\begin{figure*}[h]
    \subfloat[Expert demonstration\label{fig:expert_path}]{\includegraphics[height=0.18\textwidth,width=0.25\textwidth]{./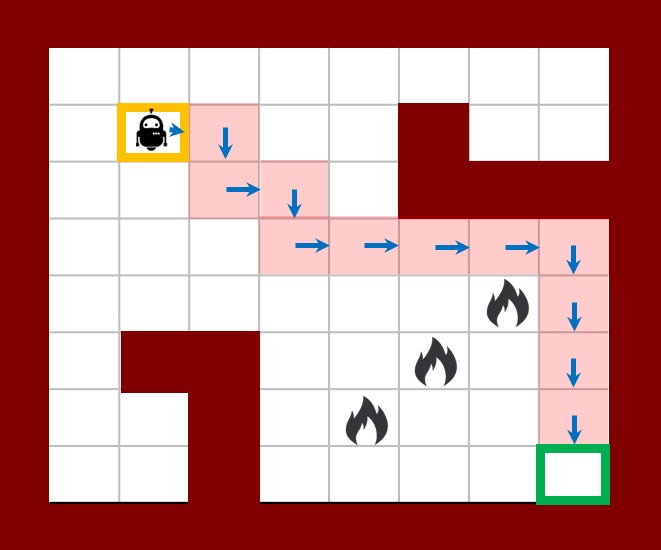}}\hfill
    \subfloat[Results of zero $\tilde{r}$\label{fig:learned_rewards}]{\includegraphics[height=0.18\textwidth,width=0.30\textwidth]{./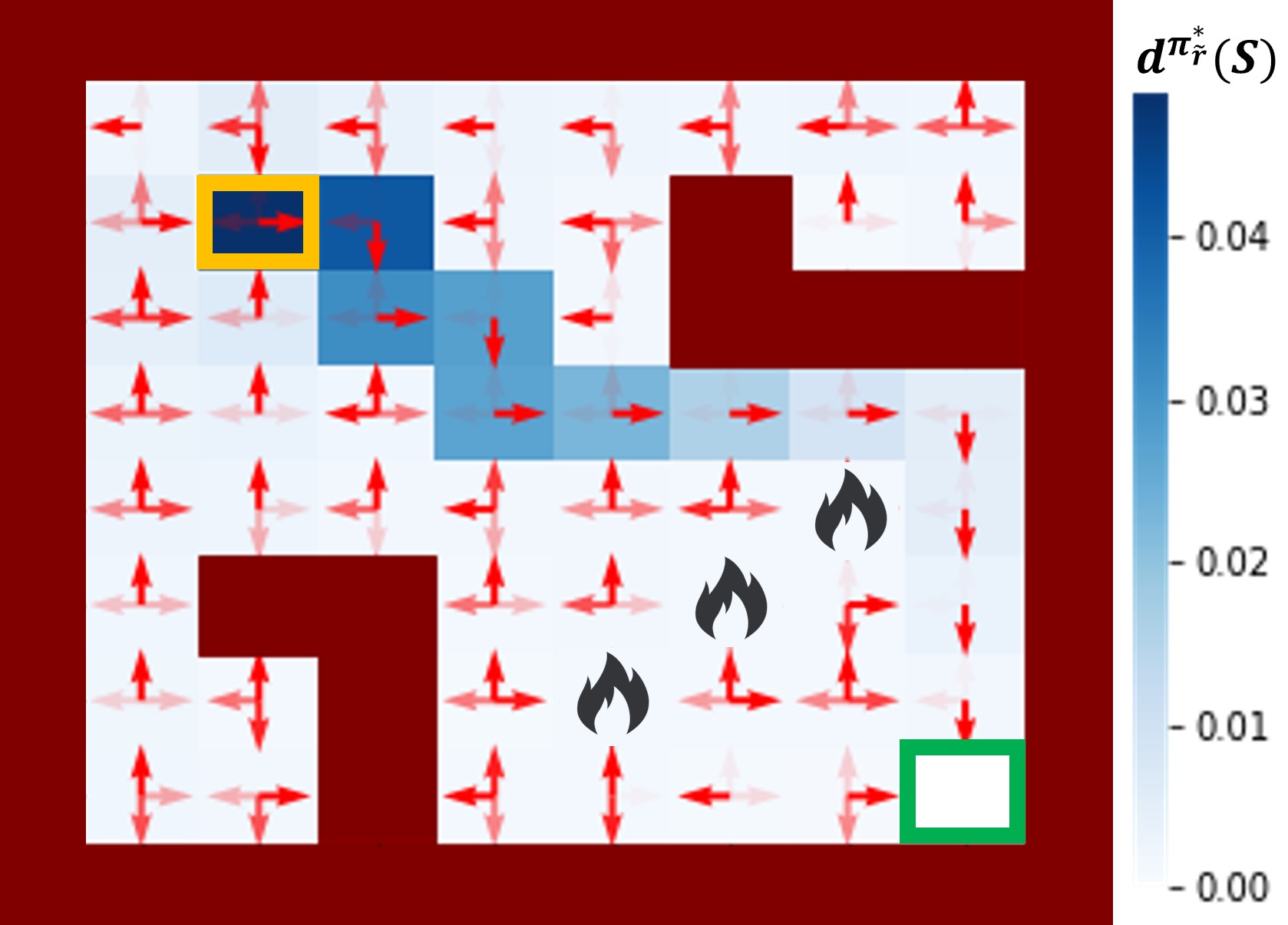}}\hfill
    \subfloat[Results of partially correct $\tilde{r}$\label{fig:learned_rewards_w_fire}]{\includegraphics[height=0.18\textwidth,width=0.30\textwidth]{./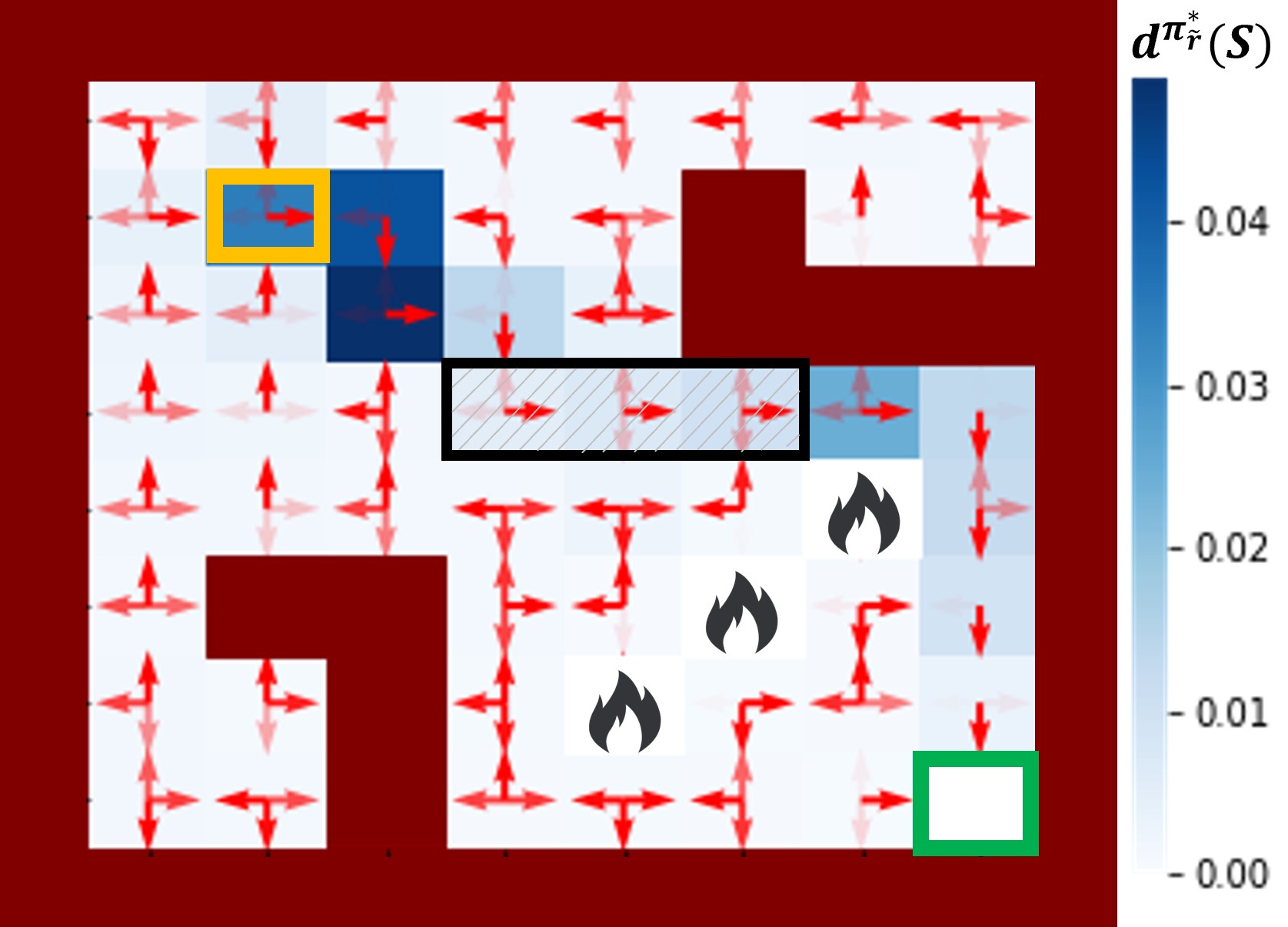}}
    \vspace{-5pt}
    \caption{\small Learned rewards $\hat{r}$ and optimal distribution $d^{\pi^*_{\hat{r}}}$ trained on two types of imperfect rewards $\tilde{r}$. The opacity of each square represents the value of marginal state distribution $d^{\pi^*_{\hat{r}}}(s)$. The opacity of the arrow shows the learned reward $\hat{r}$, where the darkest arrow points to the direction of the highest reward. The expert starts from~\protect{\includegraphics[height=.3cm]{./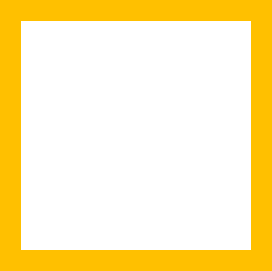}}, follows the path~\protect{\includegraphics[height=.3cm]{./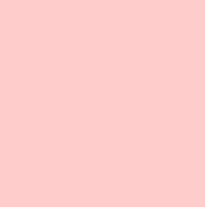}} and arrow~\protect{\includegraphics[height=.3cm]{./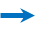}} to reach the goal~\protect{\includegraphics[height=.3cm]{./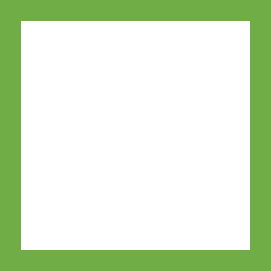}}. $\tilde{r}$ in (b) is +10 at the goal and is zero at other states. $\tilde{r}$ in (c) falsely punishes the agent on ~\protect{\includegraphics[height=.3cm]{./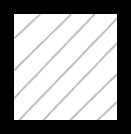}} and correctly punishes the RL agent on fire marks~\protect{\includegraphics[height=.3cm]{./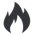}}.}
    \label{fig:discrete_world}
\end{figure*}

Specifically, Figure~\ref{fig:learned_rewards} shows that the learned rewards not only recover correct learning signals on the path of the expert, but also generalize well on regions not covered by expert data. In most locations, the agent can navigate to the destination by simply maximizing the one-step reward,
meaning that the learned rewards encode long-horizon information. Moreover,
Figure~\ref{fig:learned_rewards_w_fire} shows that the learned rewards can avoid the dangerous fire locations by retrieving useful information provided in imperfect $\tilde{r}$, meanwhile correcting the wrong rewards against expert preference. 

\begin{figure*}[t]
\vspace{-10pt}
    \centering
    \small
    \includegraphics[width=0.3\textwidth]{./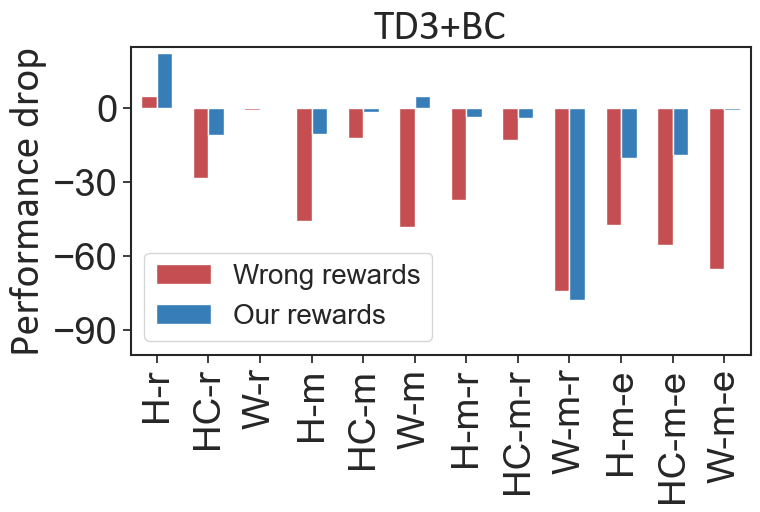}
    \hfill
    \includegraphics[width=0.3\textwidth]{./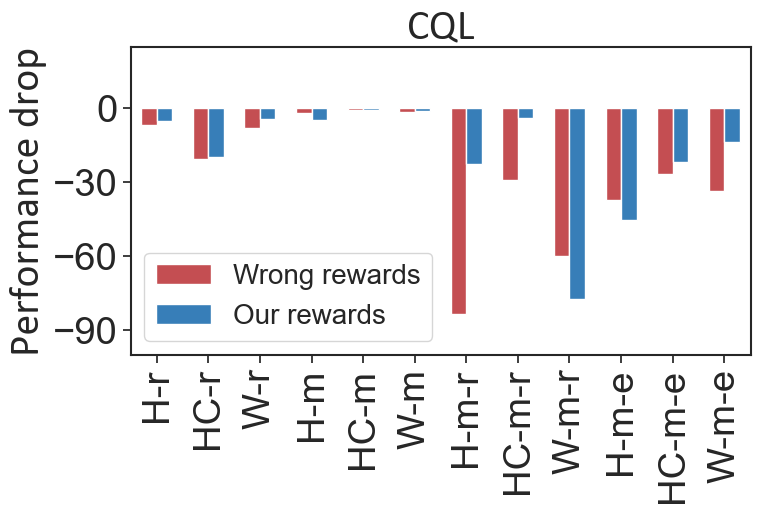}
    \hfill
    \includegraphics[width=0.3\textwidth]{./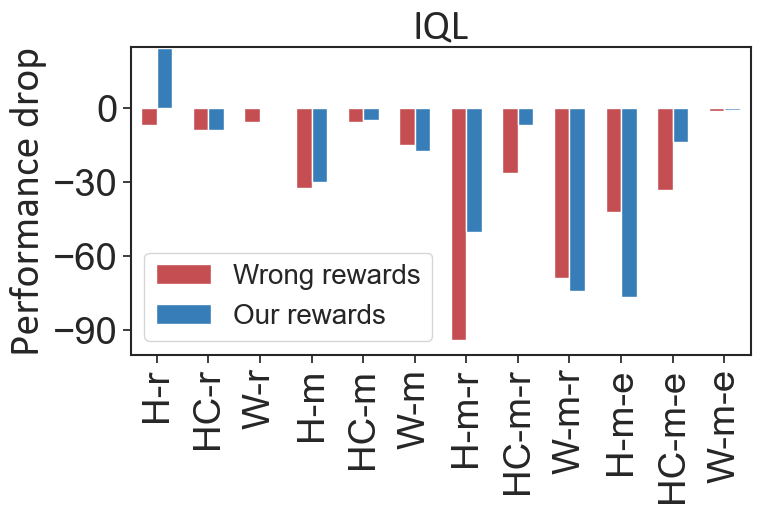}
    \vspace{-5pt}
    \caption{\small Performance drop of normalized returns of SOTA offline RL methods on D4RL datasets under perfect and RGM corrected rewards. The wrong rewards are the partially correct rewards as in Table~\ref{tab:rl}. \textit{H}: Hopper; \textit{HC}: HalfCheetah; \textit{W}: Walker2d. }
    \label{fig:performance_drop}

    \vspace{+5pt}
    
    \subfloat[Learning curve of $\Delta r$\label{subfig:r_curve}]{\includegraphics[height=0.2\textwidth,width=0.3\textwidth]{./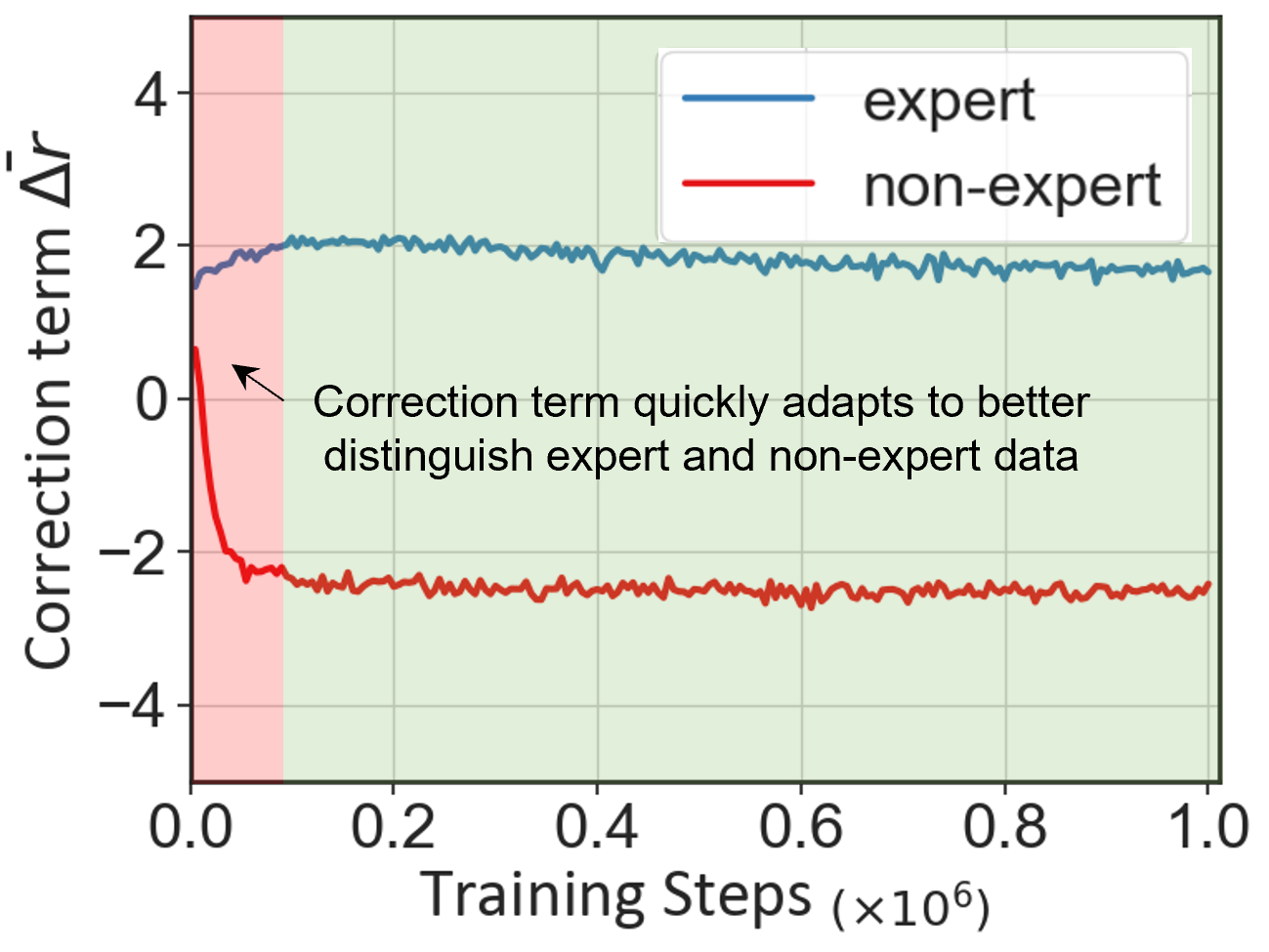}}\hfill
    \subfloat[Effect of $\tilde{r}$ on $\hat{r}$\label{subfig:r_correction}]{\includegraphics[height=0.2\textwidth,width=0.3\textwidth]{./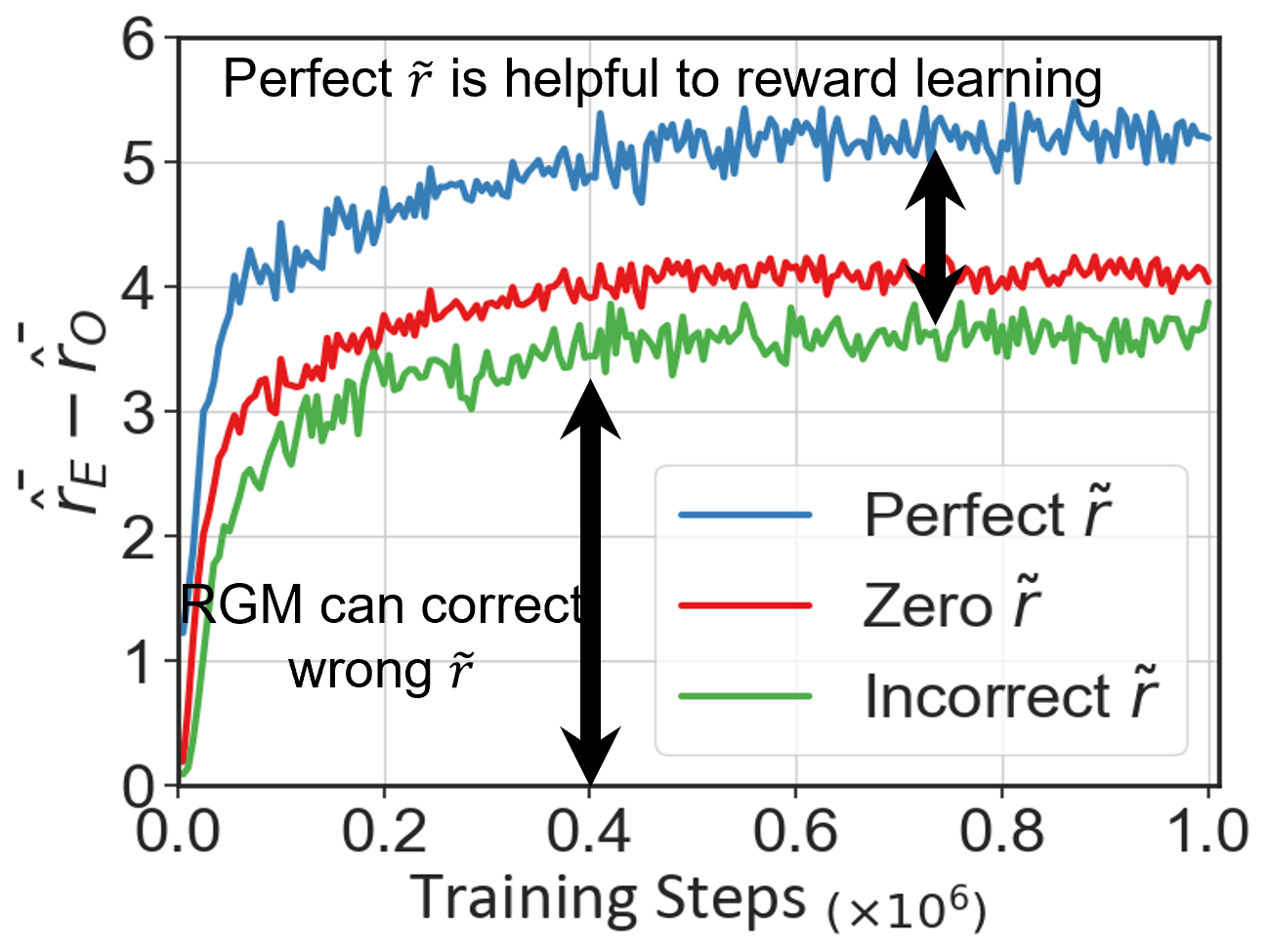}}\hfill
    \subfloat[Effect of $\tilde{r}$ on $\Delta r$\label{subfig:r_correction_bar}]{\includegraphics[height=0.195\textwidth,width=0.3\textwidth]{./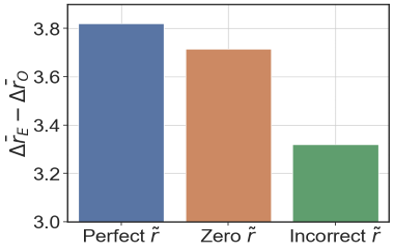}}\hfill
    \vspace{-5pt}
    \caption{\small {Experiments on learned rewards in hopper-m-r task. The superscript ``$\ \bar{} \ $" denotes the mean value of mini-batch samples. The subscript ``E" and ``O" denote the value on expert and non-expert data. In (b)(c), large $\bar{\hat{r}}_E-\bar{ \hat{r}}_O$ and $\Delta\bar{r}_E-\Delta\bar{r}_O$ indicate that expert and non-expert data are clearly distinguishable according to the learned rewards, and small values mean the opposite.}} 
    \label{fig:r_curve}
\end{figure*}

\textbf{Offline RL with corrected rewards}. \quad The learned corrected rewards $\hat{r}$ obtained by RGM can also be used in other offline RL approaches. To be mentioned, the corrected rewards are optimized based on the specific $\alpha$ in Eq.~(\ref{lower}), hence may not be optimal to other offline RL methods. Nevertheless, Figure~\ref{fig:performance_drop} shows that the corrected rewards can largely remedy the negative effects of the partially correct rewards and even surpass perfect rewards in some datasets.

\textbf{Ablations on learned rewards}. \quad Additionally, we investigate the learned rewards in high-dimensional continuous control tasks by inspecting the learning process of both the reward correction term $\Delta r$ and the final learned rewards $\hat r$. Figure~\ref{subfig:r_curve} shows that the reward correction term $\Delta r$ initially cannot distinguish expert and non-expert data well, but adapts and converges quickly. After a few training steps, $\Delta r$ can correctly reward expert data and punish non-expert data very well. We also perform ablations on the effect of diverse types of imperfect rewards $\tilde{r}$ on $\Delta {r}$ and $\hat{r}$. Figure~\ref{subfig:r_correction} shows that a perfect $\tilde{r}$ is beneficial to enlarge reward differences on expert and non-expert samples, 
and incorrect $\tilde{r}$ can be counterproductive. Nevertheless, RGM can largely correct the wrong rewards and produce reasonable learning signals. Similar effects are also observed on $\Delta r$, as Figure~\ref{subfig:r_correction_bar} shows.

\section{Discussion and Conclusion}
In this paper, we propose RGM (Reward Gap Minimization), a unified offline policy optimization approach applicable to diverse settings of imperfect rewards. RGM is formulated as a bi-level optimization problem, which achieves reward correction and simultaneous policy learning in a fully offline paradigm.
Extensive experiments and illustrative examples show that RGM can perform robust policy optimization under imperfect rewards. Several desirable properties are also identified in the corrected rewards learned by RGM.
One limitation of RGM is the need for a small expert dataset, which may not be easily accessible in some applications. However, RGM relaxes the strong dependencies on online reward tuning and tedious human efforts, which renders it a powerful tool to solve many real-world problems.

\subsubsection*{Acknowledgments}
This work is supported by funding from Haomo.AI, and National Natural Science Foundation of China under Grant 62125304, 62073182. The authors would also like to thank the anonymous reviewers for their feedback on the manuscripts.

\bibliography{iclr2023_conference}
\bibliographystyle{iclr2023_conference}

\newpage
\appendix
\onecolumn
\section{Proofs}
\subsection{Background}

We begin by briefly introducing the Fenchel conjugate (also known as convex conjugate or Legendre–Fenchel transformation):

\begin{definition}
(Fenchel conjugate) \quad In a real Hilbert space $\mathcal{X}$, if a function $f(x)$ is proper, then the Fenchel conjugate $f_{\star}$ of $f$ at $y$ is:
\begin{equation}
f_{\star}(y)=\sup _{x \in \mathcal{X}}(y^T x -f(x))
\end{equation}
where the domain of the $f_{\star}(y)$ is given by:
\begin{equation}
\operatorname{dom} f_{\star}=\left\{y: \sup _{x \in \operatorname{dom} f}\left(y^T x-f(x)\right)<\infty\right\}
\end{equation}
\end{definition}

If $f$ is convex and lower semi-continuous as well, we have the duality $f_{\star\star}(x)=f(x)$. Furthermore, if $f$ is also differentiable, then the maximizer $x^*$ of $f_{\star}(y)$ satisfies: 
\begin{equation}
\label{fenchel property}
    x^*=f_{\star}^{\prime}(y)
\end{equation}

Next, we present the interchangeability principle, which plays a key role in Proposition \ref{proposition:1}.

\begin{lemma}
\label{lemma:inter}
(Interchangeability principle) \quad Let $\xi$ be a random variable on $\Xi$ and assume for any $\xi \in \Xi$, function $g(\cdot, \xi)$ is a proper and upper semi-continuous concave function. Then
\begin{equation}
\mathbb{E}_{\xi}\left[\max _{u \in \mathbb{R}} g(u, \xi)\right]=\max _{u(\cdot) \in \mathcal{G}(\Xi)} \mathbb{E}_{\xi}[g(u(\xi), \xi)]
\end{equation}
where $\mathcal{G}(\Xi)=\{u(\cdot): \Xi \rightarrow \mathbb{R}\}$ is the entire space of functions defined on support $\Xi$ .
\end{lemma}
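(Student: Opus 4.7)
The plan is to establish the equality by a two-sided inequality argument, with the forward direction being essentially immediate and the reverse direction reducing to a measurable selection step. Throughout, I will treat $\Xi$ as a measurable space rich enough to support the selection theorems invoked below (standardly Polish or standard Borel), and assume the common regularity that the left-hand side is well-defined as an integral (so $\xi \mapsto \max_u g(u,\xi)$ is measurable and quasi-integrable).

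For the ``$\geq$'' direction, I would fix an arbitrary $u(\cdot) \in \mathcal{G}(\Xi)$ and note the trivial pointwise bound $g(u(\xi),\xi) \le \max_{u \in \mathbb{R}} g(u,\xi)$, valid for every $\xi \in \Xi$ by the very definition of the pointwise maximum. Monotonicity of expectation yields $\mathbb{E}_{\xi}[g(u(\xi),\xi)] \le \mathbb{E}_{\xi}[\max_u g(u,\xi)]$, and taking the supremum over all $u(\cdot) \in \mathcal{G}(\Xi)$ on the left delivers one inequality.

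For the ``$\leq$'' direction I would introduce the value function $\phi(\xi) := \max_{u \in \mathbb{R}} g(u,\xi)$ and, for each $\epsilon > 0$, the $\epsilon$-optimal multifunction
\begin{equation}
U^*_\epsilon(\xi) := \{ u \in \mathbb{R} : g(u,\xi) \ge \phi(\xi) - \epsilon \}.
\end{equation}
By the hypothesis that $g(\cdot,\xi)$ is proper, upper semi-continuous, and concave, each $U^*_\epsilon(\xi)$ is a nonempty, closed, convex subset of $\mathbb{R}$, and joint measurability of $(u,\xi)\mapsto g(u,\xi)$ (standard in this setting) makes $U^*_\epsilon$ a measurable set-valued map. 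I would then invoke a classical measurable selection theorem (Kuratowski--Ryll-Nardzewski, or Aumann's selection principle) to extract a measurable selector $u_\epsilon(\cdot) \in \mathcal{G}(\Xi)$ with $g(u_\epsilon(\xi),\xi) \ge \phi(\xi) - \epsilon$ for all $\xi$. Taking expectations gives $\mathbb{E}_\xi[g(u_\epsilon(\xi),\xi)] \ge \mathbb{E}_\xi[\phi(\xi)] - \epsilon$, hence $\max_{u(\cdot)} \mathbb{E}_\xi[g(u(\cdot),\xi)] \ge \mathbb{E}_\xi[\phi(\xi)] - \epsilon$, and sending $\epsilon \downarrow 0$ closes the inequality.

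The main obstacle is the measurable selection step: one must verify that $U^*_\epsilon$ really is a measurable multifunction under the stated hypotheses, which requires joint measurability of $g$ and an appropriate topological assumption on $\Xi$ that the lemma statement leaves implicit. A secondary nuisance is integrability: to make all expectations well-defined one typically needs $\phi \in L^1$ (or at least quasi-integrable), which is consistent with the setting of the paper since the left-hand side appears as an objective whose finiteness is assumed. Once those measurability and integrability issues are resolved, both inequalities combine to give the desired equality.
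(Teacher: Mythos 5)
Your argument is correct and is essentially the standard proof of this interchangeability result: the paper itself does not prove the lemma but simply defers to \citep{dai2017learning,rockafellar2009variational}, and the proof in those references (normal integrands plus a measurable-selection argument, as in Rockafellar--Wets, Theorem 14.60) is exactly the two-sided inequality you outline, with the trivial pointwise bound in one direction and an $\epsilon$-optimal measurable selector in the other. The caveats you flag are the right ones --- joint measurability of $g$ (i.e., that $-g$ is a normal integrand), a standard Borel structure on $\Xi$ so that Kuratowski--Ryll-Nardzewski/Aumann applies, and quasi-integrability of the value function --- and these are precisely the hypotheses the lemma statement leaves implicit; one further minor point is that a proper u.s.c.\ concave function on $\mathbb{R}$ need not attain its supremum, so the ``$\max$'' in the statement should really be read as ``$\sup$'', which your $\epsilon$-level-set construction already handles since properness makes $\{u: g(u,\xi)\ge \phi(\xi)-\epsilon\}$ nonempty (and u.s.c.\ plus concavity make it closed and convex) even without attainment.
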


\begin{proof}
Please refer to \citep{dai2017learning,rockafellar2009variational}.
\end{proof}

\subsection{Proof of tractable transformation of the lower-level problem}
\label{proof 2}

We start our proof from the original bi-level optimization problem Eq. (\ref{upper}) and Eq. (\ref{lower}). Using the Bellman flow constraint for Eq. (\ref{lower}) yields:
\begin{equation}
    \begin{aligned}
        \Delta r^* = \underset{{\Delta r}}{\arg \min} & \, \,  \text{D}_f\left(d^{\pi^*_{\hat{r}}} \| d^E\right) \\
        \text{s.t.} & \,\, d^{\pi^*_{\hat{r}}} = \,\, \underset{d \geq 0}{\arg \max} \,\, \mathbb{E}_{(s, a) \sim d}[\hat{r}(s,a)] - \alpha \text{D}_f\left(d \| d^{\mathcal{D}}\right) \\
        & \quad \text { s.t. } \sum_a d(s, a)=(1-\gamma) \mu_0(s)+\gamma \mathcal{T}_{\star} d(s), \forall s \in S 
    \end{aligned}
\end{equation}
\begin{assumption}
\label{assumption:1}
There exists at least one $d$ such that:
\begin{equation}
    \sum_a d(s, a)=(1-\gamma) \mu_0(s)+\gamma \mathcal{T}_{\star} d(s), \, d(s) > 0, \, \forall s \in S
\end{equation}
\end{assumption}
We note that this assumption is mild since when every state is reachable from the initial state distribution, the assumption is satisfied, which is common in practice.

Slater's theorem \citep{boyd2004convex} states that strong duality holds, if the optimization problem is strictly feasible (Slater’s condition holds) and the problem is convex. So under Assumption \ref{assumption:1} with the fact that the lower level problem is convex w.r.t. $d$, the strong duality holds, which means that the above lower level problem can be re-written as the following form: 
\begin{equation}
\small
\begin{aligned}
\min_{V(s)}\max_{d \geq 0}\,\,  \mathbb{E}_{(s, a) \sim d}[\hat{r}(s,a)] - \alpha \text{D}_f\left(d \| d^{\mathcal{D}}\right) + \sum_s V(s) \left[(1-\gamma) \mu_0(s)+\gamma \mathcal{T}_{\star} d(s) - \sum_a d(s, a) \right]
\end{aligned}
\end{equation}

\begin{lemma}
\label{lemma:rearrange}
The minimax problem:
\begin{equation}
\small
\begin{aligned}
\min_{V(s)}\max_{d \geq 0}\,\,  \mathbb{E}_{(s, a) \sim d}[\hat{r}(s,a)] - \alpha \text{D}_f\left(d \| d^{\mathcal{D}}\right) + \sum_s V(s) \left[(1-\gamma) \mu_0(s)+\gamma \mathcal{T}_{\star} d(s) - \sum_a d(s, a) \right] 
\end{aligned}
\end{equation}
can be equivalently written as:
\begin{equation}
\small
\begin{aligned}
\min_{V(s)}\max_{d \geq 0}\,\,   (1-\gamma)\mathbb{E}_{s\sim\mu_0}[V(s)] + \mathbb{E}_{(s,a)\sim d}\left[\hat{r}(s,a)+\gamma\mathcal{T}V(s,a)-V(s))\right] - \alpha \text{D}_f\left(d \| d^{\mathcal{D}}\right) 
\end{aligned}
\end{equation}
\end{lemma}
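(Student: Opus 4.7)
The plan is to show that Lemma \ref{lemma:rearrange} is essentially bookkeeping: every difference between the two displayed minimax problems sits inside the Lagrangian term $\sum_s V(s)\bigl[(1-\gamma)\mu_0(s)+\gamma\mathcal{T}_\star d(s)-\sum_a d(s,a)\bigr]$, and we only need to rewrite each of its three pieces as an expectation. The outer $\min_V$ and inner $\max_{d\geq 0}$ structure as well as the term $-\alpha \text{D}_f(d\|d^{\mathcal{D}})$ are identical on both sides, so it suffices to establish the equality of the remaining expression pointwise in $(V,d)$.

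First I would split the Lagrangian term into its three natural summands. The piece $(1-\gamma)\sum_s V(s)\mu_0(s)$ is immediately $(1-\gamma)\mathbb{E}_{s\sim\mu_0}[V(s)]$, which is already pulled out in front of the target expression. The piece $-\sum_s V(s)\sum_a d(s,a)=-\sum_{s,a}d(s,a)V(s)=-\mathbb{E}_{(s,a)\sim d}[V(s)]$ is the $-V(s)$ inside the $\mathbb{E}_{(s,a)\sim d}[\,\cdot\,]$ bracket on the right-hand side.

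The key step, and the only place that requires more than symbol pushing, is the middle piece $\gamma\sum_s V(s)\mathcal{T}_\star d(s)$. Using the definition $\mathcal{T}_\star d(s)=\sum_{\bar s,\bar a}T(s\mid\bar s,\bar a)d(\bar s,\bar a)$ given right before Definition 2, I would swap the order of summation to obtain
\begin{equation*}
\gamma\sum_s V(s)\sum_{\bar s,\bar a}T(s\mid\bar s,\bar a)d(\bar s,\bar a)=\gamma\sum_{\bar s,\bar a}d(\bar s,\bar a)\sum_s T(s\mid\bar s,\bar a)V(s)=\gamma\,\mathbb{E}_{(s,a)\sim d}[\mathcal{T}V(s,a)],
\end{equation*}
where in the last equality I use the definition $\mathcal{T}V(s,a)=\sum_{s'}T(s'\mid s,a)V(s')$ stated after Equation (\ref{minimax}). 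This interchange is legitimate by Fubini provided the sums are finite or absolutely convergent, which holds over the finite state-action supports relevant here (and extends to the continuous case by standard measure-theoretic arguments using the underlying kernel).

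Combining the three simplified pieces with the untouched $\mathbb{E}_{(s,a)\sim d}[\hat r(s,a)]$ and $-\alpha\text{D}_f(d\|d^{\mathcal{D}})$ terms recovers exactly the target form
\begin{equation*}
(1-\gamma)\mathbb{E}_{s\sim\mu_0}[V(s)]+\mathbb{E}_{(s,a)\sim d}\bigl[\hat r(s,a)+\gamma\mathcal{T}V(s,a)-V(s)\bigr]-\alpha\text{D}_f(d\|d^{\mathcal{D}}),
\end{equation*}
which finishes the proof once we apply $\min_V\max_{d\geq 0}$ to both sides. The main ``obstacle'' is therefore purely notational: recognizing that $\mathcal{T}$ and $\mathcal{T}_\star$ are adjoints under the pairing $\langle V,d\rangle=\sum_{s,a}V(s)d(s,a)$, which is what allows the $V$-inside-$\mathcal{T}_\star d$ term to move into the $\mathbb{E}_{(s,a)\sim d}$ expectation as $\mathcal{T}V$.
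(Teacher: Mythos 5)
Your proposal is correct and follows essentially the same route as the paper's proof: both expand $\mathcal{T}_\star d(s)$ by its definition, swap the order of summation (the adjoint identity $\sum_s V(s)\mathcal{T}_\star d(s)=\sum_{s,a}d(s,a)\mathcal{T}V(s,a)$), and collect the remaining terms into $\mathbb{E}_{(s,a)\sim d}[\hat r(s,a)+\gamma\mathcal{T}V(s,a)-V(s)]$. Nothing further is needed.
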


\begin{proof}
\begin{equation}
\small
\begin{aligned}
& \quad \mathbb{E}_{(s, a) \sim d}[\hat{r}(s,a)] - \alpha \text{D}_f\left(d \| d^{\mathcal{D}}\right) + \sum_s V(s) \left[(1-\gamma) \mu_0(s)+\gamma \mathcal{T}_{\star} d(s) - \sum_a d(s, a) \right] \\
& = \mathbb{E}_{(s, a) \sim d}[\hat{r}(s,a)] - \alpha \text{D}_f\left(d \| d^{\mathcal{D}}\right) +\sum_s V(s) \left[(1-\gamma) \mu_0(s)+\gamma \sum_{\bar{s}, \bar{a}} T(s | \bar{s}, \bar{a}) d(\bar{s}, \bar{a}) - \sum_a d(s, a) \right] \\
& = \sum_{s,a}d(s,a)\hat{r}(s,a) - \alpha \text{D}_f\left(d \| d^{\mathcal{D}}\right) + (1-\gamma) \sum_s \mu_0(s) V(s) + \gamma \sum_{\bar{s}, \bar{a}} d(\bar{s}, \bar{a}) \sum_s T(s | \bar{s}, \bar{a}) V(s) - \sum_{s,a} d(s, a) V(s) \\
& = \sum_{s,a}d(s,a)\hat{r}(s,a) - \alpha \text{D}_f\left(d \| d^{\mathcal{D}}\right) + (1-\gamma) \sum_s \mu_0(s) V(s) + \gamma \sum_{{s}, {a}} d({s}, {a}) \sum_{s^{\prime}} T(s^{\prime} | {s}, {a}) V(s^{\prime}) - \sum_{s,a} d(s, a) V(s) \\
& = (1-\gamma) \sum_s \mu_0(s) V(s) + \sum_{s,a}d(s,a)\left[\hat{r}(s,a) + \gamma \sum_{s^{\prime}} T(s^{\prime} | {s}, {a}) V(s^{\prime}) - V(s)\right] - \alpha \text{D}_f\left(d \| d^{\mathcal{D}}\right) \\
& = (1-\gamma)\mathbb{E}_{s\sim\mu_0}[V(s)] + \mathbb{E}_{(s,a)\sim d}\left[\hat{r}(s,a)+\gamma\mathcal{T}V(s,a)-V(s))\right] - \alpha \text{D}_f\left(d \| d^{\mathcal{D}}\right)
\end{aligned}
\end{equation}
\end{proof}
\begin{proposition}
\label{proposition:1}
The minimax problem:
\begin{equation}
\small
\begin{aligned}
\min_{V(s)}\max_{d \geq 0}\,\,  \mathbb{E}_{(s, a) \sim d}[\hat{r}(s,a)] - \alpha \text{D}_f\left(d \| d^{\mathcal{D}}\right) + \sum_s V(s) \left[(1-\gamma) \mu_0(s)+\gamma \mathcal{T}_{\star} d(s) - \sum_a d(s, a) \right]
\end{aligned}
\end{equation}
shares the same optimal value as the following minimization problem:
\begin{equation}
\min_{V(s)} (1-\gamma)\mathbb{E}_{s\sim\mu_0}[V(s)] + \alpha \, \mathbb{E}_{(s,a)\sim d^{\mathcal{D}}}\left[f_{\star}(\frac{\hat{r}(s,a) + \gamma\mathcal{T}V(s,a)-V(s)}{\alpha})\right]
\end{equation}
where $f_{\star}$ is the Fenchel conjugate function of $f$ with $ \operatorname{dom} f=\left\{u: u \geq 0 \right\} $
\end{proposition}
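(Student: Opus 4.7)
The plan is to start from the rearranged form given by Lemma \ref{lemma:rearrange} and reduce the inner maximization over $d$ to a per-sample Fenchel conjugate. First I would apply Lemma \ref{lemma:rearrange} to rewrite the minimax as
\begin{equation*}
\min_{V(s)}\max_{d \geq 0}\,\, (1-\gamma)\mathbb{E}_{s\sim\mu_0}[V(s)] + \mathbb{E}_{(s,a)\sim d}\!\left[\hat{r}(s,a)+\gamma\mathcal{T}V(s,a)-V(s)\right] - \alpha\, \text{D}_f\!\left(d \| d^{\mathcal{D}}\right),
\end{equation*}
so the outer term $(1-\gamma)\mathbb{E}_{s\sim\mu_0}[V(s)]$ no longer depends on $d$ and can be pulled out of the max.

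Next I would perform a change of variables $w(s,a) := d(s,a)/d^{\mathcal{D}}(s,a)$ (on the support of $d^{\mathcal{D}}$), noting that the constraint $d \geq 0$ becomes $w \geq 0$, that $\mathbb{E}_{(s,a) \sim d}[\cdot] = \mathbb{E}_{(s,a)\sim d^{\mathcal{D}}}[w(s,a)\,\cdot]$, and that $\text{D}_f(d\|d^{\mathcal{D}}) = \mathbb{E}_{(s,a)\sim d^{\mathcal{D}}}[f(w(s,a))]$ by definition of the $f$-divergence. Writing $\delta(s,a) := \hat{r}(s,a) + \gamma\mathcal{T}V(s,a) - V(s)$, the inner max becomes
\begin{equation*}
\max_{w(\cdot,\cdot) \geq 0} \mathbb{E}_{(s,a)\sim d^{\mathcal{D}}}\!\left[w(s,a)\delta(s,a) - \alpha f(w(s,a))\right].
\end{equation*}

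I would then invoke the interchangeability principle (Lemma \ref{lemma:inter}) to move the maximum inside the expectation, which is justified since for each fixed $(s,a)$ the integrand $w \mapsto w\,\delta(s,a) - \alpha f(w)$ is proper, upper semi-continuous and concave in $w$ (as $f$ is convex and the linear term is continuous). This yields
\begin{equation*}
\mathbb{E}_{(s,a)\sim d^{\mathcal{D}}}\!\left[\max_{w \geq 0}\,\bigl(w\,\delta(s,a) - \alpha f(w)\bigr)\right].
\end{equation*}
For each sample, the inner one-dimensional optimum equals $\alpha f_{\star}(\delta(s,a)/\alpha)$: factoring out $\alpha$ gives $\alpha\max_{w\geq 0}(w(\delta/\alpha) - f(w))$, and since $\operatorname{dom} f = \{u : u \geq 0\}$, the $w\geq 0$ constraint is automatically encoded in the supremum that defines $f_{\star}$. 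Substituting back produces exactly the claimed minimization objective.

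\textbf{Main obstacle.} The delicate step will be the interchangeability swap and the change-of-variables rationale — one must ensure measurability of the pointwise maximizer $w^\star(s,a)$ and handle the zero-measure set $\{d^{\mathcal{D}}(s,a)=0\}$ (on which any optimal $d$ must also vanish, since otherwise $\text{D}_f(d\|d^{\mathcal{D}}) = +\infty$). I would briefly justify these points by appealing to the cited interchangeability result and by noting that restricting to the support of $d^{\mathcal{D}}$ costs nothing in the objective. The rest is essentially bookkeeping once the per-sample Fenchel identity $\max_{w\geq 0}(wy - \alpha f(w)) = \alpha f_{\star}(y/\alpha)$ is established.
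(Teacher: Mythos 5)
Your proposal is correct and follows essentially the same route as the paper's proof: apply Lemma \ref{lemma:rearrange}, pull out the $V$-only term, rewrite the inner maximization over $d$ as an expectation under $d^{\mathcal{D}}$ of the ratio $w = d/d^{\mathcal{D}}$, swap max and expectation via the interchangeability principle (Lemma \ref{lemma:inter}), and identify the per-sample maximum with $\alpha f_{\star}\!\left(\frac{\hat{r}+\gamma\mathcal{T}V-V}{\alpha}\right)$ using $\operatorname{dom} f=\{u: u\geq 0\}$ to absorb the nonnegativity constraint. Your added remarks on measurability and the support of $d^{\mathcal{D}}$ are finer-grained than what the paper records, but they do not change the argument.
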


\begin{proof} \quad
Using Lemma \ref{lemma:rearrange}, this minimax problem can be re-written as:
\begin{equation}
\begin{aligned}
\min_{V(s)}\max_{d \geq 0} (1-\gamma)\mathbb{E}_{s\sim\mu_0}[V(s)] + \mathbb{E}_{(s,a)\sim d}\left[\hat{r}(s,a) + \gamma\mathcal{T}V(s,a)-V(s)\right] - \alpha \text{D}_f\left(d \| d^{\mathcal{D}}\right)
\end{aligned}
\end{equation}
Next, 
\begin{equation}
\small
\begin{aligned}
& \quad \min_{V(s)}\max_{d \geq 0} (1-\gamma)\mathbb{E}_{s\sim\mu_0}[V(s)] + \mathbb{E}_{(s,a)\sim d}\left[\hat{r}(s,a) + \gamma\mathcal{T}V(s,a)-V(s)\right] - \alpha \text{D}_f\left(d \| d^{\mathcal{D}}\right) \\
& = \min_{V(s)} (1-\gamma)\mathbb{E}_{s\sim\mu_0}[V(s)] + \max_{d \geq 0} \mathbb{E}_{(s,a)\sim d}\left[\hat{r}(s,a) + \gamma\mathcal{T}V(s,a)-V(s)\right] - \alpha \text{D}_f\left(d \| d^{\mathcal{D}}\right) \\
& = \min_{V(s)} (1-\gamma)\mathbb{E}_{s\sim\mu_0}[V(s)] +  \underbrace{\alpha \left[ \max_{d \geq 0} \mathbb{E}_{(s,a)\sim d}\left[\frac{\hat{r}(s,a) + \gamma\mathcal{T}V(s,a)-V(s)}{\alpha}\right] - \text{D}_f\left(d \| d^{\mathcal{D}}\right) \right]}_{L} \\
\end{aligned}
\end{equation}
\textit{L} in the last step reduces to:
\begin{equation}
\label{final transform}
\small
\begin{aligned}
& \quad \alpha \left[ \max_{d \geq 0} \mathbb{E}_{(s,a)\sim d}\left[\frac{\hat{r}(s,a) + \gamma\mathcal{T}V(s,a)-V(s)}{\alpha}\right] - \text{D}_f\left(d \| d^{\mathcal{D}}\right) \right] \\
& = \alpha \left[ \max_{d \geq 0} \mathbb{E}_{(s,a)\sim d^{\mathcal{D}}} \left[\frac{d(s,a)}{d^{\mathcal{D}}(s,a)} \frac{\left(\hat{r}(s,a) + \gamma\mathcal{T}V(s,a)-V(s)\right)}{\alpha}\right] - \mathbb{E}_{(s,a)\sim d^{\mathcal{D}}} \left[f\left(\frac{d(s,a)}{d^{\mathcal{D}}(s,a)}\right)\right] \right] \\
& = \alpha \left[ \max_{d \geq 0} \mathbb{E}_{(s,a)\sim d^{\mathcal{D}}} \left[\frac{d(s,a)}{d^{\mathcal{D}}(s,a)} \frac{\left(\hat{r}(s,a) + \gamma\mathcal{T}V(s,a)-V(s)\right)}{\alpha} - f\left(\frac{d(s,a)}{d^{\mathcal{D}}(s,a)}\right)\right] \right] \\
& = \alpha \, \mathbb{E}_{(s,a)\sim d^{\mathcal{D}}} \left[ \max_{d(s,a) \geq 0} \frac{d(s,a)}{d^{\mathcal{D}}(s,a)} \frac{\left(\hat{r}(s,a) + \gamma\mathcal{T}V(s,a)-V(s)\right)}{\alpha} - f\left(\frac{d(s,a)}{d^{\mathcal{D}}(s,a)}\right) \right] \\
& = \alpha \, \mathbb{E}_{(s,a)\sim d^{\mathcal{D}}} \left[ \max_{\frac{d(s,a)}{d^{\mathcal{D}}(s,a)}\geq 0} \frac{d(s,a)}{d^{\mathcal{D}}(s,a)} y(s,a) - f\left(\frac{d(s,a)}{d^{\mathcal{D}}(s,a)}\right) \right] \\
& = \alpha \, \mathbb{E}_{(s,a)\sim d^{\mathcal{D}}} \left[ f_{\star}(y(s,a)) \right] \\
\end{aligned}
\end{equation}
where $y(s,a) = \frac{\hat{r}(s,a) + \gamma\mathcal{T}V(s,a)-V(s)}{\alpha}$, the third step follows the interchangeability principle (Lemma \ref{lemma:inter}) and the last step comes from the Fenchel conjugate of convex function $f$ \footnote{$ \operatorname{dom} f=\left\{u: u \geq 0 \right\} $ and $f$ is convex, so $f_{\star}(y)=-f(0)$ when $y \leq f'(0)$.}.
\end{proof}

Using this result, we finally yield the tractable lower-level problem Eq. (\ref{min}).

\subsection{Proof of tractable transformation of the upper-level problem}
\label{proof 3}

\begin{proposition}
\label{proposition:2}
The original upper-level problem
\begin{equation}
    \underset{{\Delta r}}{\min} \, \,  \text{D}_f\left(d^{\pi^*_{\hat{r}}} \| d^E\right)
\end{equation}
can be equivalently written as:
\begin{align}
\underset{\Delta r}{\min} \, \, \text{D}_f\left(f_{\star}^{\prime}\left(\frac{\hat{r}+\gamma \mathcal{T} V^*-V^*}{\alpha}\right)d^{\mathcal{D}} \| d^E\right) 
\end{align}
where $d^{\pi^*_{\hat{r}}}$ is the optimal state-action visitation distribution of Eq. (\ref{convert})

\end{proposition}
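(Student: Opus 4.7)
The plan is to show Proposition~\ref{proposition:2} by establishing the pointwise identity in Eq.~(\ref{optimal equation}) first, then substituting it into the upper-level objective. The key observation is that the result of Proposition~\ref{proposition:1} rests on interchanging expectation and maximization (Lemma~\ref{lemma:inter}) to reduce the lower-level problem to a pointwise inner maximization of the form $\max_{u \ge 0}\, u\,y(s,a) - f(u)$, where $u(s,a) = d(s,a)/d^{\mathcal{D}}(s,a)$ and $y(s,a) = (\hat r(s,a) + \gamma \mathcal{T}V(s,a) - V(s))/\alpha$. This inner problem is precisely the defining form of the Fenchel conjugate $f_{\star}(y)$.

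Next I would invoke the differentiability property of the Fenchel conjugate recalled in Eq.~(\ref{fenchel property}): since $f$ is convex, lower semi-continuous, and differentiable on its domain, the maximizer is given by $u^*(s,a) = f_{\star}^{\prime}(y(s,a))$. Substituting $V \leftarrow V^*$ (the outer-minimization optimum obtained in Proposition~\ref{proposition:1}) and undoing the change of variable yields precisely Eq.~(\ref{optimal equation}):
\begin{equation*}
\frac{d^{\pi^*_{\hat r}}(s,a)}{d^{\mathcal{D}}(s,a)} = f_{\star}^{\prime}\!\left(\frac{\hat r(s,a) + \gamma \mathcal{T}V^*(s,a) - V^*(s)}{\alpha}\right),
\end{equation*}
so that $d^{\pi^*_{\hat r}}(s,a) = f_{\star}^{\prime}(\cdots)\, d^{\mathcal{D}}(s,a)$ pointwise.

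Finally, I would substitute this explicit expression for $d^{\pi^*_{\hat r}}$ directly into the upper-level divergence $D_f(d^{\pi^*_{\hat r}} \| d^E)$, which immediately produces the claimed form
\begin{equation*}
D_f\!\left(f_{\star}^{\prime}\!\left(\tfrac{\hat r + \gamma \mathcal{T} V^* - V^*}{\alpha}\right) d^{\mathcal{D}} \,\Big\|\, d^E\right).
\end{equation*}
Taking $\arg\min_{\Delta r}$ on both sides yields the proposition.

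The main obstacle is justifying that the pointwise maximizer from the Fenchel-conjugate transformation is actually the optimal state-action visitation distribution $d^{\pi^*_{\hat r}}$ of the original constrained primal problem Eq.~(\ref{convert}), rather than merely an unconstrained optimizer of the dualized Lagrangian. This follows from strong duality (guaranteed by Slater's condition under Assumption~\ref{assumption:1} and the convexity of the primal in $d$), which ensures the primal-dual optimum is attained and the saddle-point $(d^*, V^*)$ coincides with the optimum of Eq.~(\ref{convert}); together with feasibility of the resulting $d^*$ under the Bellman flow constraint, which in turn identifies $d^* = d^{\pi^*_{\hat r}}$ via the one-to-one correspondence between valid visitation distributions and policies. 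Once this identification is in place, the substitution step is immediate.
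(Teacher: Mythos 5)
Your proposal is correct and follows essentially the same route as the paper: apply the Fenchel-conjugate maximizer property (Eq.~(\ref{fenchel property})) to the pointwise inner maximization obtained in the proof of Proposition~\ref{proposition:1} to get Eq.~(\ref{optimal equation}) at $V^*$, then substitute $d^{\pi^*_{\hat r}} = f_{\star}^{\prime}(\cdot)\,d^{\mathcal{D}}$ into the upper-level divergence. Your added justification that strong duality (via Assumption~\ref{assumption:1} and Slater's condition) identifies the dual saddle-point with the primal optimum of Eq.~(\ref{convert}) is consistent with, and makes explicit, what the paper leaves implicit.
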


\begin{proof}
By the property Eq. (\ref{fenchel property}), the maximizer $\left(\frac{d(s,a)}{d^\mathcal{D}(s,a)}\right)^*$ of $f_{\star}(y(s,a))$ in Eq. (\ref{final transform}) satisfies
\begin{equation}
    \left(\frac{d(s,a)}{d^\mathcal{D}(s,a)}\right)^*=f_{\star}^{\prime}\left(\frac{\hat{r}(s,a)+\gamma \mathcal{T} V(s, a)-V(s)}{\alpha}\right)
\end{equation}

Given $V^*$, we have:
\begin{equation}
\frac{d^{\pi^*_{\hat{r}}}(s,a)}{d^\mathcal{D}(s,a)}=f_{\star}^{\prime}\left(\frac{\hat{r}(s,a)+\gamma \mathcal{T} V^*(s, a)-V^*(s)}{\alpha}\right)
\end{equation}
Substituting this result into the original upper-level problem completes the proof.
\end{proof}

Next, we denote $f_{\star}^{\prime}\left(\frac{\hat{r}+\gamma \mathcal{T} V^*-V^*}{\alpha} \right)$ as $g$. 
By expanding the $f$-divergence, we have the upper-level objective:
\begin{align}
\text{D}_{f}\left(d^{\mathcal{D}}g\|d^E\right) & = \mathbb{E}_{(s,a) \sim d^E}\left[f\left(\frac{d^{\mathcal{D}}(s,a)g(s,a)}{d^E(s,a)}\right)\right] \\
&= \mathbb{E}_{(s,a) \sim d^{\mathcal{D}}}\left[\frac{d^E(s,a)}{d^{\mathcal{D}}(s,a)}f\left(\frac{d^{\mathcal{D}}(s,a)}{d^E(s,a)}g(s,a)\right)\right] \\
&= \mathbb{E}_{(s,a) \sim d^{\mathcal{D}}}\left[w(s,a)f\left(\frac{g(s,a)}{w(s,a)}\right)\right]
\end{align}
where the distribution ratio $w(s,a)\triangleq d^E(s,a)/d^{\mathcal{D}}(s,a)$.

Finally, by combining proposition \ref{proposition:1} and proposition \ref{proposition:2}, the original bi-level optimization problem Eq. (\ref{upper})-(\ref{lower}) is rewritten equivalently as follows:
\begin{equation}
\begin{aligned}
\Delta r^* & = \arg \min_{\Delta r}\, \mathbb{E}_{(s,a) \sim d^{\mathcal{D}}}\left[w(s,a)f\left(f_{\star}^{\prime}\left(\frac{\hat{r}(s,a)+\gamma \mathcal{T} V^*(s, a)-V^*(s)}{\alpha}\right)/w(s,a)\right)\right] \\
\text { s.t. } V^*(s)&=  \arg \min_{V(s)}\, (1-\gamma)\mathbb{E}_{s\sim\mu_0}[V(s)] 
 + \alpha \, \mathbb{E}_{(s,a)\sim d^{\mathcal{D}}}\left[f_{\star}\left(\frac{\hat{r}(s,a) + \gamma\mathcal{T}V(s,a)-V(s)}{\alpha}\right)\right]
\end{aligned}
\end{equation}

\section{Implementation details of RGM}\label{app:implementation}

\subsection{RGM with KL-divergence}

In this section, we introduce the implementation details of RGM. For KL-divergence, we have $f(x)=x\log x$ and its Fenchel conjugate is $f_{\star}(x)=e^{x-1}$. However, this exponential form is numerically unstable and prone to value explosion in practice. We address this issue by using the fact that the conjugate of the negative entropy function, restricted to the probability simplex, is the log-sum-exp function~\citep{boyd2004convex}, i.e., $\text{D}_{\star,f}(y)=\log\mathbb{E}_{x\sim q}[\exp y(x)]$. Then, the optimization problem of RGM with KL divergence is

\begin{equation}
    \begin{aligned}
        \min_{\Delta r}\, &\mathbb{E}_{(s,a)\sim d^\mathcal{D}}\left[ {\rm Softmax}{\left(\frac{{\rm Adv}(\Delta r, V^*)}{\alpha}\right)}\left(\log{\frac{d^{\mathcal{D}}(s,a)}{d^{E}(s,a)}}+\log\left({\rm Softmax}\left(\frac{{\rm Adv}({\Delta r}, V^*)}{\alpha}\right)\right)\right) \right]\\
        & {\rm s.t.} V^*=\arg \min_{V}(1-\gamma)\mathbb{E}_{s\sim\mu_0}[V(s)]+\alpha\log\mathbb{E}_{(s,a)\sim d^{\mathcal{D}}}\left[\exp\left(\frac{{\rm Adv}(\Delta r, V)}{\alpha}\right)\right]
    \end{aligned}
    \label{equ:train_V_r_KL}
\end{equation}

where, ${\rm Adv}(\Delta r, V) := \hat{r}(s,a)+\gamma\mathcal{T}V(s,a)-V(s)= \tilde{r}(s,a)+\Delta r(s,a,\tilde{r})+\gamma\mathcal{T}V(s,a)-V(s)$ and $\log\frac{d^{\mathcal{D}(s,a)}}{d^{{E}(s,a)}}$ can be obtained by training a discriminator $\log\frac{d^{\mathcal{D}(s,a)}}{d^{{E}(s,a)}}=-\log\left(\frac{1}{h^*}-1\right)$ using Eq.~(\ref{discriminator}) in continuous MDPs.
The importance ratio used to extract the policy is
\begin{equation}
    \psi^*(s, a)=\frac{d^{\pi_{\hat{r}}^*}(s,a)}{d^{\mathcal{D}}(s,a)}={\rm Softmax}\left[\frac{\tilde{r}+\Delta r +\gamma\mathcal{T}V^*(s,a)-V^*(s)}{\alpha}\right]
    \label{equ:extract_pi_KL}
\end{equation}

\subsubsection{Optimize without Sum-exp}
Note that in the upper level objective of Eq.~(\ref{equ:train_V_r_KL}), we need to calculate a log-sum-exp value in the denominator of the log(Softmax) term, where $\log\left({\rm Softmax}({\rm Adv}(\Delta r, V^*)/\alpha)\right)={\rm Adv}(\Delta r, V^*)/\alpha-\log{\sum_{s,a\in\mathcal{S\times A}}\exp({\rm Adv}(\Delta r, V^*)/\alpha)}$. In low-dimensional discrete state-action space, we can easily get this value via summing over the overall space. In high-dimensional continuous MDPs, however, it is pretty difficult to retrieve the value because it requires integration over the entire space. CQL~\citep{kumar2020conservative} approximates this value via importance sampling but requires additional samples from the entire state-action space. There are some other methods like Markov Chain Monte Carlo (MCMC) or Score Match (SM)~\citep{song2021train} that can approximate the update gradient but bring additional computation costs and suffer from some technical issues.

Fortunately, we can subtly circumvent the log-sum-exp term by optimizing the upper bound of the original upper-level problem using the following inequality~\citep{boyd2004convex}:
\begin{equation}
    \max_{x_i \in \mathcal{B}}\{x_1,...,x_n\}\le\max\{x_1, ..., x_n\}\le\log\sum_{i}^n\exp\left(x_i\right)
    \label{equ:log_sum_exp_max}
\end{equation}
where $\underset{x_i \in \mathcal{B}}{\max}\{x_1,...,x_n\}$ is the max value in a mini-batch $\mathcal{B}$ which is sampled from $\{x_1,...,x_n\}$. For simplicity, we denote $\underset{x_i \in \mathcal{B}}{\max}\{x_1,...,x_n\}$ as $\underset{x_i \in \mathcal{B}}{\max}\{\bm{x}\}$. Substituting Eq.~(\ref{equ:log_sum_exp_max}) into the upper-level problem of Eq.~(\ref{equ:train_V_r_KL}), we get the upper bound of the original upper-level optimization objective:

{\scriptsize
\begin{equation}
\begin{aligned}
    {\rm Upper(\ref{equ:train_V_r_KL})}&=\mathbb{E}_{(s,a)\sim d^\mathcal{D}}\left[ {\rm Softmax}\left({\frac{{\rm Adv}(\Delta r, V^*)}{\alpha}}\right)\left(\log{\frac{d^{\mathcal{D}}(s,a)}{d^{E}(s,a)}} + \frac{{\rm Adv}(\Delta r, V^*)}{\alpha}-\log{\sum_{s,a\in\mathcal{S\times A}}\exp\left(\frac{{\rm Adv}(\Delta r, V^*)}{\alpha}\right)}\right) \right] \\
    &\le\mathbb{E}_{(s,a)\sim d^\mathcal{D}}\left[ {\rm Softmax}\left({\frac{{\rm Adv}(\Delta r, V^*)}{\alpha}}\right)\left(\log{\frac{d^{\mathcal{D}}(s,a)}{d^{E}(s,a)}} + \frac{{\rm Adv}(\Delta r, V^*)}{\alpha}-\max_\mathcal{B}\left\{\frac{\textbf{Adv}\bm{(\Delta r, V^*)}}{\alpha}\right\}\right) \right]\\
    &\propto \mathbb{E}_{(s,a)\sim d^\mathcal{D}}\left[ \exp\left({\frac{{\rm Adv}(\Delta r, V^*)}{\alpha}}\right)\left(\log{\frac{d^{\mathcal{D}}(s,a)}{d^{E}(s,a)}} + \frac{{\rm Adv}(\Delta r, V^*)}{\alpha}-\max_\mathcal{B}\left\{\frac{\textbf{Adv}\bm{(\Delta r, V^*)}}{\alpha}\right\}\right) \right]
\end{aligned}
\label{equ:upper_bound}
\end{equation}
}

where ${\rm Upper (\ref{equ:train_V_r_KL})}$ denotes the upper level objective in Eq.~(\ref{equ:train_V_r_KL}).

Replacing Eq.~(\ref{equ:upper_bound}) to the upper level objective in Eq.~(\ref{equ:train_V_r_KL}), we obtain the final optimization problem:

\begin{equation}
    \begin{aligned}
        \min_{\Delta r}\, &\mathbb{E}_{(s,a)\sim d^\mathcal{D}}\left[ \exp\left({\frac{{\rm Adv}(\Delta r, V^*)}{\alpha}}\right)\left(\log{\frac{d^{\mathcal{D}}(s,a)}{d^{E}(s,a)}} + \frac{{\rm Adv}(\Delta r, V^*)}{\alpha}-\max_\mathcal{B}\left\{\frac{\textbf{Adv}\bm{(\Delta r, V^*)}}{\alpha}\right\}\right) \right]\\
        & {\rm s.t.} V^*=\arg \min_{V}(1-\gamma)\mathbb{E}_{s\sim\mu_0}[V(s)]+\alpha\log\mathbb{E}_{(s,a)\sim d^{\mathcal{D}}}\left[\exp\left(\frac{{\rm Adv}(\Delta r, V^*)}{\alpha}\right)\right]
    \end{aligned}
    \label{equ:train_V_r_KL_final}
\end{equation}
We practically utilize the same mini-batch $\mathcal{B}$ as that of SGD gradient update step to calculate $\underset{\mathcal{B}}{\max}\left\{\frac{\textbf{Adv}\bm{(\Delta r, V^*)}}{\alpha}\right\}$.
Note that the exp term in the upper-level problem is prone to value explosion in practice, we clip the exp value to $(-\infty, 100]$ like IQL~\citep{iql} does to improve training stability. 

When extracting the policy, we can ignore the annoying sum-exp term in the denominator of Softmax and get the following ratio, because it does not influence the direction of gradients to update the policy.
\begin{equation}
    \psi^*(s, a)=\frac{d^{\pi_{\hat{r}}^*}(s,a)}{d^{\mathcal{D}}(s,a)}\propto{\rm exp}\left[\frac{\tilde{r}+\Delta r +\gamma\mathcal{T}V^*(s,a)-V^*(s)}{\alpha}\right]:=\tilde{\psi}^*(s,a)
    \label{equ:extract_pi_exp}
\end{equation}
However, using Eq.(\ref{equ:extract_pi_exp}), we can only get an unnormalized distribution ratio instead of an exact one. We resort to self-normalized importance sampling~\citep{mcbook} to obtain a normalized ratio:

\begin{equation}
    \psi^*(s,a)=\frac{\tilde{\psi}^*(s,a)}{\mathbb{E}_{(s,a)\sim d^\mathcal{D}}[\tilde{\psi}^*(s,a)]}
    \label{equ:extract_pi_self_norm}
\end{equation}





\subsection{RGM with $\mathcal{X}^2$-divergence}
Additionally, we can also implement RGM using $\mathcal{X}^2$-divergence. For $\mathcal{X}^2$-divergence, we have $f(x)=\frac{1}{2}(x-1)^2$ with $\operatorname{dom} f=\{x: x\geq0\}$\footnote{On account of the state-action visitation distribution $d \geq 0$} and its Fenchel conjugate is $f_{\star}(x)= \frac{1}{2}\left(x+1\right)^2$ and $f_\star'(x)=\max\left(0,x+1\right)$. Then, the optimization objective of RGM with $\mathcal{X}^2$ divergence is
\begin{equation}
\small
    \begin{aligned}
        & \min_{\Delta r} \, \mathbb{E}_{(s,a)\sim d^\mathcal{D}}\left[\frac{d^E(s,a)}{2d^{\mathcal{D}}(s,a)}\left(\max\left(0, \frac{{\rm Adv}(\Delta r, V^*)}{\alpha} + 1\right) \frac{d^{\mathcal{D}}(s,a)}{d^E(s,a)} - 1\right)^2\right] \\
        &{\rm s.t} \, V^*=\arg \min_{V}(1-\gamma)\mathbb{E}_{s\sim\mu_0}[V(s)]+\frac{\alpha}{2}\mathbb{E}_{(s,a)\sim d^\mathcal{D}}\left[\left(\frac{{\rm Adv}(\Delta r, V)}{\alpha}\right)^2\right]
    \end{aligned}
\end{equation}
The importance ratio used to extract the policy is:
\begin{equation}
    \psi^*(s, a)=\frac{d^{\pi^*_{\hat{r}}}(s,a)}{d^{\mathcal{D}}(s,a)}=\max\left(0, \frac{\tilde{r}+\Delta r +\gamma\mathcal{T}V^*(s,a)-V^*(s)}{\alpha} + 1\right)
\end{equation}

For RGM with KL-divergence, the upper layer contains an exponential term $\exp (\frac{{\rm Adv}(\delta r, V^*)}{\alpha})$, which may pose numerical instability. For RGM with $\chi^2$ divergence, $f_\star'(x)=\max\left(0,x+1\right)$ and so the gradient vanishes when $x+1<0$, which makes the policy learning slow or even fail. In practice, we follow the criteria from SMODICE~\citep{ma2022versatile} by monitoring the initial policy loss to choose the types of $f$-divergence.

\subsection{RGM hyperparameters and pseudocode}
For continuous MDPs with high dimensional state-action spaces, we implement RGM by parameterizing $h_\tau, \Delta r_\phi$, $V_\theta$ and $\pi_{w}$ using deep neural networks with parameter $\tau, \phi, \theta$ and $w$, respectively. We implement RGM based on a two-time scale first-order stochastic gradient update, where the reward correction term is updated much slower than the Lagrangian multiplier $V$. We choose the cosine annealing learning rate schedule of the reward correction term and policy network to stabilize the training process. To make the reward correction term comparable w.r.t the original imperfect rewards, we normalize the imperfect rewards to standard Gaussian distribution $\mathcal{N}(0, 1)$ and strict the output range of $\Delta r_\phi$ to $[-3, 3]$ by Tanh function. The conclusive hyperparameters can be found in Table~\ref{tab:hyper}.

\begin{table}[h]
    \centering
    \addtolength{\leftskip} {-1cm}
    \footnotesize
    \caption{The hyperparameters of RGM with deep neural networks}
    \begin{tabular}{lll}
    \toprule
         & Hyperparameter & Value\\
    \midrule
        \multirow{12}{*}{Architecture} &Reward correction hidden dim &256\\
        &Reward correction layers &2\\
        &Reward correction activation function &ReLU\\
        &Discriminator hidden dim &512\\
        &Discriminator layers &3\\
        &Discriminator activation function &Tanh\\
        &$V$ hidden dim &256\\
        &$V$ hidden layers &2\\
        &$V$ activation function &ReLU\\
        &Policy hidden dim &256\\
        &Policy hidden layers &2\\
        &Policy activation function &ReLU\\
    \midrule
    \multirow{9}{*}{RGM Hyperparameters} & Optimizer &Adam~\citep{kingma2015adam}\\
    & Reward correction learning rate $l_\phi$ &3e-7\\
    & Reward correction learning rate schedule &cosine annealing\\
    & Discriminator learning rate $l_\tau$ &1e-3\\
    & $V_\theta$ learning rate $l_\theta$ &3e-4\\
    & Policy learning rate $l_w$ &3e-4\\
    & Policy learning rate schedule &cosine annealing\\
    & $V_\theta$ gradient L2-regularization & 1e-4\\
    & Discount factor &0.99\\
    & \multirow{2}{*}{$f$-divergence} & $\chi^2$ for Robomimic tasks \\
    & & KL for other tasks \\
    & \multirow{4}{*}{$\alpha$} & 4 for walker2d-medium-replay\\
    & & 0.5 for other D4RL tasks \\
    & & 0.5 for Antmaze tasks\\
    & & 2 for Lift and Can tasks\\
    & & 0.3 for Quadruped-walk + Quadruped-jump and \\
    & & 3 for the others in multi-task data sharing experiments \\
    \bottomrule
    \end{tabular}
    \label{tab:hyper}
\end{table}

The pseudocode of RGM with deep neural networks can be found in Algorithm~\ref{alg:cap}. We run RGM on one RTX 3080Ti GPU with about 1h30min training time to apply 1M gradient steps.
\begin{algorithm}[h]
    \caption{RGM (KL-divergence) with Deep Neural Networks}
    \label{alg:cap}
\begin{algorithmic}
    \State \textbf{Input:} One Expert demonstration $\mathcal{D}^E$, offline Dataset $\mathcal{D}$, set $\mathcal{D}\leftarrow\mathcal{D}^E\cup\mathcal{D}$. Initialize $\tau,\phi,\theta,w$.
    \State \textcolor{purple}{/ / Discriminator learning}
    \State Train $h_\tau$ using $\mathcal{D}^E$ and $\mathcal{D}$ using Eq.~(\ref{discriminator}).
    \For {\texttt{$t=0,1,2,...,N$}}
        \State {Sample mini-batch transitions $(s, a,\tilde{r}, s')\sim\mathcal{D}$}
        \State \textcolor{purple}{/ / Reward Gap Minimization Bi-level optimization}
        \State Update $V_\theta, \Delta r_\phi$ using Eq.~(\ref{equ:train_V_r_KL_final}) with $l_\phi$ \(<\!\!<\) $l_\theta$
        \State \textcolor{purple}{/ / Policy extraction}
        \State Update $\pi_w$ based on Eq.~(\ref{equ:policy_ext}) and Eq.~(\ref{equ:extract_pi_self_norm})
    \EndFor
\end{algorithmic}
\end{algorithm}

{We report the wall-clock training time of RGM compared with SOTA offline RL methods as well as SOTA offline IL methods that can learn from mixed quality data in Table~\ref{tab:wall_clock}.  RGM is as efficient as most baselines but has an additional ability to combat the negative impacts of imperfect rewards.}
\begin{table}[htb]
    \centering
    \caption{{Wall-clock run time comparison of RGM and other baselines}}
    \begin{tabular}{lllllll}
    \toprule
        BC & DWBC & SMODICE & TD3+BC & CQL &IQL &RGM(ours)\\
    \midrule
        30min &2h40min &2h20min &45min &4h30min &1h30min &2h30min\\
    \bottomrule
    \end{tabular}
    \label{tab:wall_clock}
\end{table}

\section{Experimental details}
In this section, we introduce the detailed experimental setups in our paper. 
\subsection{D4RL experiments}
\textbf{Task Descriptions.} \quad The D4RL~\cite{fu2020d4rl} tasks we try to solve include Hopper, Halfcheetah and Walker2d. For these tasks, RL policies need to control the robots to move in the forward (right) direction by applying torques on the joints.

\begin{figure}[h]
    \centering
    \subfloat[Hopper]{\includegraphics[width=0.28\textwidth]{./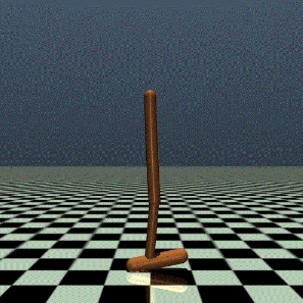}}
    \hfill
    \subfloat[Halfcheetah]{\includegraphics[width=0.275\textwidth]{./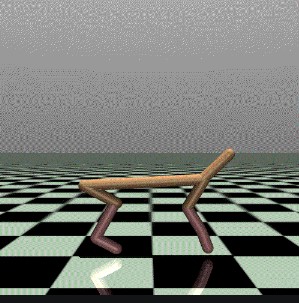}}
    \hfill
    \subfloat[Walker2d]{\includegraphics[width=0.28\textwidth]{./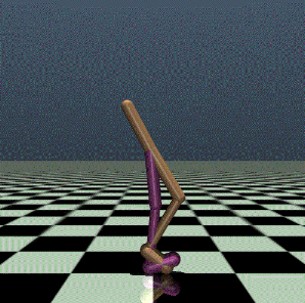}}
    \caption{D4RL MuJoCo tasks}
    \label{fig:D4RL_MuJoCo_tasks}
\end{figure}

\textbf{Dataset composition.} \quad The D4RL~\cite{fu2020d4rl} datasets that we used in this paper contain 5 types of datasets: random: roll out a random policy for 1M steps. expert: roll out an expert policy that trained with SAC~\citep{haarnoja2018soft} for 1M steps. medium: roll out a medium policy that achieves 1/3 the performance of the expert for 1M steps. medium-replay: replay buffer of a SAC agent that is trained to the performance of the medium policy. medium-expert: equally mixed dataset combines medium and expert data. We sample \textbf{only one trajectory from the expert dataset} to serve as the expert demonstration $\mathcal{D}^E$. The other datasets are treated as non-expert datasets $\mathcal{D}$.

\begin{table}[h]
    \centering
    \caption{Dataset compositions for D4RL Experiments}
    \begin{tabular}{ccccc}
    \toprule
    Task & State Dim & Expert Dataset & Number of Trajectories & Expert Data Size\\
    \midrule
    Hopper &11 &hopper-expert-v2 &1 &1000 \\
    Halfcheetah &17 &halfcheetah-expert-v2 &1 &1000\\
    Walker2d &17 &walker2d-expert-v2 &1 &1000\\
    \bottomrule
    \end{tabular}
    \label{tab:d4rl_settings}
\end{table}

\textbf{Imperfect rewards.} \quad We assume the original rewards in D4RL datasets are perfect, since we evaluate the policy performance based on the perfect reward function in the original gym environment during evaluation. We randomly flip the sign of $50\%$ original rewards to construct partially correct rewards, where half rewards can provide correct learning signals while the other half cannot. We flip all signs of the original rewards to construct completely incorrect rewards.

\subsection{Sparse reward experiments}
\label{subsec:sparse_reward_apendix}
\textbf{Task descriptions.} \quad The Robomimic~\cite{mandlekar2021matters} tasks we try to solve include Lift and Can. For the Lift task, RL policy needs to control a 7-DOF robot arm to learn to lift a cube that is randomly located at a table. For the Can task, RL policy needs to control a 7-DOF robot arm to learn to pick a can that is randomly located at a table and place it in a specific location.

\begin{figure}[h]
    \centering
    \subfloat[Lift]{\includegraphics[width=0.28\textwidth]{./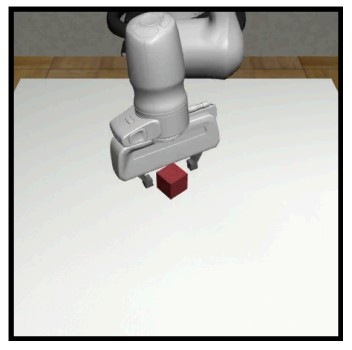}}
    \hspace{+5mm}
    \subfloat[Can]{\includegraphics[width=0.28\textwidth]{./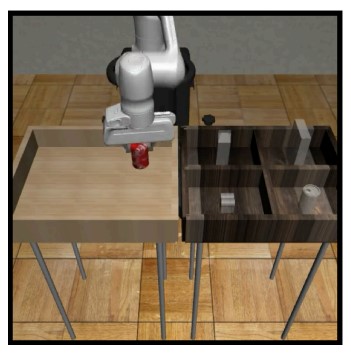}}
    \caption{Robomimic tasks}
    \label{fig:Robomimic_tasks}
\end{figure}

The AntMaze tasks we try to solve include AntMaze medium tasks, where an ant not only needs to learn to walk but also navigates from the goal to the destination in a medium-size maze. This task is extremely difficult due to the non-markovian and mixed-quality offline dataset, the stochastic property of environments, and the high dimensional state-action space~\citep{fu2020d4rl}.

\begin{figure}[h]
    \centering
    \includegraphics[width=0.5\textwidth]{./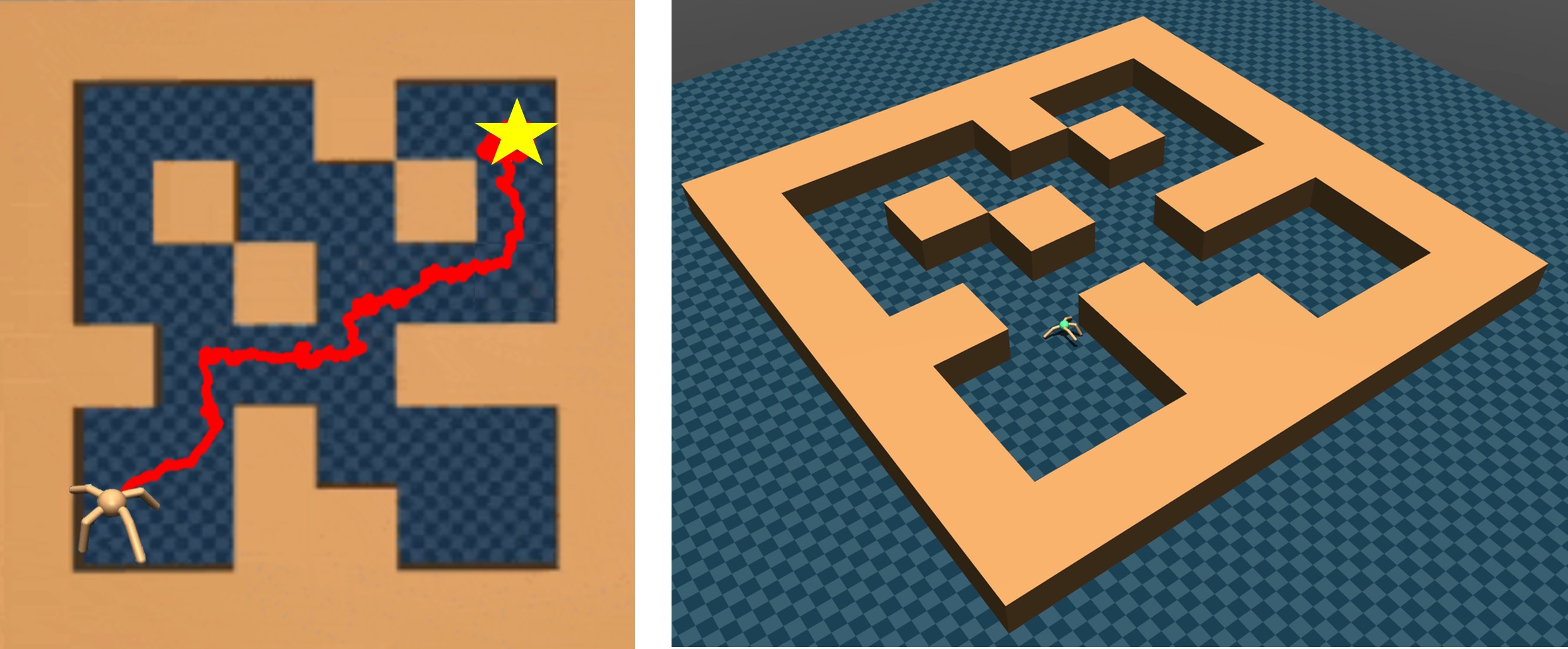}
    \caption{{\small AntMaze medium task.}}
    \label{fig:antmaze_illustration}
\end{figure}

\textbf{Robomimic dataset composition.} \quad The Robomimic~\cite{mandlekar2021matters} datasets that we used in this paper contain 3 types of datasets: PH (Proficient-Human): datasets are collected by a single, experienced human operator. MH (Multi-Human): datasets are collected by 6 human operators of varying proficiency. MG (Machine-Generated): datasets are collected by first training SAC on the Lift and Can task, taking agent checkpoints that are saved regularly during training, and collecting 300 rollout trajectories from each checkpoint. We treat PH dataset as the expert dataset since the environment is stochastic, thus only one expert trajectory is difficult to capture the expert distribution. We use MG datasets as the large potentially suboptimal dataset rather than MH datasets since MH datasets are non-markovian and thus are hard to be solved by modern offline RL methods~\citep{mandlekar2021matters}, which is not the main challenge we try to solve.

\begin{table}[h]
    \centering
    \caption{Dataset compositions for Robomimic Datasets}
    \begin{tabular}{lccccc}
    \toprule
        Task & State Dim & Expert Dataset & Expert Size & Non-expert Dataset & Non expert Size\\
    \midrule
        Lift & 19 & Lift-PH &9666 & Lift-MG & 225K\\
        Can & 23 & Can-PH &23207 & Can-MG & 585K\\
    \bottomrule
    \end{tabular}
    
    \label{tab:my_label}
\end{table}

\textbf{AntMaze dataset composition.} \quad The expert dataset of RGM is composed of 30 successful trajectories (which may be suboptimal) that are collected by training IQL with dense rewards. We set the original D4RL Antmaze-medium-play-v2 and Antmaze-medium-diverse-v2 datasets as non-expert datasets.


\subsection{Multi-task data sharing experiments}
\label{subsec:multi-task}
\textbf{Task descriptions.} \quad The multi-task data sharing experiments contain 2 domains with 4 tasks per domain built on DeepMind Control Suite~\citep{tassa2018deepmind}. The immediate rewards in the 8 tasks are all in the unit interval, $r(s,a) \in [0,1]$. (a) For \textbf{Walker} (\textit{Stand, Walk, Run, Flip}) domain, the agent needs to control a biped in a 2D vertical plane to master four different locomotion skills. The observation space is 24 dimensional, and the action space is 6 dimensional. The episode length is set to 1000. (b) For \textbf{Quadruped} (\textit{Walk, Run, Roll-Fast, Jump}) domain, the agent needs to control a quadruped within a 3D space to master four different moving skills. The observation space is 78 dimensional, and the action space is 12 dimensional. The episode length is set to 1000.

\begin{figure}[h]
    \centering
    \subfloat[\small Walker-stand]{\includegraphics[width=0.21\textwidth]{./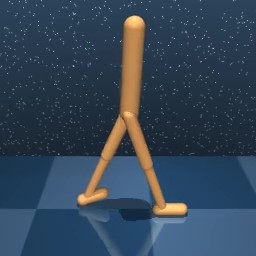}}
    \hspace{+5mm}
    \subfloat[\small Walker-walk]{\includegraphics[width=0.21\textwidth]{./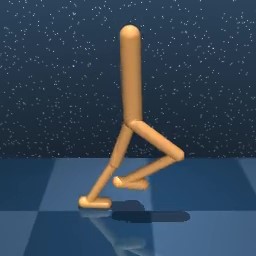}}
    \hspace{+5mm}
    \subfloat[\small Walker-run]{\includegraphics[width=0.21\textwidth]{./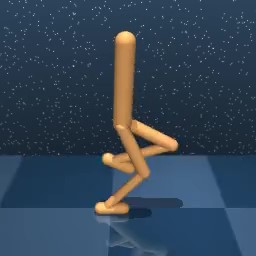}}
    \hspace{+5mm}
    \subfloat[\small Walker-flip]{\includegraphics[width=0.21\textwidth]{./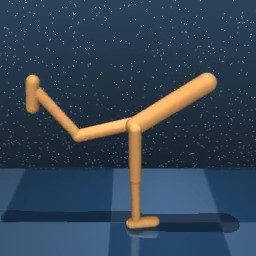}}
    \hspace{+5mm}
    \subfloat[\small Quadruped-walk]{\includegraphics[width=0.21\textwidth, trim=40 40 40 40,clip]{./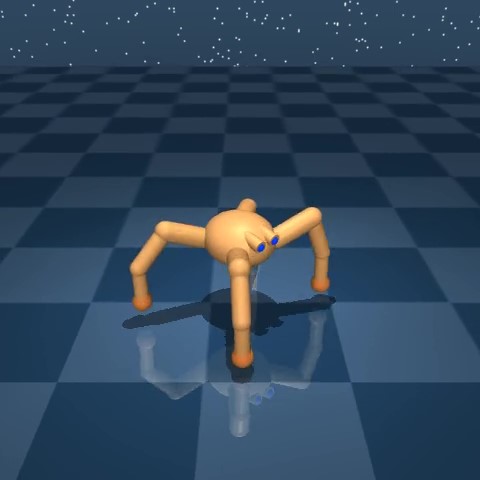}}
    \hspace{+5mm}
    \subfloat[\small Quadruped-run]{\includegraphics[width=0.21\textwidth]{./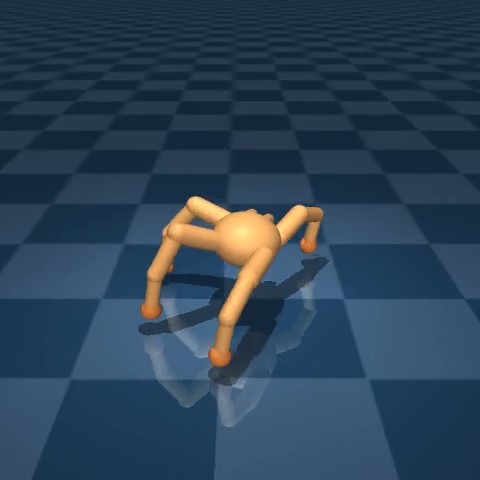}}
    \hspace{+5mm}
    \subfloat[\small Quadruped-jump]{\includegraphics[width=0.21\textwidth]{./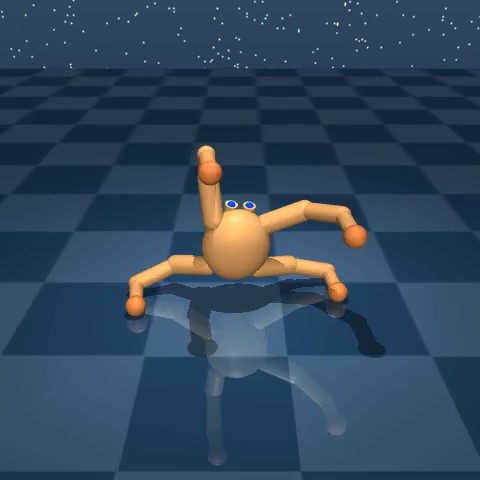}}
    \hspace{+5mm}
    \subfloat[\small Quadruped-roll\_fast]{\includegraphics[width=0.21\textwidth]{./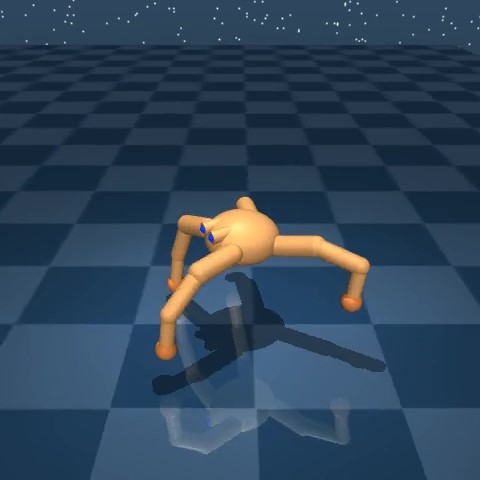}}    
    \caption{Different tasks in Walker and Quadruped domain}
    \label{fig:multi_tasks_description}
\end{figure}

    \textbf{Dataset composition}. \quad We take the same rule of dataset generation and similar task settings as the work~\citep{bai2023uncertaintybased}. For each task, we utilize TD3~\citep{fujimoto2018addressing} to collect three types of datasets (\textit{expert, medium, replay}). The \textit{expert} dataset contains only one expert episode, the \textit{medium} dataset contains 1000 episodes of interactions, and the \textit{replay} dataset contains 2000 episodes of interactions. For \textbf{Walker} (\textit{Stand, Walk, Run, Flip}) domain, the \textit{Stand} task is set to the target task, and the others are relevant tasks. For \textbf{Quadruped} (\textit{Walk, Run, Jump, Roll-Fast}) domain, the \textit{Walk} task is set to the target task, and the others are relevant tasks. We conduct two-task data sharing experiments, in which we share the \textit{replay} dataset of the relevant task with the \textit{medium} dataset of the target task.

\subsection{Grid world experiments}
\begin{figure*}[h]
    \subfloat[Expert demonstration]{\includegraphics[width=0.31\textwidth]{./figure_main/Discrete_world/8_8/expert.jpg}}\hfill
    \subfloat[Empirical distribution of $\mathcal{D}^{O}$$\tilde{r}$]{\includegraphics[width=0.31\textwidth]{./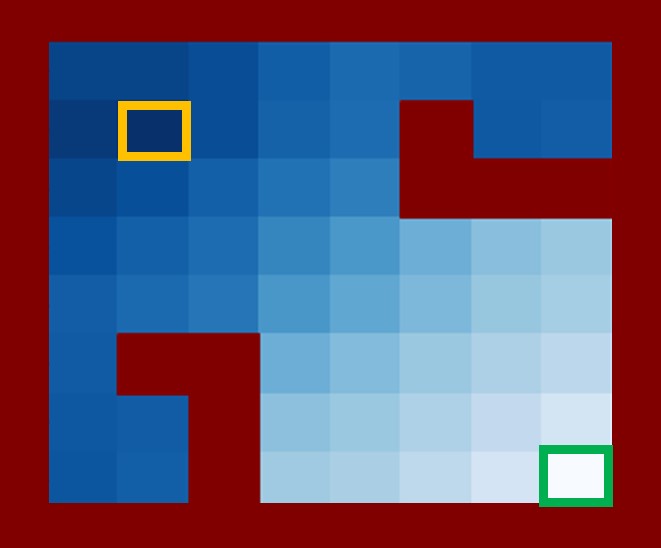}}\hfill
    \subfloat[Imperfect rewards $\tilde{r}$]{\includegraphics[width=0.31\textwidth]{./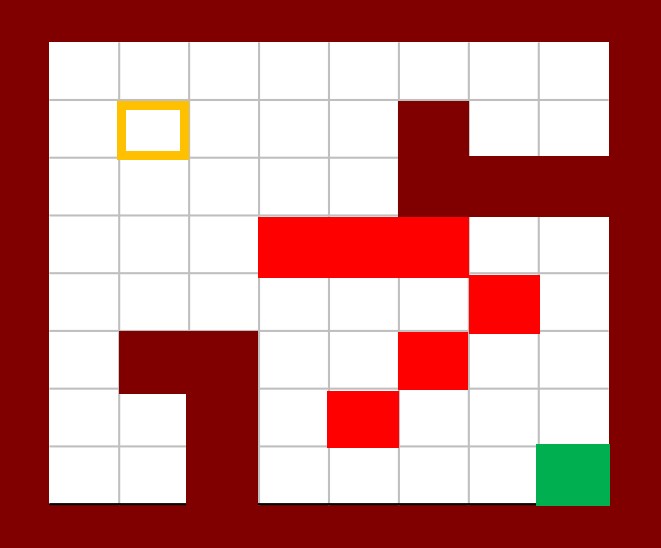}}\hfill
    \caption{\small (a) The only one expert demonstration path, which starts from~\protect{\includegraphics[height=.3cm]{./figure_main/Discrete_world/start.png}}, follows the path~\protect{\includegraphics[height=.3cm]{./figure_main/Discrete_world/expert_path.png}} and arrow~\protect{\includegraphics[height=.3cm]{./figure_main/Discrete_world/arrow.png}} to reach the goal~\protect{\includegraphics[height=.3cm]{./figure_main/Discrete_world/goal.png}}. (b) The empirical distribution heatmap of offline dataset $\mathcal{D}^{{O}}$, which consists of trajectories generated by random policy starting from~\protect{\includegraphics[height=.3cm]{./figure_main/Discrete_world/start.png}}. The darker the color is, the more frequently the agent passes. (c) Illustration of imperfect rewards. Agent gets $\tilde{r}=-10$ when reaching~\protect{\includegraphics[height=.3cm]{./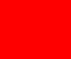}}, $\tilde{r}=+10$ when reaching~\protect{\includegraphics[height=.3cm]{./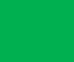}}, $\tilde{r}=0$ everywhere else.
    }
    \label{fig:data composition}
\end{figure*}

\label{subsec:grid world}
\textbf{Dataset composition}. \quad
The offline dataset $\mathcal{D}$ we use in grid world experiments consists of 1000 trajectories generated by a completely random policy (Figure \ref{fig:data composition} (b)). There are two settings of imperfect rewards $\tilde{r}$: (i) $\tilde{r}=+10$ when reaching the goal while $\tilde{r}=0$ anywhere else. (ii) (Figure \ref{fig:data composition} (c)) $\tilde{r}=+10$ when reaching the goal, $\tilde{r}=-10$ when encountering the fire (true fire or fake fire), $\tilde{r}=0$ everywhere else. The expert demonstration dataset $\mathcal{D}^E$ consists of \textbf{only one expert demonstration} (Figure \ref{fig:data composition} (a)).

\section{Additional Results}
\label{sec:addtional_results}
In this section, we provide additional comparative and ablation results of RGM against baseline methods.

\subsection{Additional comparison to offline IL}

Recall that DWBC~\citep{xu2022discriminator} and SMODICE~\citep{ma2022versatile} all assume the offline dataset already covers a lot of expert trajectories, which is more restrictive compared to the requirement of RGM. Therefore, we further demonstrate the superiority of RGM compared to these offline IL methods by evaluating RGM under the same settings of DWBC and SMODICE. We combine the original D4RL dataset with 200 or 100 expert trajectories as the offline dataset $\mathcal{D}$, see Table~\ref{tab:expert_data} for descriptions of the expert trajectories. The comparisons under these dataset configurations can be found in Table~\ref{tab:il_detail}. We can observe from Table~\ref{tab:il_detail} that RGM still outperforms existing SOTA offline IL methods under their settings.

\begin{table}[h]
    \caption{\small The details about the expert data that are used to construct the non-expert dataset in offline IL settings.}
    \centering
    \begin{tabular}{ccccc}
    \toprule
        Task &State Dim &Expert Dataset & Number of Trajectories &Expert Data Size \\
    \midrule
        Hopper & 11 & hopper-expert-v2 &200 &193430\\
        Halfcheetah & 17 & halfcheetah-expert-v2 &200 &199800\\
        Walker2d & 17 & walker2d-expert-v2 &100 &99900\\
    \bottomrule
    \end{tabular}
    \label{tab:expert_data}
\end{table}

\begin{table}[h]
\caption{Average normalized scores of RGM compared with SOTA offline IL methods that can learn from mixed quality data under their settings. The notation "-w.e" stands for the mixed dataset that combines the original D4RL dataset with some expert trajectories. The scores are taken over the final 10 evaluations with 5 random seeds. We obtain the results via ruining author-provided open-source codes. RGM achieves 7 highest scores in 12 tasks.}
\small
\label{tab:il_detail}
\begin{center}
\begin{tabular}{llllll}
\toprule
Dataset     &BC &DWBC    &SMODICE  &RGM (Ours)\\
\midrule
hopper-r-w.e        &2.8 &59.5{\color[HTML]{525252} $\pm$30.3} &108.7{\color[HTML]{525252} $\pm$5.4} &\colorbox{mine}{110.4}{\color[HTML]{525252} $\pm$1.2}\\
halfcheetah-r-w.e   &0.2 &3.3{\color[HTML]{525252} $\pm$1.7} &\colorbox{mine}{89.3}{\color[HTML]{525252} $\pm$1.5} &57.6{\color[HTML]{525252} $\pm$6.4}\\     
walker2d-r-w.e      &1.2 &81.8{\color[HTML]{525252} $\pm$0.6} &102.0{\color[HTML]{525252} $\pm$9.9} &\colorbox{mine}{109.2}{\color[HTML]{525252} $\pm$0.2} \\   
hopper-m-w.e        &54.9 &39.0{\color[HTML]{525252} $\pm$22.5} &54.5{\color[HTML]{525252} $\pm$4.3} &\colorbox{mine}{66.1}{\color[HTML]{525252} $\pm$9.7} \\  
halfcheetah-m-w.e   &41.2 &8.5{\color[HTML]{525252} $\pm$9.2} &\colorbox{mine}{55.4}{\color[HTML]{525252} $\pm$10.9} &50.5{\color[HTML]{525252} $\pm$7.9}\\   
walker2d-m-w.e      &62.1 &56.1{\color[HTML]{525252} $\pm$39.2} &6.5{\color[HTML]{525252} $\pm$9.7} &\colorbox{mine}{79.2}{\color[HTML]{525252} $\pm$12.4}\\     
hopper-m-r-w.e      &23.4 &23.1{\color[HTML]{525252} $\pm$17.1} &53.0{\color[HTML]{525252} $\pm$27.5} &\colorbox{mine}{58.6}{\color[HTML]{525252} $\pm$27.0}\\
halfcheetah-m-r-w.e &24.2 &1.1{\color[HTML]{525252} $\pm$1.2} &\colorbox{mine}{84.9}{\color[HTML]{525252} $\pm$7.2} &65.4{\color[HTML]{525252} $\pm$15.1}\\
walker2d-m-r-w.e    &21.8 &\colorbox{mine}{85.5}{\color[HTML]{525252} $\pm$33.6} &8.9{\color[HTML]{525252} $\pm$12.3} &71.7{\color[HTML]{525252} $\pm$32.6}\\
hopper-m-e-w.e      &51.2 &40.0{\color[HTML]{525252} $\pm$22.5} &71.4{\color[HTML]{525252} $\pm$15.5} &\colorbox{mine}{89.1}{\color[HTML]{525252} $\pm$13.5}\\ 
halfcheetah-m-e-w.e &61.7 &1.2{\color[HTML]{525252} $\pm$0.5} &\colorbox{mine}{87.2}{\color[HTML]{525252} $\pm$1.9} &76.8{\color[HTML]{525252} $\pm$8.8}\\ 
walker2d-m-e-w.e    &103.2 &76.8{\color[HTML]{525252} $\pm$31.7} &14.1{\color[HTML]{525252} $\pm$2.0} &\colorbox{mine}{105.8}{\color[HTML]{525252} $\pm$8.6}\\ 
\midrule
Mean Score     &37.8 &39.7{\color[HTML]{525252} $\pm$17.5} &61.3{\color[HTML]{525252} $\pm$9.0} &\colorbox{mine}{78.3}{\color[HTML]{525252} $\pm$12.0}\\  
\bottomrule
\\
\end{tabular}
\end{center}
\end{table}

{\subsection{Experiments on sampling from discounted distributions}}
We also implemented the discounted visitation distribution sampling in RGM. This is done by augmenting the D4RL datasets that adds the timestep of each $(s,a)$ pair in an episode. 
When performing sampling in Eq.(\ref{discriminator}-\ref{equ:policy_ext}) and calculating the gradient, we sample $(s,a,t)$ in the D4RL datasets and then multiply the gradient by $\gamma^t$. 
Empirically, we found that the performance of the discounted visitation distribution version is not better than the sampling distribution version of RGM. Figure~\ref{fig:discounted_distribution} and Table~\ref{tab:discounted} show that RGM (sampling distribution) surpasses RGM (discounted visitation distribution) in most cases with lower variance, while the latter wins by a slight margin in only a few cases.

\begin{figure}[h]
    \centering
    \includegraphics[width=0.24\textwidth]{./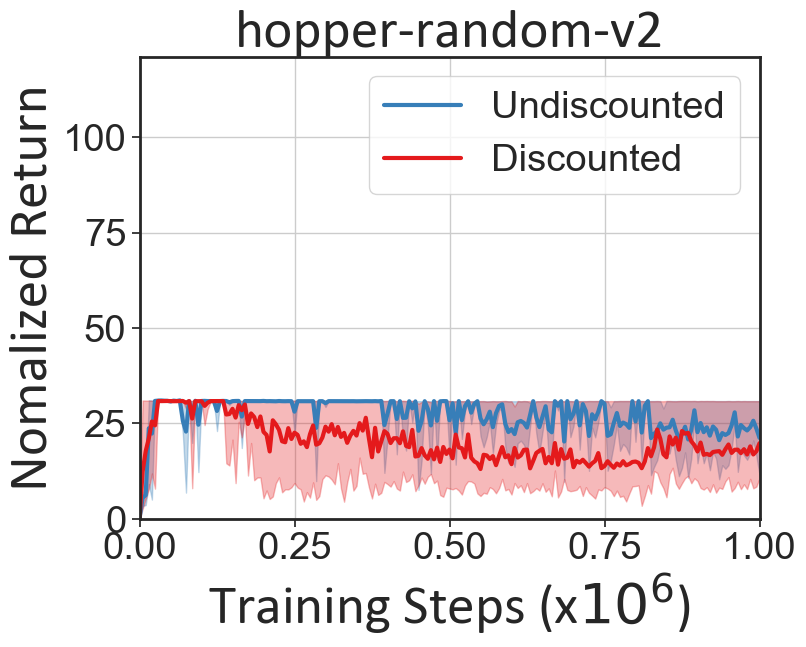}
    \includegraphics[width=0.24\textwidth]{./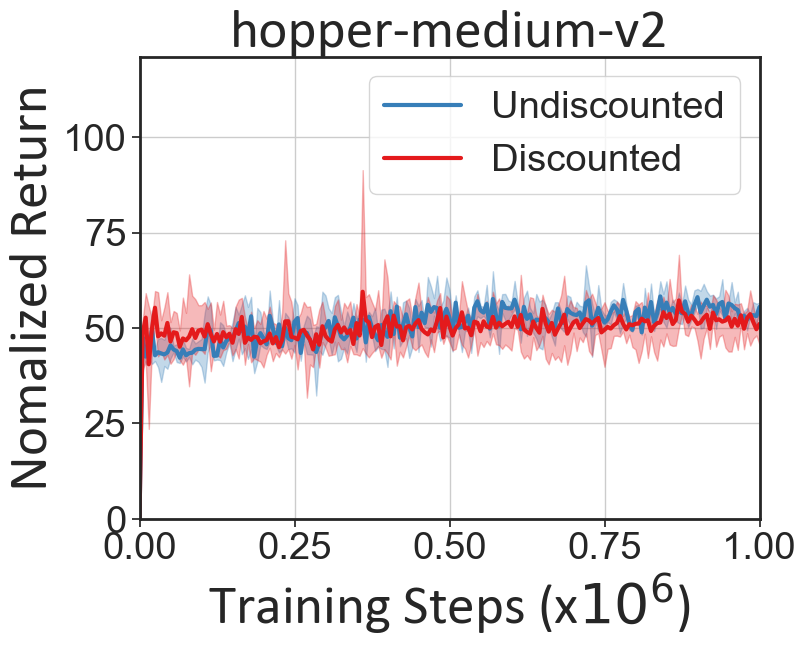}
    \includegraphics[width=0.24\textwidth]{./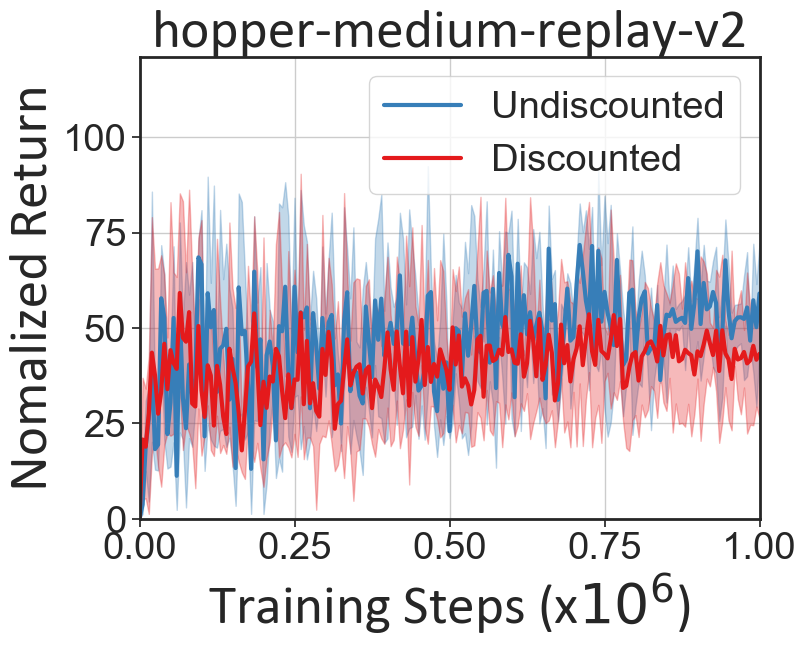}
    \includegraphics[width=0.24\textwidth]{./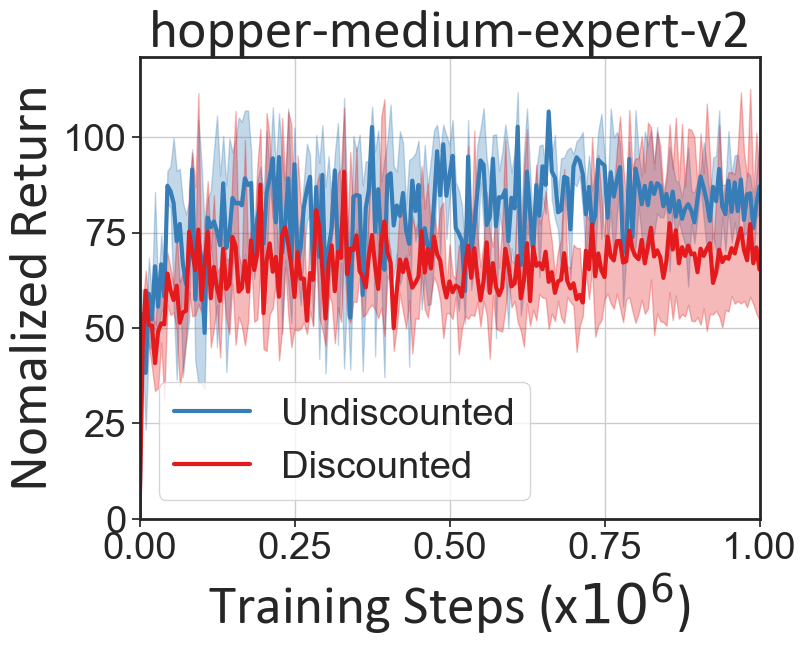}
    \includegraphics[width=0.24\textwidth]{./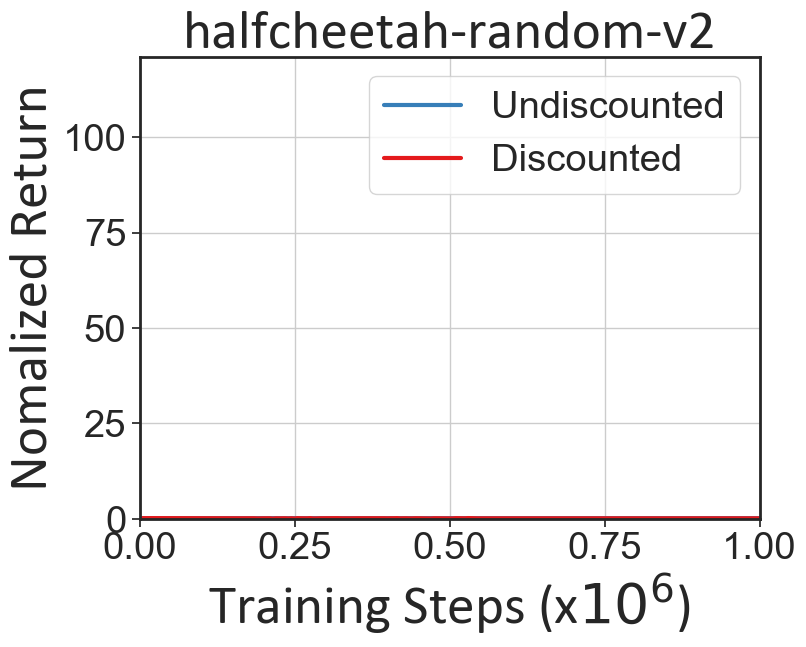}
    \includegraphics[width=0.24\textwidth]{./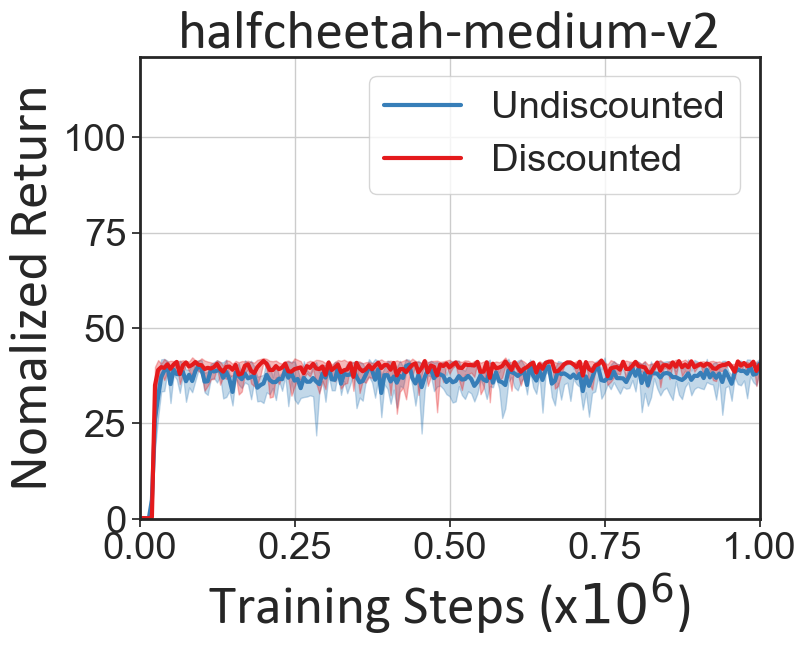}
    \includegraphics[width=0.24\textwidth]{./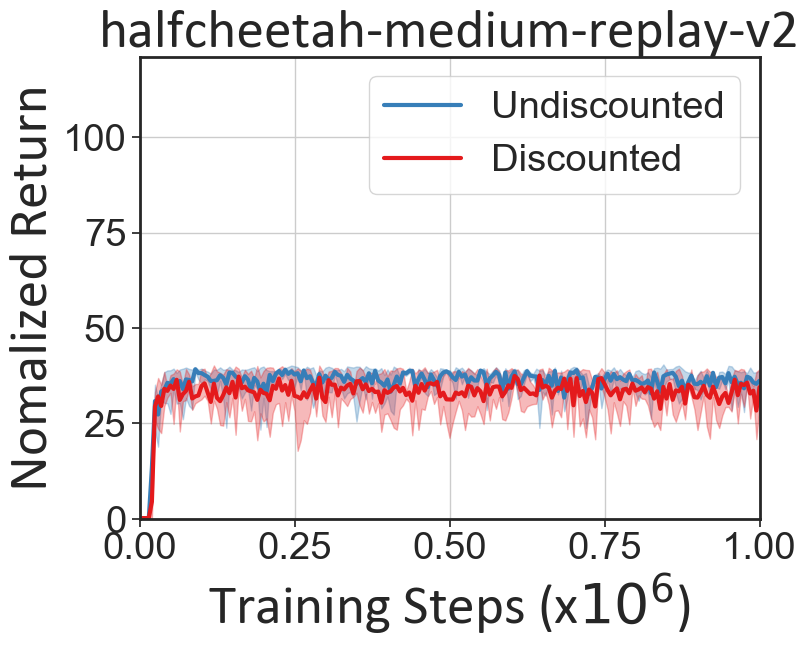}
    \includegraphics[width=0.24\textwidth]{./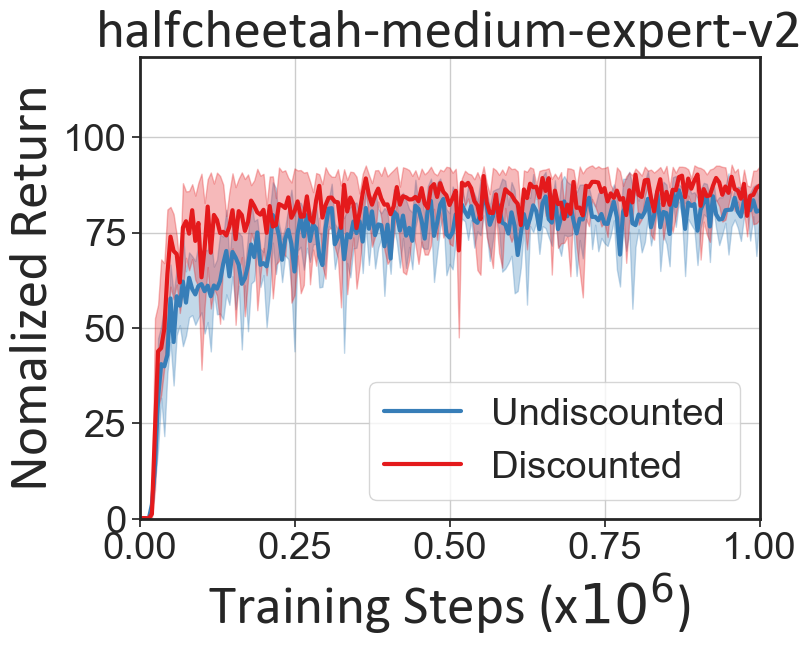}
    \includegraphics[width=0.24\textwidth]{./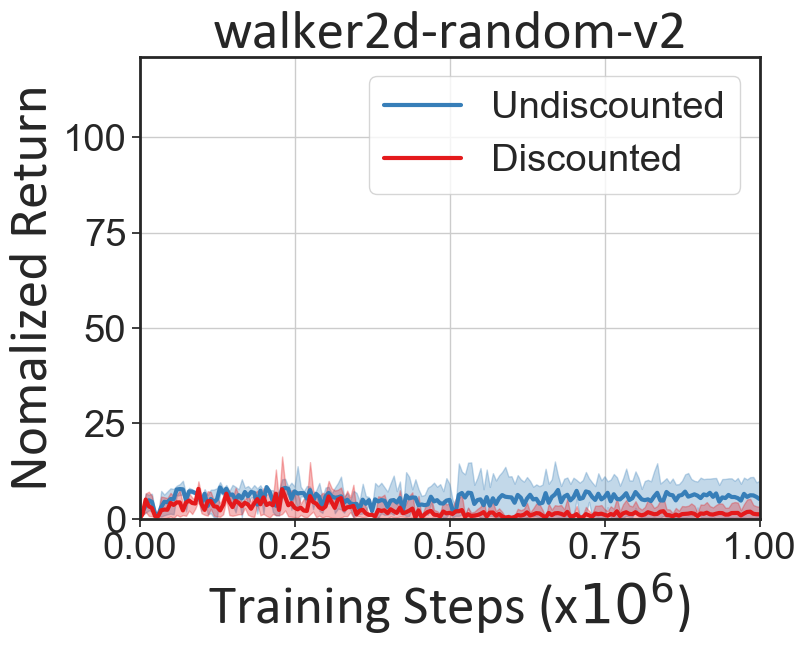}
    \includegraphics[width=0.24\textwidth]{./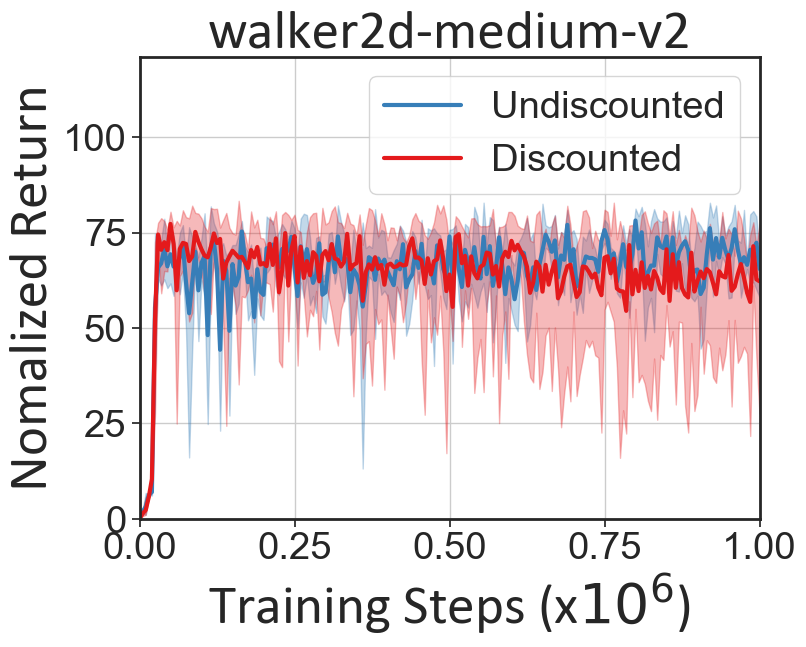}
    \includegraphics[width=0.24\textwidth]{./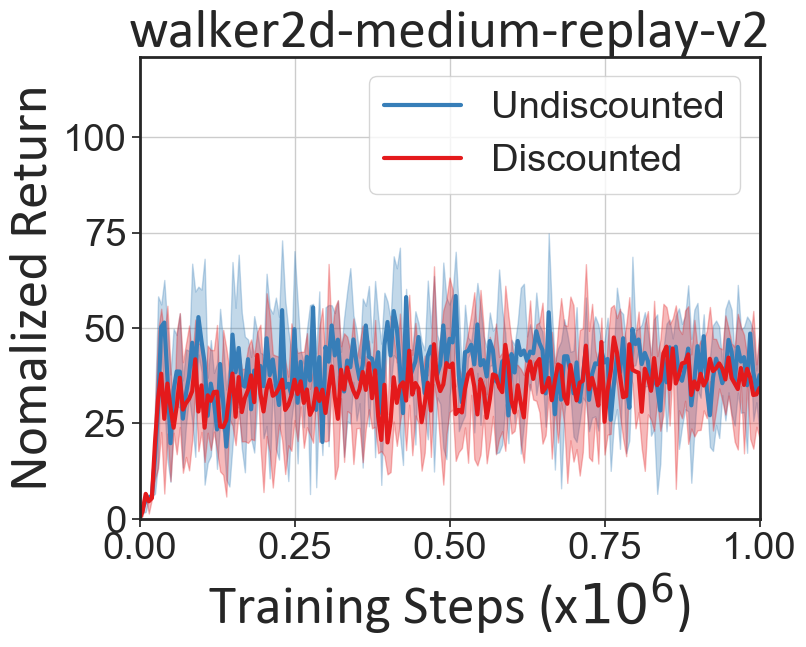}
    \includegraphics[width=0.24\textwidth]{./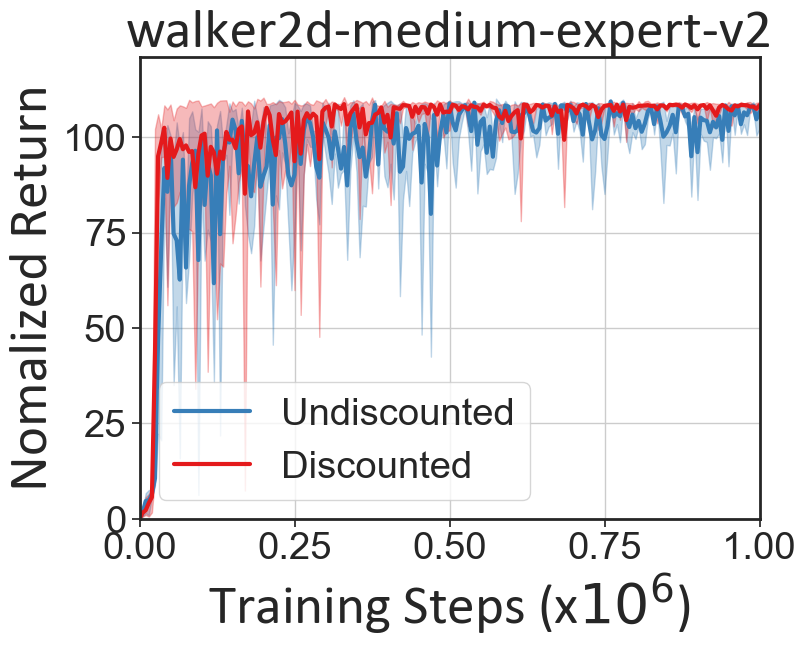}
    \caption{Experiments on sampling from discounted and undiscounted distributions}
    \label{fig:discounted_distribution}
\end{figure}

\begin{table}[h]
    \centering
    \small
    \caption{Normalized scores of RGM sampling from discounted distribution and undiscounted distribution}
    \begin{tabular}{lll}
    \toprule
      Dataset & RGM (Discounted) & RGM (Undiscounted) \\
    \midrule
      hopper-r & 19.8±0.2 & \colorbox{mine}{21.2}±0.4 \\
      halfcheetah-r & \colorbox{mine}{0.2}±0.0 & \colorbox{mine}{0.2}±0.0 \\
      walker2d-r & 1.2±1.7 & \colorbox{mine}{7.7}±3.3 \\
      hopper-m & 51.1±4.9 & \colorbox{mine}{55.5}±1.0 \\
      halfcheetah-m &40.3±1.6 &\colorbox{mine}{40.7}±1.4\\
      walker2d-m &62.2±22.5 &\colorbox{mine}{72.3}±10.7\\
      hopper-m-r &43.3±11.6 &\colorbox{mine}{59.1}±15.3\\
      halfcheetah-m-r &34.5±4.5 &\colorbox{mine}{37.8}±2.6\\
      walker2d-m-r &34.3±11.0 &\colorbox{mine}{48.6}±3.6\\
      hopper-m-e &65.3±19.5 &\colorbox{mine}{87.1}±10.7\\
      halfcheetah-m-e &\colorbox{mine}{87.3}±7.8 &81.5±0.8\\
      walker2d-m-e &108.4±0.6 &\colorbox{mine}{108.8}±0.4\\
    \midrule
      Mean score & 45.7±7.2 &\colorbox{mine}{52.0}±4.2\\
    \bottomrule
    \end{tabular}
    \label{tab:discounted}
\end{table}

\subsection{Experiments on noisy partially correct rewards}
We add i.i.d Gaussian noises with different standard deviation $\sigma$ to original D4RL rewards to construct noisy imperfect rewards with different degrees of imperfection. We set $\sigma=1$ to construct partially correct rewards and $\sigma=10$ as largely incorrect rewards, see Table~\ref{tab:iid_gaussian} for detailed results.

Table~\ref{tab:iid_gaussian} shows that RGM under perfect rewards slightly outperforms RGM with partially correct rewards, indicating that RGM can largely remedy the negative impacts caused by reward noises with $\sigma=1$. Meanwhile, the highly noisy rewards ($\sigma=10$) surely impact the performance, but its mean score is 45.0, which is still considerably higher than other Offline RL and IL methods under partially correct rewards with the largest mean value of 35.5 as shown in Table~\ref{tab:rl}.

\begin{table}[h]
    \small
    \centering
    \caption{\small Normalized scores of RGM on different degrees of noisy datasets.}
    \begin{tabular}{lccc}
    \toprule
     Dataset & RGM(T) & RGM ($\sigma=1$) & RGM ($\sigma=10$) \\
    \midrule
     hopper-r & \colorbox{mine}{29.6} & 8.5 & 9.8 \\
     halfcheetah-r & 0.2 & \colorbox{mine}{0.3} &0.2 \\
     walker2d-r & \colorbox{mine}{3.9} & 0.6 &-0.1 \\
     hopper-m & \colorbox{mine}{56.2} & 52.0 & 47.9 \\
     halfcheetah-m & 40.4 &\colorbox{mine}{41.2} & 38.4 \\
     walker2d-m &\colorbox{mine}{73.3} &71.9 &72\\
     hopper-m-r &\colorbox{mine}{60.3} &58.0 &40.0\\
     halfcheetah-m-r &37.9 &\colorbox{mine}{38.3} &28.1\\
     walker2d-m-r &\colorbox{mine}{46.3} &42.5 &43.8\\
     hopper-m-e &\colorbox{mine}{106.1} &82.0 &82.8\\
     halfcheetah-m-e &85.6 &\colorbox{mine}{88.7} &69.1\\
     walker2d-m-e &\colorbox{mine}{109.2} &108.2 &108.5\\
    \midrule
     Mean score & \colorbox{mine}{54.1} &49.4 &45.0 \\
    \bottomrule
    \end{tabular}
    \label{tab:iid_gaussian}
\end{table}

\subsection{Ablations on the number of expert trajectories}

We add the ablations on the number of expert trajectories in $\mathcal{D}^E$ ($N^E$) for RGM, SMODICE and DWBC. Table~\ref{tab:NE=10}, \ref{tab:NE=40} and \ref{tab:NE=80} show that RGM achieves better performance than offline IL methods designed for mixed-quality data (DWBC and SMODICE). It is found that RGM also enjoys a higher level of performance gains when the amount of expert data is increased.
\begin{table}[h]
    \small
    \centering
    \caption{\small Normalized scores of RGM and offline IL baselines when $\mathcal{D}^E$ contains 10 expert trajectories.}
    \begin{tabular}{lccc}
    \toprule
     Dataset & DWBC ($N^E=10$) & SMODICE ($N^E=10$) & RGM ($N^E=10$)\\
    \midrule
     hopper-r &\colorbox{mine}{52.5}&1.3&30.8\\
     halfcheetah-r &-0.3&\colorbox{mine}{2.1}&0.2\\
     walker2d-r &\colorbox{mine}{96.2}&0.3&6.1\\
     hopper-m &31.1&53.8&\colorbox{mine}{54.5} \\
     halfcheetah-m &5.0&40.9&\colorbox{mine}{41.4} \\
     walker2d-m &22.4&3.3&\colorbox{mine}{72.9}\\
     hopper-m-r &37.4&33.2&\colorbox{mine}{55.5}\\
     halfcheetah-m-r &3.9&\colorbox{mine}{36.7}&34.9\\
     walker2d-m-r &\colorbox{mine}{90.7}&34.7&43.1\\
     hopper-m-e &31.2&85.0&\colorbox{mine}{89.2}\\
     halfcheetah-m-e &10.9&\colorbox{mine}{86.6}&79.4\\
     walker2d-m-e &46.3&14.1&\colorbox{mine}{109.0}\\
    \midrule
     Mean score &35.6&32.7&\colorbox{mine}{51.4}\\
    \bottomrule
    \end{tabular}
    \label{tab:NE=10}
\end{table}

\begin{table}[h]
    \small
    \centering
    \caption{\small Normalized scores of RGM and offline IL baselines when $\mathcal{D}^E$ contains 40 expert trajectories.}
    \begin{tabular}{lccc}
    \toprule
     Dataset & DWBC ($N^E=40$) & SMODICE ($N^E=40$) & RGM ($N^E=40$)\\
    \midrule
     hopper-r &54.6 &\colorbox{mine}{67.2} &36.9\\
     halfcheetah-r &8.8 &14.8 &\colorbox{mine}{18.7}\\
     walker2d-r &78.7 &\colorbox{mine}{92.9} &-0.1\\
     hopper-m &13.5 &54.2 &\colorbox{mine}{57.0}\\
     halfcheetah-m &5.6 &\colorbox{mine}{44.5} &40.9\\
     walker2d-m &16.9 &3.5 &\colorbox{mine}{74.3}\\
     hopper-m-r &\colorbox{mine}{54.3} &47.2 &\colorbox{mine}{54.3}\\
     halfcheetah-m-r &46.1 &\colorbox{mine}{54.6} &46.1\\
     walker2d-m-r &\colorbox{mine}{87.8} &35.7 &61.2\\
     hopper-m-e &34.5 &75.9 &\colorbox{mine}{92.4}\\
     halfcheetah-m-e &3.4 &\colorbox{mine}{85.1} &84.7\\
     walker2d-m-e &57.2 &17.1 &\colorbox{mine}{108.6}\\
    \midrule
     Mean score &38.5 &44.9 &\colorbox{mine}{56.5}\\
    \bottomrule
    \end{tabular}
    \label{tab:NE=40}
\end{table}

\begin{table}[h]
    \small
    \centering
    \caption{\small Normalized scores of RGM and offline IL baselines when $\mathcal{D}^E$ contains 80 expert trajectories.}
    \begin{tabular}{lccc}
    \toprule
     Dataset & DWBC ($N^E=80$) & SMODICE ($N^E=80$) & RGM ($N^E=80$)\\
    \midrule
     hopper-r &65.1 &\colorbox{mine}{92.7} &47.3\\
     halfcheetah-r &2.3 &\colorbox{mine}{48.1} &40.4\\
     walker2d-r &86.1 &98.8 &\colorbox{mine}{109.1}\\
     hopper-m &8.8 &53.4 &\colorbox{mine}{59.7}\\
     halfcheetah-m &6.7 &\colorbox{mine}{51.2} &42.5\\
     walker2d-m &36.5 &2.9 &\colorbox{mine}{73.4}\\
     hopper-m-r &35.3 &46.6 &\colorbox{mine}{66.2}\\
     halfcheetah-m-r &36.1 &\colorbox{mine}{59.6} &54.0\\
     walker2d-m-r &\colorbox{mine}{85.8} &32.8 &64.9\\
     hopper-m-e &8.8 &83.7 &\colorbox{mine}{97.1}\\
     halfcheetah-m-e &12.1 &\colorbox{mine}{87.4} &83.9\\
     walker2d-m-e &64.5 &43.7 &\colorbox{mine}{108.6}\\
    \midrule
     Mean score &37.5 &58.4 &\colorbox{mine}{70.6}\\
    \bottomrule
    \end{tabular}
    \label{tab:NE=80}
\end{table}

\subsection{Experiments on multi-task data sharing}
\label{sub:multi-task results}

We present concrete results of the multi-task data sharing experiment. Table \ref{tab:multi_task} shows the evaluated scores on multi-task data sharing, which are illustrated in Fig. \ref{fig:multi_task}.

\begin{table}[t]
    \centering
    \small
    \caption{Evaluated scores on multi-task data sharing.}
    \begin{tabular}{lcccc}
    \toprule
        Domain &Dataset &CDS &CDS+UDS &RGM \\
    \midrule
        Walker &stand-medium + walk-replay & 486.1±7.2 & 415.3±44.8 & \colorbox{mine}{753.3±107.6} \\
        Walker &stand-medium + run-replay & 455.8±21.9 & 440.0±8.4 & \colorbox{mine}{620.0±31.0} \\
        Walker &stand-medium + flip-replay & 492.4±19.0 & 371.2±123.0 &\colorbox{mine}{745.6±100.8} \\
        Quadruped &walk-medium + run-replay & 527.6±213.0  & 155.7±53.0 &\colorbox{mine}{900.0±48.6} \\
        Quadruped &walk-medium + roll\_fast-replay & 476.2±45.1  & 439.1±90.7 &\colorbox{mine}{493.2±37.3} \\
        Quadruped &walk-medium + jump-replay & \colorbox{mine}{533.5±168.5}  & 521.9±236.9 &490.9±119.4 \\
    \midrule
     \multicolumn{2}{c}{Mean score} &495.3±79.1 &390.5±92.8 &\colorbox{mine}{667.0±74.1}\\
    \bottomrule
    \end{tabular}
    \label{tab:multi_task}
\end{table}

\subsection{Additional learning curves of RGM}
We present the learning curves of RGM compared with offline IL and RL baselines on D4RL datasets related to the results presented in Table~\ref{tab:rl}.

\begin{figure}[h]
    \centering
    \includegraphics[width=0.24\textwidth]{./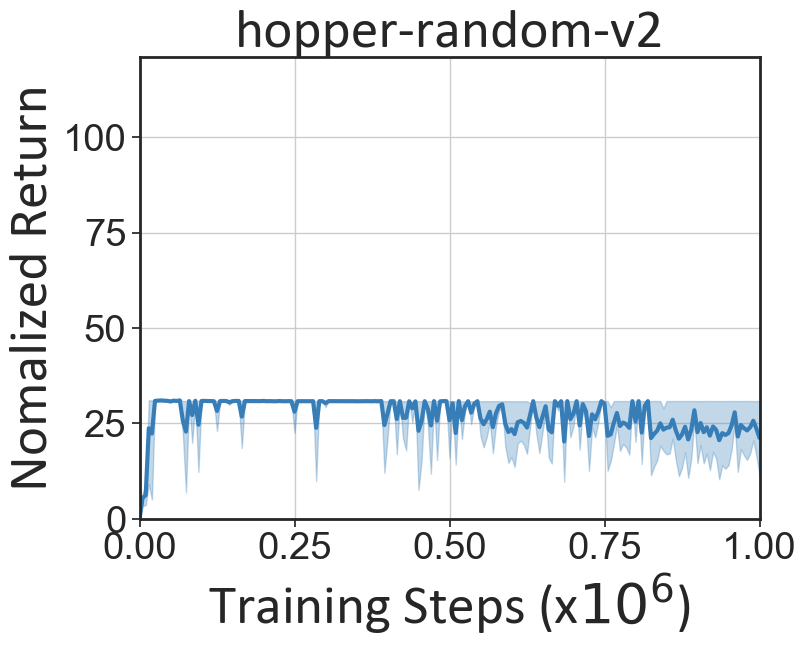}
    \includegraphics[width=0.24\textwidth]{./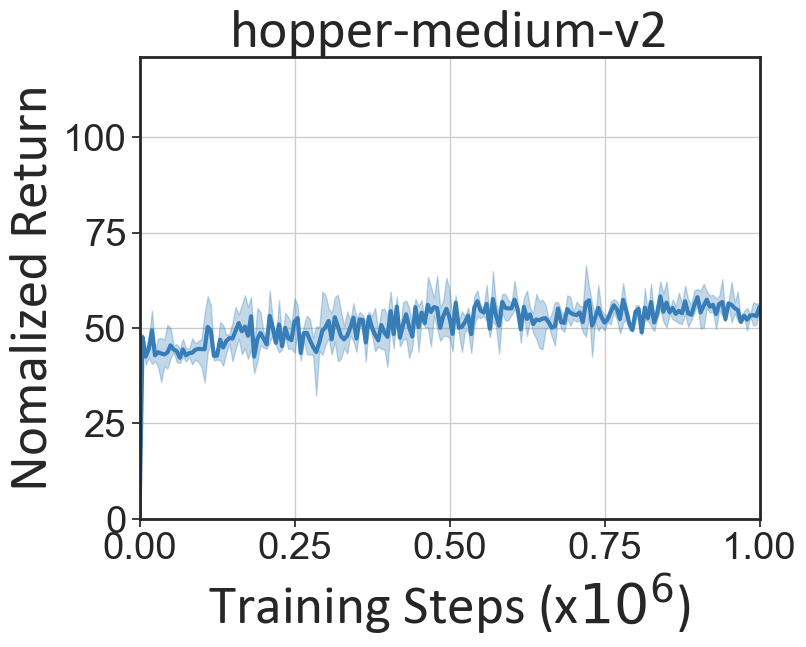}
    \includegraphics[width=0.24\textwidth]{./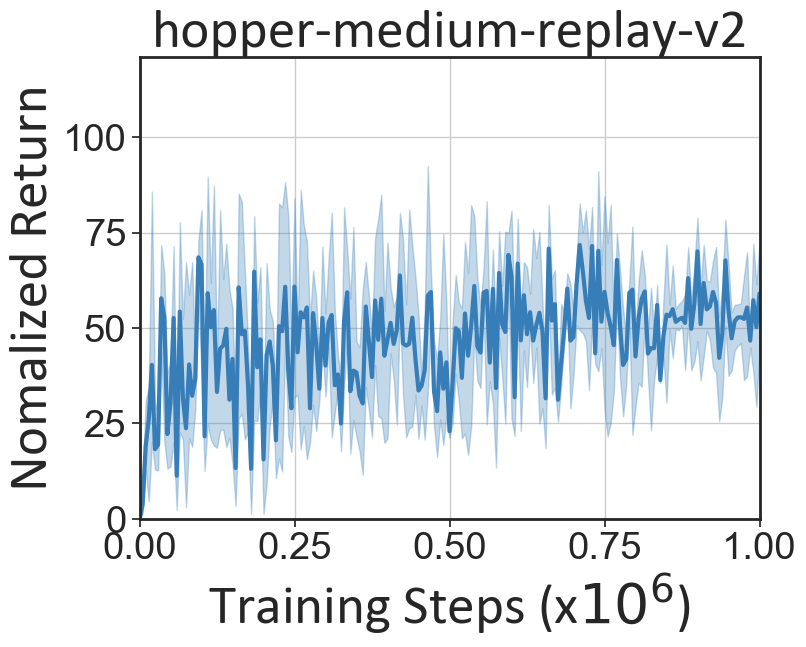}
    \includegraphics[width=0.24\textwidth]{./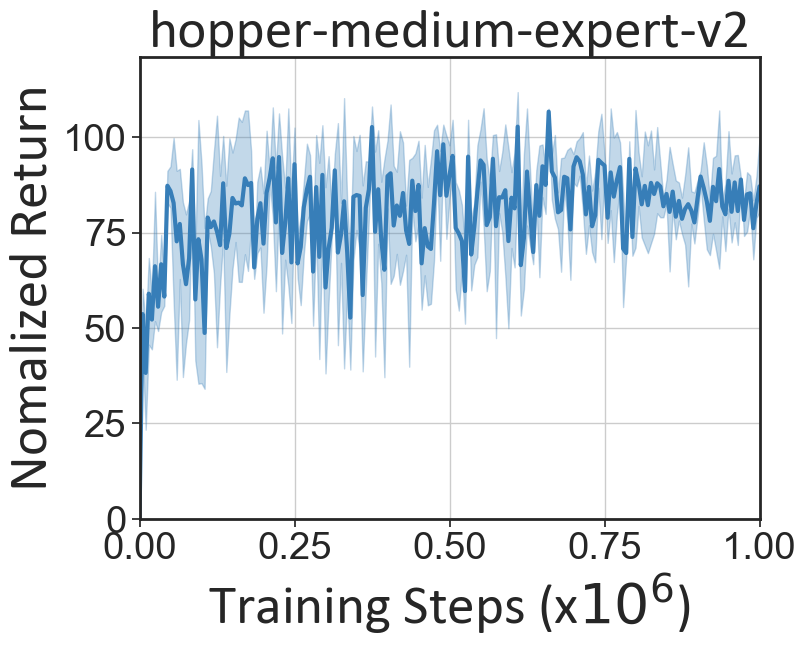}
    \includegraphics[width=0.24\textwidth]{./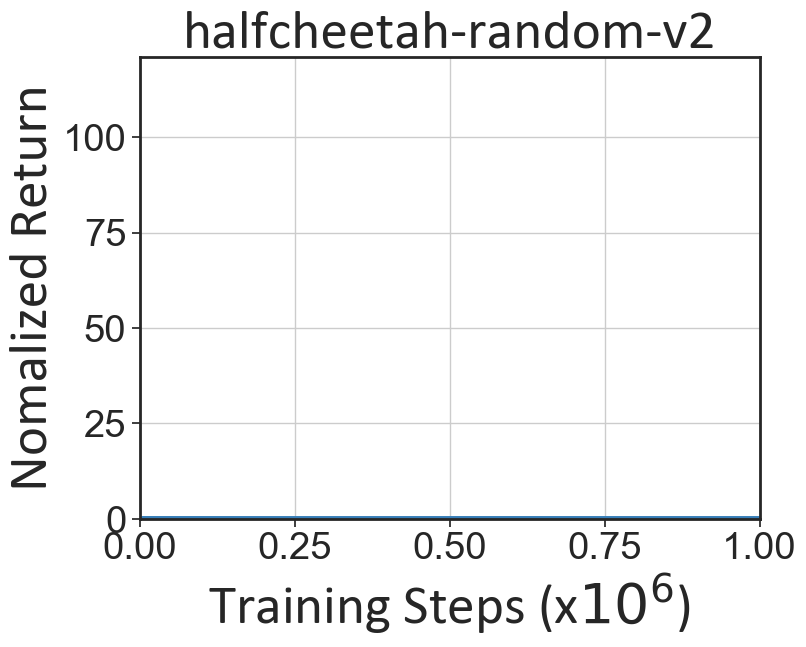}
    \includegraphics[width=0.24\textwidth]{./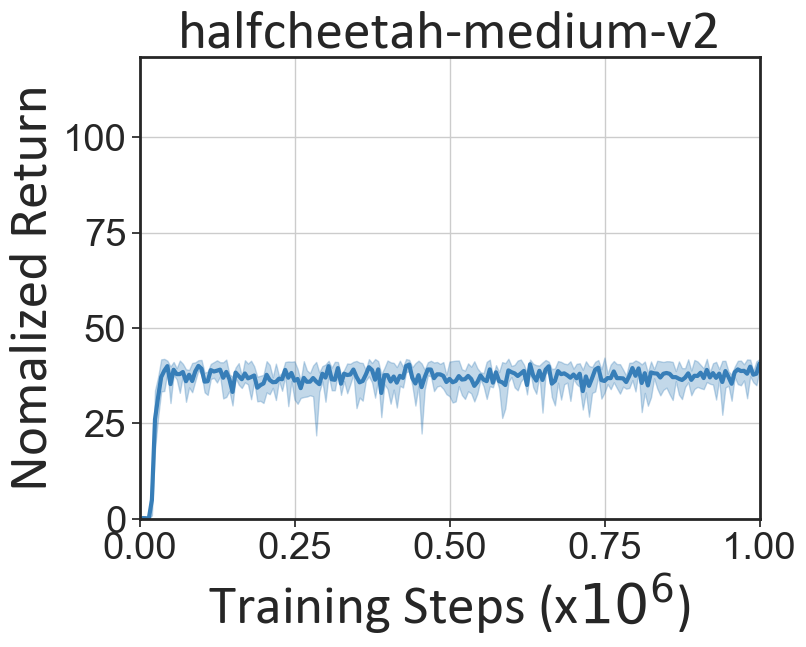}
    \includegraphics[width=0.24\textwidth]{./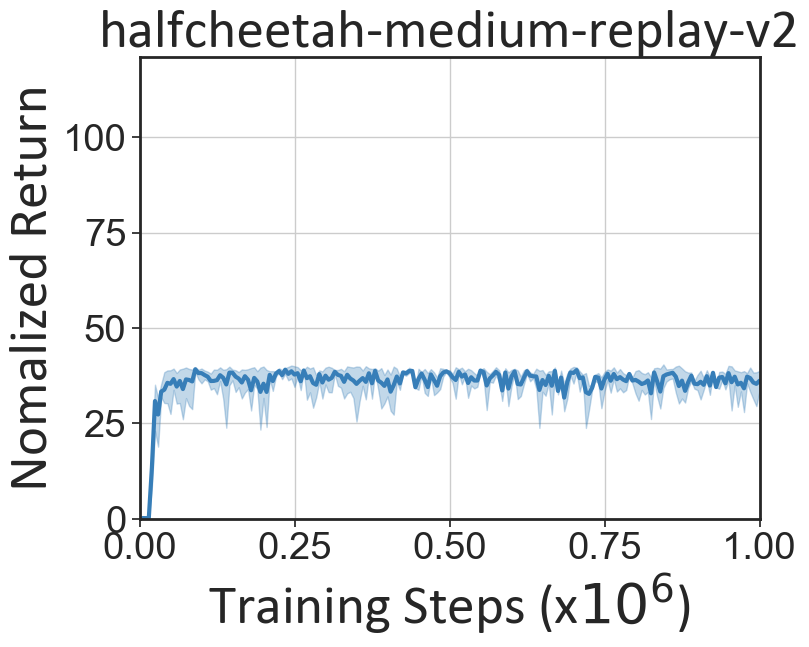}
    \includegraphics[width=0.24\textwidth]{./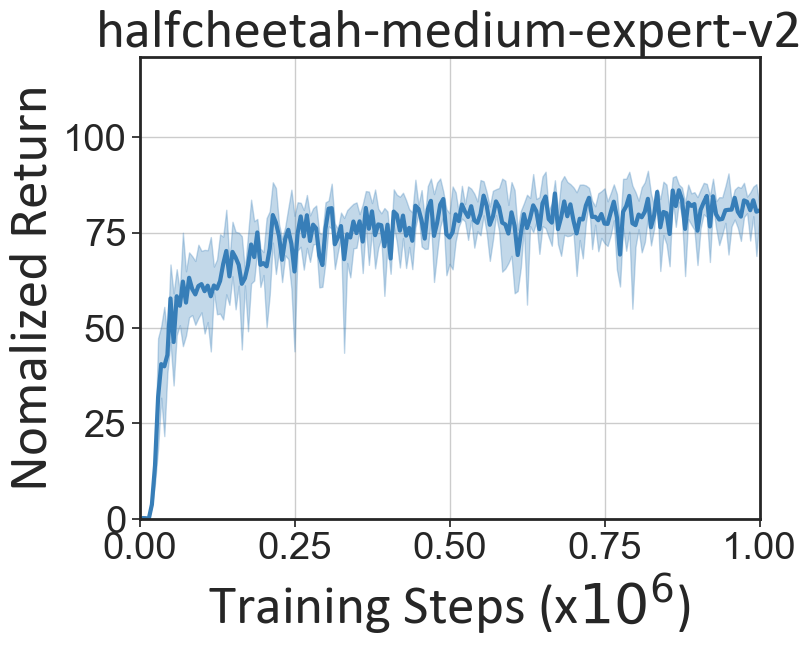}
    \includegraphics[width=0.24\textwidth]{./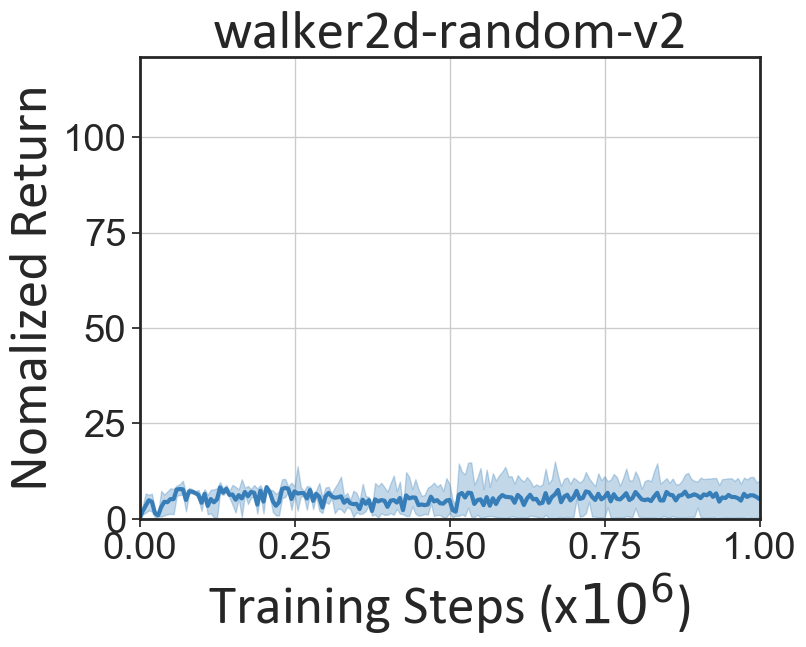}
    \includegraphics[width=0.24\textwidth]{./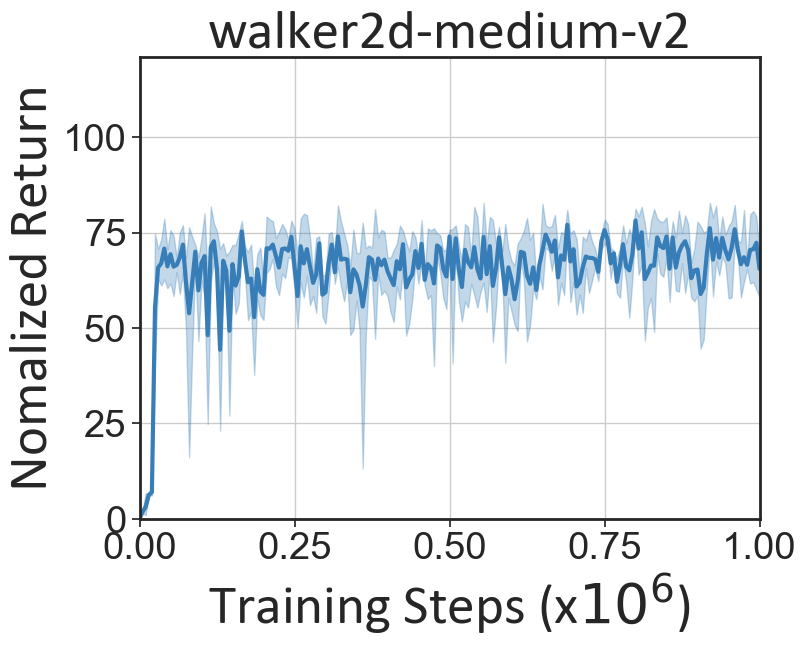}
    \includegraphics[width=0.24\textwidth]{./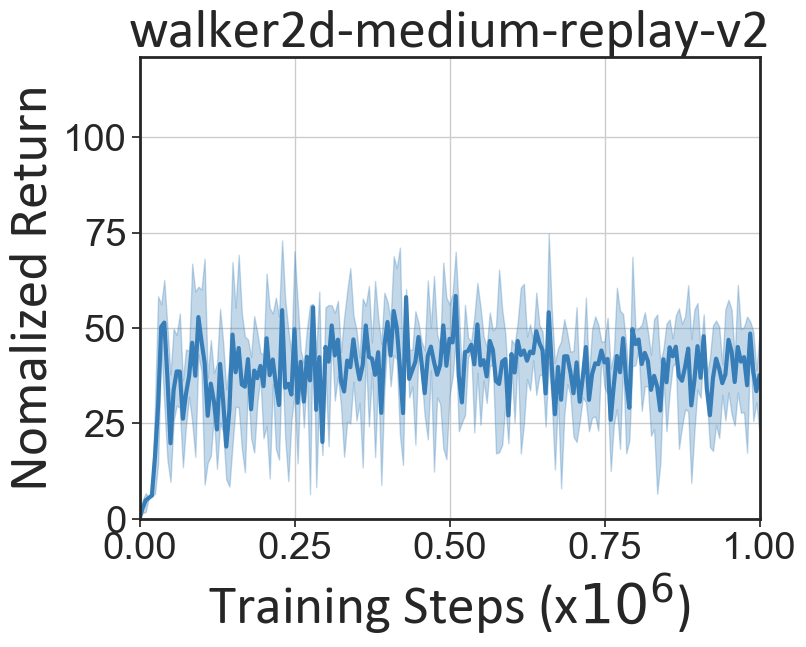}
    \includegraphics[width=0.24\textwidth]{./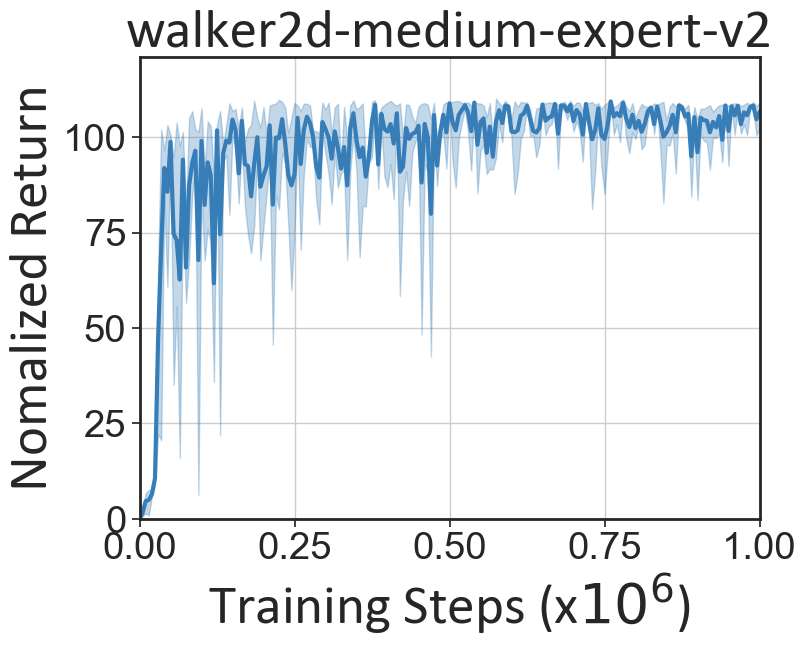}
    \caption{Learning curves of RGM trained on D4RL datasets under imperfect rewards.}
    \label{fig:learning_curve}
\end{figure}

\subsection{Illustrative example for the non-tabular scenarios}
The results of the 8×8 grid world experiments in Section \ref{subsec:properties_of_r} and Appendix \ref{subsec:grid world} illustrate the potential benefits of the learned rewards in the tabular case. In this subsection, we consider a one-dimensional random walk task in the non-tabular case and provide the visualization of the learned corrected rewards $\hat{r}$. In this task, the state space is a straight line from [0, +3] and the agent can move at each step in the range of [-0.5, 0.5]. If the agent goes beyond the edge ($s<0$ or $s>+3$), then we keep it at the edge ($s=0$ or $s=+3$). The agent needs to start from state $s=0$ and reach the destination located at $s=3$ as fast as possible. The expert dataset $\mathcal{D}^E$ consists of one trajectory where the expert takes action $a=0.5$ at every state. The offline dataset $\mathcal{D}$ consists of 1000 trajectories generated by a completely random policy where the agent takes action uniformly from [-0.5, 0.5] at every state. The sparse rewards $\tilde{r}=+10$ is set when reaching the destination while $\tilde{r}=0$ anywhere else. The visualization of learned rewards $\hat{r}$ at each state-action pair is shown in Figure \ref{fig:non-tebular}.

\begin{figure*}
\hfill
    \subfloat[Empirical distribution of $\mathcal{D}$]{\includegraphics[width=0.48\textwidth]{./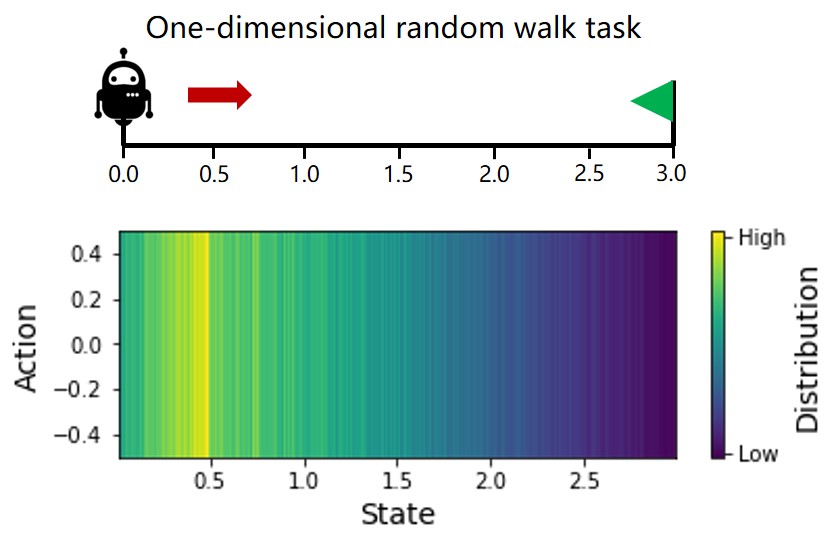}}
\hfill
    \subfloat[Visualization of learned rewards $\hat{r}$]{\includegraphics[width=0.48\textwidth]{./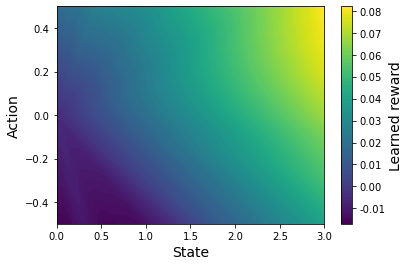}}
\hfill
    \caption{(a) The empirical distribution of offline dataset $\mathcal{D}$ in a continuous one-dimensional random walk task. Most states in the offline dataset are distributed near the starting point. 
    (b) At each state (at each vertical line), the learned reward $\hat{r}$ gets a larger value when the action gets closer to 0.5. The expert data has contain 7 states ($s=0.0,0.5,1.0,1.5,2.0,2.5,3.0$), but the learned rewards can still generalize well in the state space even in regions that are not covered by the expert data. Similar to the 8×8 grid world experiment, we can successfully navigate to the destination by only maximizing per-step reward $\hat{r}$, which means that the learned rewards also encode long-horizon information.}
    \label{fig:non-tebular}
\end{figure*}

\section{Discussion on the Applicability to Online Settings}
It should be noted that the proposed RGM framework can also be applied to the online setting. This can be achieved by simply setting $\alpha=0$ in Eq. (\ref{upper}-\ref{lower}), and we have the bi-level objective of the online version of RGM:
\begin{equation}
\begin{aligned}
  \Delta r^* &= \arg \min_{\Delta r} \, \,  \text{D}_f \left(d^{\pi^*_{\hat{r}}} \| d^E\right)\\
  \text { s.t.}\quad  \pi^*_{\hat{r}} &= \,\, \underset{\pi}{\arg \max} \,\, \mathbb{E}_{(s, a) \sim d^{\pi_{\hat{r}}}}[\hat{r}(s,a)]
  \end{aligned}
\end{equation}
Since we could get online samples from $d^{\pi^*_{\hat{r}}}$ in the online setting, so we don't have to eliminate $d^{\pi^*_{\hat{r}}}$. One can use the existing popular online RL algorithms to solve the lower-level problem, while leveraging the online samples from  $d^{\pi^*_{\hat{r}}}$ to solve the upper-level problem. Hence the online version of RGM can be perceived as a reduced and simplified version of the original RGM.
The core idea of the reward correction has not been changed in the online setting, which illustrates that to some extent, our proposed RGM is a unified policy optimization method for imperfect rewards.

\end{document}